\def\eqref#1{equation~\ref{#1}}
\def\1{\bm{1}}
\def\vone{{\bm{1}}}
\def\vx{{\bm{x}}}
\def\vy{{\bm{y}}}
\DeclareMathAlphabet{\mathsfit}{\encodingdefault}{\sfdefault}{m}{sl}
\SetMathAlphabet{\mathsfit}{bold}{\encodingdefault}{\sfdefault}{bx}{n}
\def\sI{{\mathbb{I}}}
\def\sR{{\mathbb{R}}}
\newcommand{\E}{\mathbb{E}}
\newcommand{\R}{\mathbb{R}}
\newcommand{\KL}{D_{\mathrm{KL}}}
\theoremstyle{plain}
\newtheorem{theorem}{Theorem}
\newtheorem{proposition}{Proposition}
\newtheorem{corollary}{Corollary}
\newtheorem{lemma}{Lemma}
\newtheorem{definition}{Definition}
\newtheorem{assumption}{Assumption}
\newtheorem{example}{Example}
\theoremstyle{definition}
\newtheorem{remark}{Remark}
\theoremstyle{plain}
\newtheorem{apptheorem}{Theorem}
\newtheorem{appproposition}{Proposition}
\newtheorem{applemma}{Lemma}
\newtheorem{appexample}{Example}
\crefname{theorem}{theorem}{theorems}
\crefname{definition}{definition}{definitions}
\crefname{lemma}{lemma}{lemmas}
\crefname{proposition}{proposition}{propositions}
\crefname{algorithm}{algorithm}{algorithms}
\crefname{assumption}{assumption}{assumptions}
\def\cN{{\mathcal{N}}}
\def\cP{{\mathcal{P}}}
\def\cX{{\mathcal{X}}}
\newcommand{\ind}{\mathbbm{1}}
\newcommand{\norm}[2]{\lVert #2 \rVert_{#1}}
\newcommand{\bignorm}[2]{\big\lVert #2 \big\rVert_{#1}}
\newcommand{\Bignorm}[2]{\Big\lVert #2 \Big\rVert_{#1}}
\newcommand{\eg}{e.g., }
\newcommand{\ie}{i.e., }
\newcommand{\vxS}{\vx_S}
\newcommand{\xS}{x_S}
\newcommand{\vxbarS}{\vx_{\bar{S}}}
\newcommand{\xbarS}{x_{\bar{S}}}
\newcommand{\iid}{\overset{\text{i.i.d.}}{\sim}}
\title{On the Robustness of Removal-Based Feature Attributions}
\author{%
  Chris Lin\thanks{Equal contribution. \dag Work done while at the University of Washington.} \\
  University of Washington \\
  \texttt{clin25@cs.washington.edu}
  \And
  Ian Covert$^{*\dag}$ \\
  Stanford University \\
  \texttt{icovert@stanford.edu}
  \And
  Su-In Lee \\
  University of Washington \\
  \texttt{suinlee@cs.washington.edu}
    % Chris Lin\thanks{Equal contribution.}
    % \hspace{3 em}
    % Ian Covert$^*$
    % \hspace{3 em}
    % Su-In Lee \\
    % Paul G. Allen School of Computer Science \& Engineering \\
    % University of Washington \\
    % \texttt{\{clin25,icovert,suinlee\}@cs.washington.edu}
}
\begin{document}
\maketitle

% Abstract
\begin{abstract}
    % To explain complex models based on their inputs,
    To explain predictions made by complex machine learning models,
    many feature attribution methods have been developed that assign importance scores to input features.
    % However,
    Some
    recent work challenges the robustness of these methods
    % feature attributions
    by showing that they
    % these methods
    are sensitive
    % susceptible
    to input and model perturbations, while
    % some
    other work addresses this
    % robustness
    issue by proposing robust
    % attributions
    attribution methods.
    % and model modifications.
    % Nevertheless,
    However, previous
    % theoretical
    work on attribution robustness has focused primarily on gradient-based feature attributions, whereas
    % . In contrast,
    the robustness
    % properties
    of removal-based attribution methods is
    % are
    not currently
    % comprehensively
    well understood. To bridge this gap, we theoretically characterize the robustness properties of removal-based feature attributions. Specifically, we
    provide a unified analysis of such methods and
    derive upper bounds for the difference between intact and perturbed
    % removal-based
    attributions, under settings of both input and model perturbations.
    % In the case of input perturbation, we find that attribution robustness is largely dependent on
    % % connected to
    % the robustness of the model being explained.
    % % to explain.
    Our empirical results on synthetic and real-world data validate our theoretical results and demonstrate their practical implications, including the ability to increase attribution robustness by improving the model's Lipschitz regularity.
    % and demonstrate practical implications of our theoretical results.
\end{abstract}

% Introduction
\section{Introduction}
In recent years, machine learning has shown great promise in a variety of real-world applications. An obstacle to its widespread deployment is the lack of transparency, an issue that has prompted a wave of research on interpretable or explainable machine learning~\cite{simonyan2013deep, zeiler2014visualizing, ribeiro2016should, sundararajan2017axiomatic, lundberg2017unified, petsiuk2018rise}. One popular way of explaining a machine learning model is feature attribution, which assigns importance scores to input features of the model~\cite{simonyan2013deep, ribeiro2016should, sundararajan2017axiomatic, lundberg2017unified, petsiuk2018rise}. Many feature attribution methods can be categorized as either \textit{gradient-based} methods that compute gradients of model predictions with respect to input features~\cite{ancona2017towards}, or \textit{removal-based} methods that remove features to quantify each feature's influence~\cite{covert2021explaining}. While substantial progress has been made and current tools are widely used for model debugging and scientific discovery~\cite{bhatt2020explainable, degrave2021ai, janizek2021uncovering, novakovsky2022obtaining}, some important concerns remain. Among them is \textit{unclear robustness properties}: feature attributions are vulnerable to adversarial attacks, and even without an adversary appear unstable under small changes to the input or model.

As a motivating example, consider the work of \citet{ghorbani2019interpretation}, which shows that minor perturbations to an image can lead to substantially different feature attributions while preserving the original prediction. Similar to adversarial examples designed to alter model predictions~\cite{szegedy2013intriguing, goodfellow2014explaining, kurakin2018adversarial}, this phenomenon violates the intuition that explanations should be invariant to imperceptible changes. A natural question, and the focus of this work, is therefore: \textit{When can we guarantee that feature attributions are robust to small changes in the input or small changes in the model?} In answering this question, we find that previous theoretical work on this topic has primarily focused on gradient-based methods. Hence, in this work we provide a unified analysis to characterize the robustness of removal-based feature attributions, under both input and model perturbations, and considering a range of existing methods in combination with different approaches to implement feature removal.

\textbf{Related work.} Recent work has demonstrated that gradient-based attributions for neural networks are susceptible to small input perturbations that 
% produce
can lead to markedly different
% perturbed attributions different from the original
attributions~\cite{dombrowski2019explanations, ghorbani2019interpretation, kindermans2019reliability}. Theoretical analyses have established that this
% the robustness
issue relates to model smoothness, and approaches like stochastic smoothing of gradients, weight decay regularization, smoothing activation functions, and Hessian minimization have been shown to generate more robust gradient-based attributions~\cite{dombrowski2019explanations, dombrowski2022towards, wang2020smoothed, agarwal2021towards}. As for the robustness of removal-based attributions under input perturbations,
% \citet{agarwal2021towards} prove robustness properties of C-LIME by showing its equivalence to SmoothGrad~\cite{smilkov2017smoothgrad}.
\citet{alvarez2018robustness} empirically assess the robustness of LIME, SHAP, and Occlusion with the notion of Lipschitz continuity in local neighborhoods. \citet{khan2022analyzing} provide theoretical guarantees on the robustness of SHAP, RISE, and leave-one-out attributions using Lipschitz continuity, in the specific setting where
held-out features are replaced with zeros. \citet{agarwal2022probing} study the robustness of discrete attributions
% specifically
for graph neural networks and establish Lipschitz upper bounds for the robustness of GraphLIME and GraphMASK. Our analysis of input perturbations is most similar to \citet{khan2022analyzing}, but we generalize this work by considering alternative techniques for both feature removal and importance summarization (e.g., LIME's weighted least squares approach~\cite{ribeiro2016should}).

Other works have considered model perturbations and manipulated neural networks to produce targeted, arbitrary gradient-based attributions~\cite{anders2020fairwashing, heo2019fooling}. \citet{anders2020fairwashing} explain this phenomenon through the insight that neural network gradients are underdetermined with many degrees of freedom to exploit, and they demonstrate better robustness when gradient-based attributions are projected onto the tangent space of the data manifold. As for removal-based attributions, it has been demonstrated that models can be modified to hide their reliance on sensitive features in LIME and SHAP~\cite{slack2020fooling, dimanov2020you}; however, \citet{frye2021shapley} show that such hidden features can be discovered if features are removed using their conditional distribution, suggesting a key role for the feature removal approach in determining sensitivity to such attacks. Our work formalizes these results through the lens of robustness under model perturbations, and provides new theoretical guarantees.

\textbf{Contribution.} The main contribution of this work is to develop a comprehensive characterization of the robustness properties of removal-based feature attributions, focusing on two notions of robustness that have received attention in the literature: stability under input changes and stability under model changes. Our specific contributions are the following: (1)~We provide theoretical guarantees for the robustness of model predictions with the removal of arbitrary feature subsets, under both input and model perturbations. (2)~We analyze the robustness of techniques for summarizing each feature's influence (e.g., Shapley values) and derive best- and worst-case robustness within several classes of such techniques. (3)~We combine the analyses above to prove unified attribution robustness properties to both input and model perturbations, where changes in removal-based attributions are controlled by the scale of perturbations in either input space or function space. (4)~We validate our theoretical results and demonstrate practical implications using synthetic and real-world datasets.
\section{Background} \label{sec:background}
Here, we introduce the notation used in the paper and review removal-based feature attributions.

\subsection{Notation}

Let $f: \R^d \mapsto \R$ be a model whose predictions we seek to explain. We consider that the input variable $\vx$ consists of $d$ separate features, or $\vx = (\vx_1, \ldots, \vx_d)$. Given an index set $S \subseteq [d] \equiv \{1, \ldots, d\}$, we define the corresponding feature subset as $\vxS \equiv \{\vx_i : i \in S\}$. The power set of $[d]$ is denoted by $\cP(d)$, and we let $\bar S \equiv [d] \setminus S$ denote the set complement. We consider the model's output space to be one-dimensional, which is not restrictive because attributions are typically calculated for scalar predictions (\eg the probability for a given class). We use the bold symbols $\vx,\vxS$ to denote random variables, the symbols $x, \xS$ to denote specific values, and $p(\vx)$ to represent the data distribution with support on $\cX \subseteq \R^d$. We assume that all explanations are generated for inputs such that $x \in \cX$.

\subsection{Removal-based feature attributions} \label{sec:removal}

Our work focuses on a class of methods known as \textit{removal-based explanations}~\cite{covert2021explaining}. Intuitively, these are algorithms that remove subsets of inputs and summarize how each feature affects the model. This framework describes a large number of methods that are distinguished by two main implementation choices: (1)~how feature information is removed from the model, and (2)~how the algorithm summarizes each feature's influence.\footnote{The framework also considers the choice to explain different \textit{model behaviors} (\eg the loss calculated over one or more examples)~\cite{covert2021explaining}, but we focus on methods that explain individual predictions.} This perspective shows a close connection between methods like leave-one-out~\cite{zeiler2014visualizing}, LIME~\cite{ribeiro2016should} and Shapley values~\cite{lundberg2017unified}, and enables a unified analysis of their robustness properties. Below, we describe the two main implementation choices in detail.

\textbf{Feature removal.}
Most machine learning models require all feature values to generate predictions, so we must specify a convention for depriving the model of feature information. We denote the prediction given partial inputs by $f(\xS)$. Many implementations have been discussed in the literature~\cite{covert2021explaining}, but we focus here on three common choices. Given a set of observed values $\xS$, these techniques can all be viewed as averaging the prediction over a distribution $q(\vxbarS)$ for the held-out feature values:
\begin{equation}
    f(\xS) := \E_{q(\vxbarS)} \left[ f(\xS, \vxbarS) \right] = \int f(\xS, \xbarS) q(\xbarS) d\xbarS. \label{eq:removal}
\end{equation}
Specifically, the three choices we consider are:

\begin{itemize}[leftmargin=6mm]
    \item (Baseline values) Given a baseline input $b \in \R^d$, we can set the held-out features to their corresponding values $b_{\bar S}$~\cite{sundararajan2020many}. This is equivalent to letting $q(\vxbarS)$ be a Dirac delta centered at $b_{\bar S}$.
    \item (Marginal distribution) Rather than using a single replacement value, we can average across values sampled from the input's marginal distribution, or let $q(\vxbarS) = p(\vxbarS)$~\cite{lundberg2017unified}.
    \item (Conditional distribution) Finally, we can average across replacement values while conditioning on the available features, which is equivalent to setting $q(\vxbarS) = p(\vxbarS \mid \xS)$~\cite{frye2021shapley}.
\end{itemize}

The first two options are commonly implemented in practice because they are simple to estimate, but the third choice is viewed by some work as more informative to users~\cite{aas2021explaining, frye2021shapley, covert2021explaining}. The conditional distribution approach requires more complex and error-prone estimation procedures (see~\cite{chen2022algorithms} for a discussion), but we assume here that all three versions of $f(\xS)$ can be calculated exactly.

\textbf{Summary technique.}
Given a feature removal technique that allows us to query the model with arbitrary feature sets, we must define a convention for summarizing each feature's influence. This is challenging due to the exponential number of feature sets, or because $|\cP(d)| = 2^d$. Again, many techniques have been discussed in the literature~\cite{covert2021explaining}, with the simplest option comparing the prediction with all features included and with a single feature missing~\cite{zeiler2014visualizing}. We take a broad perspective here, considering a range of approaches that define attributions as a linear combination of predictions with different feature sets. These methods include leave-one-out~\cite{zeiler2014visualizing}, RISE~\cite{petsiuk2018rise}, LIME~\cite{ribeiro2016should}, Shapley values~\cite{lundberg2017unified}, and Banzhaf values~\cite{chen2020ls}, among other possible options.

All of these methods yield per-feature attribution scores $\phi_i(f, x) \in \R$ for $i \in [d]$. For example, the Shapley value calculates feature attribution scores as follows~\cite{lundberg2017unified}:
\begin{equation}
    \phi_i(f, x) = \frac{1}{d} \sum_{S \subseteq [d] \setminus \{i\}} \binom{d - 1}{|S|}^{-1} \left( f(x_{S \cup \{i\}}) - f(\xS) \right).
\end{equation}
The specific linear combinations for the other approaches we consider are shown in \Cref{tab:summary}. The remainder of the paper uses the notation $f(\xS)$ to refer to predictions with partial information, and $\phi(f, x) = [\phi_1(f, x), \ldots, \phi_d(f, x)] \in \R^d$ to refer to feature attributions. We do not introduce separate notation to distinguish between implementation choices, opting instead to make the relevant choices clear in each result.

\subsection{Problem formulation}

The problem formulation in this work is straightforward: our goal is to understand the stability of feature attributions under input perturbations and model perturbations. Formally, we aim to study

\begin{enumerate}[leftmargin=8mm]
    \item whether $\lVert \phi(f, x) - \phi(f, x') \rVert$ is controlled by $\lVert x - x' \rVert$ (\textbf{input perturbation}), and

    \item whether $\lVert \phi(f, x) - \phi(f', x) \rVert$ is controlled by $\lVert f - f' \rVert$ (\textbf{model perturbation}).
\end{enumerate}

The following sections show how this can be guaranteed using certain distance metrics, and under certain assumptions about the model and/or the data distribution.

% Preliminary results
\section{Preliminary results} \label{sec:preliminaries}
As an intermediate step towards understanding explanation robustness, we first provide results for sub-components of the algorithms. We begin by addressing the robustness of predictions with partial information to input and model perturbations, and we then discuss robustness properties of the summary technique. Proofs for all results are in the Appendix.

Before proceeding, we introduce two assumptions that are necessary for our analysis.

\begin{assumption} \label{asmp:lipschitz}
    We assume that the model $f$ is globally $L$-Lipschitz continuous, or that we have $|f(x) - f(x')| \le L \cdot \norm{2}{x - x'}$ for all $x, x' \in \R^d$.
\end{assumption}

\begin{assumption} \label{asmp:bounded}
    We assume that the model $f$ has bounded predictions, or that there exists a constant $B$ such that $|f(x)| \leq B$ for all $x \in \R^d$.
\end{assumption}

The first assumption holds for most deep learning architectures, because they typically compose a series of Lipschitz continuous layers~\cite{virmaux2018lipschitz}. The second assumption holds with $B = 1$ for classification models. These are therefore mild assumptions, but they are discussed further in \Cref{sec:discussion}.

We also define the notion of a functional norm, which is useful for several of our results.

\begin{definition}
    The $L^p$ norm for a function $g: \R^d \mapsto \R$ is defined as $\norm{p}{g} \equiv (\int |g(x)|^p dx)^{1/p}$, where the integral is taken over $\R^d$. $\norm{p,\cX'}{g}$ denotes the same integral taken over the domain $\cX' \subseteq \R^d$.
\end{definition}

Our results adopt this to define a notion of distance between functions $\norm{p}{g - g'}$, focusing on the cases where $p = 1$ for the $L^1$ distance and $p = \infty$ for the Chebyshev distance.
% see https://en.wikipedia.org/wiki/Uniform_norm for Chebyshev distance

\subsection{Prediction robustness to input perturbations} \label{sec:robustness_input}

Our goal here is to understand how the prediction function $f(\xS)$ for a feature set $\xS$ behaves under small input perturbations. We first consider the case where features are removed using the baseline or marginal approach, and we find that Lipschitz continuity is inherited from the original model.

\begin{lemma} \label{lemma:marginal}
    When removing features using either the \textbf{baseline} or \textbf{marginal} approaches, the prediction function $f(\xS)$ for any feature set $\xS$ is $L$-Lipschitz continuous:
    \begin{equation*}
        |f(\xS) - f(\xS')| \leq L \cdot \norm{2}{\xS - \xS'} \quad \forall \; \xS, \xS' \in \R^{|S|}.
    \end{equation*}
\end{lemma}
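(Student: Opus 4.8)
The plan is to exploit the crucial structural feature of the baseline and marginal removal operators: in both cases the averaging distribution $q(\vxbarS)$ over the held-out coordinates does \emph{not} depend on the observed values $\xS$. This is precisely the property that fails for the conditional approach, and it is what allows Lipschitz continuity to pass through the removal operation unchanged.

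First I would unify the two cases. Since $q$ is independent of $\xS$, the same measure governs the integral representation of $f(\xS)$ and of $f(\xS')$, so I can subtract them under a common integral sign:
\begin{equation*}
    f(\xS) - f(\xS') = \int \left[ f(\xS, \xbarS) - f(\xS', \xbarS) \right] q(\xbarS)\, d\xbarS .
\end{equation*}
Next I would take absolute values, move them inside the integral (triangle inequality for integrals), and apply \Cref{asmp:lipschitz} to the integrand. The key observation is that the two full inputs $(\xS, \xbarS)$ and $(\xS', \xbarS)$ agree on all $\bar S$ coordinates, so their Euclidean distance collapses to $\norm{2}{\xS - \xS'}$ as the held-out components cancel. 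This yields the pointwise bound $|f(\xS, \xbarS) - f(\xS', \xbarS)| \le L \norm{2}{\xS - \xS'}$ uniformly in $\xbarS$; pulling this constant out and using $\int q(\xbarS)\, d\xbarS = 1$ gives the result. The baseline case is then immediate as the special instance where $q$ is the Dirac delta at $b_{\bar S}$, so the integral reduces to the single term $f(\xS, b_{\bar S})$.

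I do not anticipate a serious obstacle: the argument is short and uses only Lipschitz continuity plus the normalization of $q$. The one point requiring care is the norm identity $\norm{2}{(\xS, \xbarS) - (\xS', \xbarS)} = \norm{2}{\xS - \xS'}$, which holds only because the held-out coordinates are identical in both arguments — and this in turn is a consequence of $q$ being shared across $\xS$ and $\xS'$. Flagging this dependence is the conceptually important step, since it pinpoints exactly why the conditional removal approach cannot be handled by the same computation and requires separate, stronger treatment.
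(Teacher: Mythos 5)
Your proposal is correct and follows essentially the same argument as the paper's proof: both rest on the observation that $q(\vxbarS)$ is independent of the observed values, subtract the two integral representations under a common measure, apply the triangle inequality and \Cref{asmp:lipschitz} pointwise, and conclude via $\int q(\xbarS)\,d\xbarS = 1$. Your explicit flagging of the norm identity $\norm{2}{(\xS,\xbarS) - (\xS',\xbarS)} = \norm{2}{\xS - \xS'}$ is a step the paper leaves implicit, but it is the same proof.
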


Next, we consider the case where features are removed using the conditional distribution approach. We show that the continuity of $f(\xS)$ depends not only on the original model, but also on the similarity of the conditional distribution for the held-out features.

\begin{lemma} \label{lemma:conditional}
    When removing features using the \textbf{conditional} approach, the prediction function $f(\xS)$ for a feature set $\xS$ satisfies
    \begin{equation*}
        |f(\xS) - f(\xS')| \leq L \cdot \norm{2}{\xS - \xS'} + 2B \cdot d_{TV} \Big( p(\vxbarS \mid \xS), p(\vxbarS \mid \xS') \Big),
    \end{equation*}
    where the total variation distance is defined via the $L^1$ functional distance as
    \begin{equation*}
        d_{TV} \Big( p(\vxbarS \mid \xS), p(\vxbarS \mid \xS') \Big) \equiv \frac{1}{2} \Bignorm{1}{p(\vxbarS \mid \xS) - p(\vxbarS \mid \xS')}.
    \end{equation*}
    % see https://en.wikipedia.org/wiki/Lp_space#Lp_spaces_and_Lebesgue_integrals
\end{lemma}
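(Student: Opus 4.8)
The plan is to reduce the conditional case to the marginal case by controlling the two distinct sources of variation separately: the change in the model's argument from $\xS$ to $\xS'$, and the change in the integrating distribution from $p(\vxbarS \mid \xS)$ to $p(\vxbarS \mid \xS')$. Writing out the definitions,
\begin{equation*}
    f(\xS) - f(\xS') = \int f(\xS, \xbarS)\, p(\xbarS \mid \xS)\, d\xbarS - \int f(\xS', \xbarS)\, p(\xbarS \mid \xS')\, d\xbarS,
\end{equation*}
I would introduce the cross term $\int f(\xS', \xbarS)\, p(\xbarS \mid \xS)\, d\xbarS$ and add and subtract it. This splits the difference into one piece in which only the function argument varies (the distribution is held fixed at $p(\vxbarS \mid \xS)$) and one piece in which only the distribution varies (the argument is held fixed at $\xS'$).

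For the first piece, $\int \big[ f(\xS, \xbarS) - f(\xS', \xbarS) \big]\, p(\xbarS \mid \xS)\, d\xbarS$, I would apply \Cref{asmp:lipschitz} pointwise, using that $\norm{2}{(\xS, \xbarS) - (\xS', \xbarS)} = \norm{2}{\xS - \xS'}$ since the held-out coordinates coincide. The integrand is then bounded in absolute value by $L \cdot \norm{2}{\xS - \xS'}$, and because $p(\vxbarS \mid \xS)$ integrates to one, this whole piece is bounded by $L \cdot \norm{2}{\xS - \xS'}$ — this is exactly the argument behind \Cref{lemma:marginal}. For the second piece, $\int f(\xS', \xbarS)\, \big[ p(\xbarS \mid \xS) - p(\xbarS \mid \xS') \big]\, d\xbarS$, I would pull out the uniform bound $|f| \le B$ from \Cref{asmp:bounded} to obtain the upper bound $B \int \big| p(\xbarS \mid \xS) - p(\xbarS \mid \xS') \big|\, d\xbarS = B \cdot \bignorm{1}{p(\vxbarS \mid \xS) - p(\vxbarS \mid \xS')}$, which by the stated definition equals $2B \cdot d_{TV}\big( p(\vxbarS \mid \xS), p(\vxbarS \mid \xS') \big)$.

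Combining the two bounds through the triangle inequality yields the claimed inequality. The only real decision in the argument is the choice of cross term: one must keep the distribution fixed in the term handled by Lipschitz continuity (so that the densities integrate cleanly to one) and keep the argument fixed in the term handled by boundedness (so that the two densities can be subtracted and recognized as a total variation distance). With that decomposition in place, the rest is a routine application of the two assumptions, and no regularity beyond \Cref{asmp:lipschitz,asmp:bounded} is needed — in particular, no smoothness of the conditional distribution in $\xS$ is required, since its variation is absorbed entirely into the total variation term.
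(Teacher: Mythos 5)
Your proposal is correct and follows essentially the same route as the paper's proof: the paper also adds and subtracts the cross term $f(\xS', \xbarS)\, p(\xbarS \mid \xS)$, bounds the argument-variation piece by $L \cdot \norm{2}{\xS - \xS'}$ via \Cref{asmp:lipschitz}, and bounds the distribution-variation piece by $2B \cdot d_{TV}$ via \Cref{asmp:bounded}. The only cosmetic difference is that the paper performs the add-and-subtract pointwise inside a single integral rather than at the level of the two integrals.
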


\Cref{lemma:conditional} does not immediately imply Lipschitz continuity for $f(\xS)$, because we have not bounded the total variation distance in terms of $\norm{2}{\xS - \xS'}$. To address this, we require the following Lipschitz-like continuity property in the total variation distance.

\begin{assumption} \label{asmp:tvd}
    We assume that there exists a constant $M$ such that for all $S \subseteq [d]$, we have
    \begin{equation*}
        d_{TV} \Big( p(\vxbarS \mid \xS), p(\vxbarS \mid \xS') \Big) \leq M \cdot \norm{2}{\xS - \xS'} \quad \forall \xS, \xS' \in \R^{|S|}.
    \end{equation*}
\end{assumption}

Intuitively, this says that the conditional distribution $p(\vxbarS \mid \xS)$ cannot change too quickly as a function of the observed features. This property does not hold for all data distributions $p(\vx)$, and it may hold in some cases only for large $M$; in such scenarios, the function $f(\xS)$ is not guaranteed to change slowly. However, there are cases where it holds. For example, we have $M = 0$ if the features are independent, and the following example shows a case where it holds with dependent features.

\begin{example} \label{prop:gaussian_tv}
    For a Gaussian random variable $\vx \sim \cN(\mu, \Sigma)$ with mean $\mu \in \R^d$ and covariance $\Sigma \in \R^{d \times d}$,
    % the property in
    \Cref{asmp:tvd} holds with $M = \frac{1}{2} \sqrt{\lambda_{\max}(\Sigma^{-1}) - \lambda_{\min}(\Sigma^{-1})}$.
    % Intuitively, this means that if one dimension is roughly a linear combination of the others, or if $\lambda_{\max}(\Sigma^{-1})$ is large, the conditional distribution can change quickly as a function of the conditioning variables.
    If $\vx$ is assumed to be standardized, this captures the case where independent features yield $M = 0$.
    Intuitively, it also means that if one dimension is roughly a linear combination of the others, or if $\lambda_{\max}(\Sigma^{-1})$ is large, the conditional distribution can change quickly as a function of the conditioning variables.
\end{example}

Now, assuming that this property holds for $p(\vx)$, we show that we can improve upon \Cref{lemma:conditional}.

\begin{lemma} \label{lemma:conditional_updated}
    Under \Cref{asmp:tvd}, the prediction function $f(\xS)$ defined using the \textbf{conditional} approach is Lipschitz continuous with constant $L + 2BM$ for any feature set $\xS$.
\end{lemma}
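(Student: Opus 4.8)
The plan is to obtain the result as an immediate corollary of \Cref{lemma:conditional}, using \Cref{asmp:tvd} to convert the total variation term into a Lipschitz bound in $\norm{2}{\xS - \xS'}$. Since \Cref{lemma:conditional} is already established, the only task is to control the one term in that bound that is not yet expressed in terms of the input distance.

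First I would recall the estimate from \Cref{lemma:conditional}: for any feature set and any observed values $\xS, \xS'$,
\[
    |f(\xS) - f(\xS')| \leq L \cdot \norm{2}{\xS - \xS'} + 2B \cdot d_{TV}\Big(p(\vxbarS \mid \xS), p(\vxbarS \mid \xS')\Big).
\]
The first summand is already Lipschitz in $\norm{2}{\xS - \xS'}$, so the remaining work is to bound the total variation term in the same metric. Next I would invoke \Cref{asmp:tvd}, which supplies exactly this missing piece, namely $d_{TV}(p(\vxbarS \mid \xS), p(\vxbarS \mid \xS')) \leq M \cdot \norm{2}{\xS - \xS'}$. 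Substituting this into the inequality above and collecting the two terms gives
\[
    |f(\xS) - f(\xS')| \leq (L + 2BM) \cdot \norm{2}{\xS - \xS'},
\]
which is precisely the claimed Lipschitz continuity with constant $L + 2BM$.

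In this case there is no genuine obstacle in the final step: the substantive content lives entirely in the two results being combined. The heavy lifting was done in proving \Cref{lemma:conditional} (bounding the prediction gap by integrating the bounded model $f$ against the difference of conditional densities, so that the excess beyond the $L$-Lipschitz term is governed by the total variation distance), and in verifying that \Cref{asmp:tvd} holds for a given data distribution, as illustrated for the Gaussian case in \Cref{prop:gaussian_tv}. The one point worth stating carefully is that \Cref{asmp:tvd} is quantified uniformly over all $S \subseteq [d]$ with a single constant $M$; consequently the same Lipschitz constant $L + 2BM$ applies simultaneously to every feature set $\xS$, which is what the phrase ``for any feature set $\xS$'' in the statement requires.
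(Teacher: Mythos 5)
Your proposal is correct and follows exactly the paper's own proof: apply \Cref{lemma:conditional} and then substitute the bound from \Cref{asmp:tvd} on the total variation term to obtain the constant $L + 2BM$. Your additional remark that \Cref{asmp:tvd} holds uniformly over all $S \subseteq [d]$, so a single constant works for every feature set, is a correct and worthwhile clarification of why the conclusion holds ``for any feature set $\xS$.''
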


Between \Cref{lemma:marginal,lemma:conditional_updated}, we have established that the function $f(\xS)$ remains Lipschitz continuous for any feature set $\xS$, although in some cases with a larger constant that depends on the data distribution. These results are summarized in \Cref{tab:removal_input}. In \Cref{app:prediction_input} we show that our analysis can be extended to account for sampling in \cref{eq:removal} when the expectation is not calculated exactly.

\begin{table}[ht]
% \centering
% \caption{Two side-by-side tables}

\begin{minipage}{0.45\textwidth}
\centering
\caption{Lipschitz constants induced by each feature removal technique.}
% \begin{small}
\begin{tabular}{ccc}
\toprule
Baseline & Marginal & Conditional \\
\midrule
$L$ & $L$ & $L + 2BM$ \\
\bottomrule
\end{tabular}
% \end{small}
\label{tab:removal_input}
\end{minipage}
\hspace{0.05\textwidth} % Adjust the space between the tables as needed
\begin{minipage}{0.45\textwidth}
\centering
\caption{Relevant functional distances for each feature removal technique.}
% \begin{small}
\begin{tabular}{ccc}
\toprule
Baseline & Marginal & Conditional \\
\midrule
$\norm{\infty}{f - f'}$ & $\norm{\infty}{f - f'}$ & $\norm{\infty,\cX}{f - f'}$ \\
\bottomrule
\end{tabular}
% \end{small}
\label{tab:removal_model}
\end{minipage}
\end{table}

\subsection{Prediction robustness to model perturbations} \label{sec:robustness_model}

Our next goal is to understand how the function $f(\xS)$ for a feature set $\xS$ behaves under small changes to the model. The intuition is that if two models make very similar predictions with all features, they should continue to make similar predictions with a subset of features. We first derive a general result involving the proximity between two models $f$ and $f'$ within a subdomain.
% of $\R^d$.

\begin{lemma} \label{lemma:model}
    For two models $f, f': \R^d \mapsto \R$ and a subdomain $\cX' \subseteq \R^d$, the prediction functions $f(\xS), f'(\xS)$ for any feature set $\xS$ satisfy
    \begin{equation*}
        |f(\xS) - f'(\xS)| \leq \norm{\infty, \cX'}{f - f'} \cdot Q_{\xS}(\cX') + 2B \cdot \big(1 - Q_{\xS}(\cX') \big),
    \end{equation*}
    where $Q_{\xS}(\cX')$ is the probability of imputed samples lying in $\cX'$ based on the distribution $q(\vxbarS)$:
    \begin{equation*}
        Q_{\xS}(\cX') \equiv \E_{q(\vxbarS)}\left[ \sI\left\{ (\xS, \vxbarS) \in \cX' \right\} \right].
    \end{equation*}
\end{lemma}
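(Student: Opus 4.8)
The plan is to exploit the fact that both $f(\xS)$ and $f'(\xS)$ are defined by averaging over the \emph{same} removal distribution $q(\vxbarS)$, so that their difference collapses into a single expectation. Starting from the definition in \cref{eq:removal}, I would write
\[
f(\xS) - f'(\xS) = \E_{q(\vxbarS)}\left[ f(\xS, \vxbarS) - f'(\xS, \vxbarS) \right],
\]
and then move the absolute value inside via the triangle inequality (equivalently, Jensen's inequality for the convex map $|\cdot|$), yielding $|f(\xS) - f'(\xS)| \le \E_{q(\vxbarS)}\left[ |f(\xS, \vxbarS) - f'(\xS, \vxbarS)| \right]$.

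The key step is then to partition the integrand according to whether the imputed sample $(\xS, \vxbarS)$ lands inside the subdomain $\cX'$ or in its complement, by inserting the identity $1 = \sI\{(\xS,\vxbarS)\in\cX'\} + \sI\{(\xS,\vxbarS)\notin\cX'\}$. On the event that the sample lies in $\cX'$, the pointwise gap $|f(\xS,\vxbarS) - f'(\xS,\vxbarS)|$ is at most the Chebyshev distance $\norm{\infty,\cX'}{f - f'}$ by definition of that restricted functional norm, so this contribution is bounded by $\norm{\infty,\cX'}{f - f'} \cdot Q_{\xS}(\cX')$. On the complementary event the functional distance over $\cX'$ gives no control, so I instead invoke \cref{asmp:bounded}: since $|f| \le B$ and $|f'| \le B$ everywhere, the gap is at most $2B$, and this contribution is bounded by $2B \cdot (1 - Q_{\xS}(\cX'))$. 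Summing the two pieces gives the stated bound.

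The argument is a clean decomposition, so there is no serious technical obstacle; the only point requiring care is conceptual rather than computational. Specifically, one must recognize that $\norm{\infty,\cX'}{f - f'}$ carries no information outside $\cX'$, which is precisely why boundedness is needed to control the tail mass $1 - Q_{\xS}(\cX')$ of imputed samples that escape the subdomain. It is equally essential that $f$ and $f'$ share the identical imputation distribution $q(\vxbarS)$; otherwise the difference would not reduce to a single expectation, and an additional term comparing the two removal distributions would appear. The result is sharpest when $\cX'$ is taken to be the data support and the conditional removal approach is used, since then imputed samples lie in $\cX'$ almost surely, forcing $Q_{\xS}(\cX') = 1$ and eliminating the $2B$ term altogether.
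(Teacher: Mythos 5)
Your proposal is correct and follows essentially the same argument as the paper's proof: both pull the difference under a single expectation over the shared distribution $q(\vxbarS)$, split the integral via the indicator of $(\xS,\vxbarS) \in \cX'$ versus its complement, and bound the two pieces by $\norm{\infty,\cX'}{f-f'}$ and $2B$ respectively. The paper's proof even emphasizes the same key observation you flag — that $q(\vxbarS)$ is identical for both models because $\xS$ is fixed — so there is nothing to add.
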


The difference in predictions therefore depends on the distance between the models only within the subdomain $\cX'$, as well as the likelihood of imputed sampled lying in this subdomain. This result illustrates a point shown by \citet{slack2020fooling}: that two models which are equivalent on a small subdomain, or even on the entire data manifold $\cX$, can lead to different attributions if we use a removal technique that yields a low value for $Q_{\xS}(\cX')$. In fact, when $Q_{\xS}(\cX') \to 0$, \Cref{lemma:model} reduces to a trivial bound $|f(\xS) - f'(\xS)| \leq 2B$ that follows directly from \Cref{asmp:bounded}.

The general result in \Cref{lemma:model} allows us to show two simpler ones. In both cases, we choose the subdomain $\cX'$ and removal technique $q(\vxbarS)$ so that $Q_{\xS}(\cX') = 1$.

\begin{lemma} \label{lemma:model_manifold}
    When removing features using the \textbf{conditional} approach, the prediction functions $f(\xS), f'(\xS)$ for two models $f, f'$ and any feature set $\xS$ satisfy
    \begin{equation*}
        |f(\xS) - f'(\xS)| \leq \norm{\infty, \cX}{f - f'}.
    \end{equation*}
\end{lemma}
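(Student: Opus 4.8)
The plan is to obtain this statement as a direct specialization of \Cref{lemma:model}, choosing the subdomain to be the support $\cX$ of the data distribution and then exploiting the structure of the conditional removal approach to force the mixing coefficient to equal one. First I would set $\cX' = \cX$ in \Cref{lemma:model}, which yields
\begin{equation*}
    |f(\xS) - f'(\xS)| \leq \norm{\infty, \cX}{f - f'} \cdot Q_{\xS}(\cX) + 2B \cdot \big(1 - Q_{\xS}(\cX)\big).
\end{equation*}
It then suffices to show that $Q_{\xS}(\cX) = 1$, so that the second (trivial) term drops out and only the first survives, matching the claimed bound.

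Next I would compute $Q_{\xS}(\cX)$ under the conditional approach, where the imputation distribution is $q(\vxbarS) = p(\vxbarS \mid \xS)$. By the definition of $Q_{\xS}$,
\begin{equation*}
    Q_{\xS}(\cX) = \E_{p(\vxbarS \mid \xS)}\big[ \sI\{ (\xS, \vxbarS) \in \cX \} \big].
\end{equation*}
The crux is a support argument: since explanations are produced only for inputs with $x \in \cX$, the marginal density $p(\xS)$ is positive and the conditional $p(\vxbarS \mid \xS) = p(\xS, \vxbarS) / p(\xS)$ is well-defined, placing all of its mass on the slice $\{ \xbarS : (\xS, \xbarS) \in \cX \}$ of the support that is compatible with the observed values $\xS$. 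Hence the indicator is equal to one almost surely under $q(\vxbarS)$, giving $Q_{\xS}(\cX) = 1$.

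Substituting $Q_{\xS}(\cX) = 1$ back into the specialized bound collapses it to $|f(\xS) - f'(\xS)| \leq \norm{\infty, \cX}{f - f'}$, which is exactly the claim. I expect the only nontrivial point to be this support argument establishing $Q_{\xS}(\cX) = 1$: one must confirm that conditioning on $\xS$ cannot generate imputed completions lying outside $\cX$, which rests on $\cX$ being the support of $p$ together with the standing assumption that explanations are generated only for $x \in \cX$. Everything else is a mechanical substitution into \Cref{lemma:model}.
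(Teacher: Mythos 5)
Your proposal is correct and follows exactly the paper's own argument: specialize \Cref{lemma:model} with $\cX' = \cX$, then observe that under conditional removal $q(\vxbarS) = p(\vxbarS \mid \xS)$ places all of its mass on completions $(\xS, \xbarS)$ lying in the support $\cX$, so that $Q_{\xS}(\cX) = 1$ and the $2B$ term vanishes. Your support argument is the same one the paper invokes (stated there as requiring $\xS$ to lie on the data manifold, i.e., that some $\xbarS$ with $p(\xS, \xbarS) > 0$ exists), just spelled out in slightly more detail.
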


The next result is similar but involves a potentially larger upper bound $\norm{\infty}{f - f'} \geq \norm{\infty, \cX}{f - f'}$.

\begin{lemma} \label{lemma:model_everywhere}
    When removing features using the \textbf{baseline} or \textbf{marginal} approach, the prediction functions $f(\xS), f'(\xS)$ for two models $f, f'$ and any feature set $\xS$ satisfy
    \begin{equation*}
        |f(\xS) - f'(\xS)| \leq \norm{\infty}{f - f'}.
    \end{equation*}
\end{lemma}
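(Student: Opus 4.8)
The plan is to obtain this statement as a direct specialization of \Cref{lemma:model} rather than re-deriving it from scratch. The key observation is that for both the baseline and marginal removal techniques, the imputed samples $(\xS, \vxbarS)$ are unconstrained: the baseline places them at the single point $(\xS, b_{\bar S})$ and the marginal draws $\vxbarS$ from $p(\vxbarS)$, but in either case they always lie in $\R^d$. Hence, setting the subdomain to $\cX' = \R^d$ in \Cref{lemma:model} forces $Q_{\xS}(\R^d) = 1$, because the indicator $\sI\{(\xS, \vxbarS) \in \R^d\}$ equals $1$ almost surely regardless of the removal distribution $q(\vxbarS)$.

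With $Q_{\xS}(\cX') = 1$, the second term $2B \cdot (1 - Q_{\xS}(\cX'))$ in \Cref{lemma:model} vanishes, and the first term becomes $\norm{\infty, \R^d}{f - f'} \cdot 1 = \norm{\infty}{f - f'}$. This yields the claimed bound immediately. Alternatively, I could argue directly from the definition in \cref{eq:removal}: write $f(\xS) - f'(\xS) = \E_{q(\vxbarS)}[f(\xS, \vxbarS) - f'(\xS, \vxbarS)]$ by linearity, apply Jensen's inequality to move the absolute value inside the expectation, and then bound the integrand pointwise by $\sup_{z \in \R^d} |f(z) - f'(z)| = \norm{\infty}{f - f'}$. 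Since the expectation of a constant equals that constant, the bound follows.

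I anticipate no real obstacle here, as the result is essentially a corollary of \Cref{lemma:model}. The only point requiring care is the contrast with \Cref{lemma:model_manifold}: there the conditional samples lie on the data manifold $\cX$, so one may take $\cX' = \cX$ and obtain the tighter constant $\norm{\infty, \cX}{f - f'}$, whereas the baseline and marginal approaches may impute off-manifold points, forcing the larger domain $\cX' = \R^d$ and hence the potentially looser bound $\norm{\infty}{f - f'} \geq \norm{\infty, \cX}{f - f'}$.
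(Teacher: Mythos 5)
Your proof is correct and matches the paper's own argument exactly: the paper likewise obtains this result by specializing \Cref{lemma:model} with $\cX' = \R^d$, where $Q_{\xS}(\R^d) = 1$ holds for any $\xS \in \R^{|S|}$ so the $2B$ term vanishes. Your alternative direct derivation via Jensen's inequality is also sound, but the main route you took is precisely the paper's.
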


The relevant functional distances for \Cref{lemma:model_manifold} and \Cref{lemma:model_everywhere} are summarized in \Cref{tab:removal_model}.

\begin{remark}
    \Cref{lemma:model_everywhere} implies that if two models $f, f'$ are functionally equivalent, or $f(x) = f(x')$ for all $x \in \R^d$, they remain equivalent with any feature subset $\xS$. In \Cref{sec:explanations}, we will see that this implies equal attributions for the two models---a natural property that, perhaps surprisingly, is not satisfied by all feature attribution methods~\cite{shrikumar2017learning, montavon2019layer}, and that has been described in the literature as \textit{implementation invariance}~\cite{sundararajan2017axiomatic}. This property holds automatically for all removal-based methods.
\end{remark}

\begin{remark}
    In contrast, \Cref{lemma:model_manifold} implies the same for models that are equivalent \textit{only on the data manifold} $\cX \subseteq \R^d$. This is a less stringent requirement to guarantee equal attributions, and it is perhaps reasonable that models with equal predictions for all realistic inputs receive equal attributions. To emphasize this, we propose distinguishing between a notion of \textit{weak implementation invariance} and \textit{strong implementation invariance}, depending on whether equal attributions are guaranteed when we have $f(x) = f'(x)$ everywhere ($x \in \R^d$) or almost everywhere ($x \in \cX$).
\end{remark}

\subsection{Summary technique robustness} \label{sec:robustness_summary}

We previously focused on the effects of the feature removal choice, so our goal is now to understand the role of the summary technique. Given a removal approach that lets us query the model with any feature set, the summary generates attribution scores $\phi(f, x) \in \R^d$, often using a linear combination of the outputs $f(\xS)$ for each $S \subseteq [d]$. We formalize this in the following proposition.

\begin{table}[t]
\centering
\caption{Summary technique linear operators used by various removal-based explanations.}
\begin{small}
\begin{tabular}{cccccc}
\toprule
Summary & Method & $A_{iS}$ ($i \in S$) & $A_{iS}$ ($i \notin S$) & $\norm{1,\infty}{A}$ & $\norm{2}{A}$ \\
\midrule
Leave-one-out  & Occlusion~\cite{zeiler2014visualizing}  & $\sI\left\{ |S| = d \right\}$ & $- \sI\left\{ |S| = d - 1 \right\}$ & $2 \sqrt{d}$ & $\sqrt{d + 1}$  \\
Shapley value  & SHAP~\cite{lundberg2017unified}  & $\frac{(|S|-1)!(d-|S|)!}{d!}$ & $- \frac{|S|!(d - |S| - 1)!}{d!}$ & $2 \sqrt{d}$ & $\sqrt{2 / d}$  \\
Banzhaf value  & Banzhaf~\cite{chen2020ls}  & $1/2^{d-1}$ & $-1/2^{d-1}$ & $2 \sqrt{d}$ & $1/2^{d/2 - 1}$  \\
Mean when included  & RISE~\cite{petsiuk2018rise}  & $1/2^{d-1}$ & $0$ & $\sqrt{d}$ & $\sqrt{(d + 1) / 2^d}$  \\
% Weighted least squares  & LIME~\cite{ribeiro2016should}  & \multicolumn{2}{c}{\textcolor{blue}{Appendix ??}} & $O(?)$ & $O(?)$  \\
Weighted least squares  & LIME~\cite{ribeiro2016should}  & \multicolumn{4}{c}{Depends on implementation choices (see \Cref{app:summary})}  \\
\bottomrule
\end{tabular}
\end{small}
\label{tab:summary}
\end{table}

\begin{proposition} \label{prop:linear}
    The attributions for each method can be calculated by applying a linear operator $A \in \R^{d \times 2^d}$ to a vector $v \in \R^{2^d}$ representing the predictions with each feature set, or
    \begin{equation*}
        \phi(f, x) = Av,
    \end{equation*}
    where the linear operator $A$ for each method is listed in \Cref{tab:summary}, and $v$ is defined as $v_S = f(\xS)$ for each $S \subseteq [d]$ based on the chosen feature removal technique.
\end{proposition}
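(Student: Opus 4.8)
The plan is to proceed summary-technique by summary-technique and show that each definition already expresses $\phi_i(f, x)$ as a \emph{fixed} linear combination of the predictions $\{f(\xS) : S \subseteq [d]\}$; reading off the coefficient of each $f(\xS)$ then yields the entries $A_{iS}$ and establishes $\phi(f, x) = Av$ with $v_S = f(\xS)$. Since the map $f \mapsto v$ is just a fixed enumeration of the $2^d$ feature subsets, it suffices to verify that each $\phi_i$ is linear in $v$ and to identify the matching row of $A$.

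For the closed-form summaries---leave-one-out, Shapley, Banzhaf, and RISE---the attribution is by definition a weighted sum of predictions, so the only work is to collect the coefficient multiplying each $v_S = f(\xS)$. For leave-one-out this is immediate, since $\phi_i = f(x_{[d]}) - f(x_{[d] \setminus \{i\}})$ places weight $+1$ on the full set and $-1$ on the set missing only feature $i$, matching $A_{iS} = \sI\{|S| = d\}$ for $i \in S$ and $-\sI\{|S| = d - 1\}$ for $i \notin S$. For the Shapley value I would expand the definition and reindex: the term $f(x_{S \cup \{i\}})$ contributes to the subset $T = S \cup \{i\}$ with $i \in T$, and simplifying $\frac{1}{d}\binom{d-1}{|T|-1}^{-1}$ recovers the stated $\frac{(|T|-1)!(d-|T|)!}{d!}$, while the $-f(\xS)$ term supplies the coefficient for $i \notin S$. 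Banzhaf and RISE follow from the same bookkeeping with their respective constant weights.

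The main obstacle is LIME, whose attributions are defined implicitly as the solution of a weighted least squares problem rather than an explicit sum. I would encode each subset $S$ as a binary indicator row and assemble the design matrix $Z \in \R^{2^d \times (d+1)}$ (an intercept column together with columns $\sI\{i \in S\}$), the diagonal kernel-weight matrix $W \in \R^{2^d \times 2^d}$, and the response vector $v$ with $v_S = f(\xS)$. The normal equations then give $\hat{\beta} = (Z^\top W Z)^{-1} Z^\top W v$, and the attributions are the slope entries of $\hat{\beta}$. Because $Z$ and $W$ depend only on the subset encoding and the weighting kernel---not on $f$---deleting the intercept row of $(Z^\top W Z)^{-1} Z^\top W$ produces a fixed matrix $A \in \R^{d \times 2^d}$ with $\phi(f, x) = Av$, provided $Z^\top W Z$ is invertible. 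The delicate point is precisely this invertibility, i.e.\ full column rank of $Z$ under the chosen weighting, which hinges on LIME's implementation choices; I would record the explicit construction and the rank condition in \Cref{app:summary}, where the LIME operator is derived in full.
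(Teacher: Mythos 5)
Your proposal is correct and follows essentially the same route as the paper's proof: read off the coefficients directly for leave-one-out, Shapley, Banzhaf, and RISE, and handle LIME via the weighted least squares normal equations, obtaining $A$ as the non-intercept rows of $(Z'^\top W Z')^{-1} Z'^\top W$. The invertibility concern you flag is resolved in the paper exactly as you anticipate: $Z'^\top W Z'$ is invertible whenever $w(S) > 0$ for all $S \subseteq [d]$, since the full enumeration of binary subset indicators gives $Z'$ full column rank.
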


In this notation, the entries of $v$ are indexed as $v_S$ for $S \subseteq [d]$, and the entries of $A$ are indexed as $A_{iS}$ for $i \in [d]$ and $S \subseteq [d]$. The operation $Av$ can be understood as summing across all subsets, or $(Av)_i = \sum_{S \subseteq [d]} A_{iS} \cdot v_S$.
Representing the attributions this way is convenient for our next results.

The entries for the linear operator in \Cref{tab:summary} are straightforward for the first four methods, but LIME depends on several implementation choices: these include the choice of weighting kernel, regularization, and intercept term. The first three methods are in fact special cases of LIME~\cite{covert2021explaining}. The class of weighting kernel used in practice is difficult to characterize analytically, but its default parameters for tabular, image and text data approach limiting cases where LIME reduces to other methods (\Cref{app:lime}). For simplicity, the remainder of this section focuses on the other methods.

Perturbing either the input or model induces a change in $v \in \R^{2^d}$, and we must consider how the attributions differ between the original vector $v$ and a perturbed version $v'$. \Cref{prop:linear} suggests that we can bound the attribtuion difference via the change in model outputs $\lVert v - v' \rVert$ and properties of the matrix $A$. We can even do this under different distance metrics, as we show in the next result.

\begin{lemma} \label{lemma:operator_bounds}
    The difference in attributions given the same summary technique $A$ and different model outputs $v, v'$ can be bounded as
    \begin{equation*}
        \norm{2}{Av - Av'} \leq \norm{2}{A} \cdot \norm{2}{v - v'} \quad\quad \text{or} \quad\quad \norm{2}{Av - Av'} \leq \norm{1,\infty}{A} \cdot \norm{\infty}{v - v'},
    \end{equation*}
    where $\norm{2}{A}$ is the spectral norm, and the operator norm $\norm{1,\infty}{A}$ is the square root of the sum of squared row 1-norms, with values for each $A$ given in \Cref{tab:summary}.
\end{lemma}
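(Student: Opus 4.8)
The plan is to reduce both inequalities to a single statement about the image of the perturbation vector under $A$. Writing $w \equiv v - v'$, linearity of the operator gives $Av - Av' = Aw$, so it suffices to bound $\norm{2}{Aw}$ in two different ways: in terms of $\norm{2}{w}$ and in terms of $\norm{\infty}{w}$.

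The first bound is immediate from the definition of the spectral norm. Since $\norm{2}{A} = \sup_{u \neq 0} \norm{2}{Au} / \norm{2}{u}$ is exactly the operator norm induced by the Euclidean vector norm, we have $\norm{2}{Au} \le \norm{2}{A}\,\norm{2}{u}$ for every $u \in \R^{2^d}$; applying this with $u = w$ yields $\norm{2}{Aw} \le \norm{2}{A}\,\norm{2}{w}$.

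For the second bound I would argue row by row. The $i$-th entry of $Aw$ is $(Aw)_i = \sum_{S \subseteq [d]} A_{iS}\, w_S$, and a triangle-inequality / H\"older estimate gives $|(Aw)_i| \le \sum_S |A_{iS}|\,|w_S| \le \big(\sum_S |A_{iS}|\big)\,\norm{\infty}{w}$; that is, the absolute value of each coordinate is at most the $1$-norm of the corresponding row of $A$ times $\norm{\infty}{w}$. Squaring and summing over $i \in [d]$ gives $\norm{2}{Aw}^2 \le \big(\sum_i (\sum_S |A_{iS}|)^2\big)\,\norm{\infty}{w}^2$, and taking square roots produces the claimed factor once we identify $\norm{1,\infty}{A} = \big(\sum_i (\sum_S |A_{iS}|)^2\big)^{1/2}$ as the square root of the sum of squared row $1$-norms.

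There is no genuine obstacle here; the result is elementary and follows from standard norm inequalities. The only points requiring care are bookkeeping: confirming that the mixed operator norm $\norm{1,\infty}{A}$ used in the statement is precisely the quantity produced by the row-wise H\"older bound, and noting that neither bound dominates the other in general (the spectral-norm bound pairs with $\norm{2}{w}$, the mixed-norm bound with $\norm{\infty}{w}$), which is why both are stated. The concrete values of $\norm{2}{A}$ and $\norm{1,\infty}{A}$ listed in \Cref{tab:summary} for each summary method are then computed separately from the explicit entries $A_{iS}$, a routine per-method calculation.
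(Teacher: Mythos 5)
Your proof of the two displayed inequalities is correct and is essentially identical to the paper's: the first bound is the definition of the spectral norm as the operator norm induced by the Euclidean norm, and the second is the same row-wise H\"older estimate $|(Aw)_i| \le \norm{1}{A_i}\,\norm{\infty}{w}$ followed by squaring and summing, identifying $\norm{1,\infty}{A}$ as the square root of the sum of squared row $1$-norms.

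The one caveat concerns your final remark. The lemma as stated also asserts the values of $\norm{2}{A}$ and $\norm{1,\infty}{A}$ listed in \Cref{tab:summary}, and the paper's proof of this lemma devotes most of its length to deriving them. The operator-norm values are indeed immediate (each of leave-one-out, Shapley, and Banzhaf has every row $1$-norm equal to $2$, and RISE equal to $1$, giving $2\sqrt{d}$ and $\sqrt{d}$ respectively). But calling the spectral-norm values a ``routine per-method calculation'' understates the Shapley case: there one must compute the entries of $AA^\top$ via combinatorial sums over subset cardinalities, recognize that $AA^\top$ has the rank-one-plus-identity form $b\,\vone_d\vone_d^\top + (c-b)I_d$, verify $b > 0$ so that $\vone_d$ is the leading eigenvector, and then show the resulting eigenvalue $c + (d-1)b$ collapses to $2/d$. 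The leave-one-out, Banzhaf, and RISE cases follow the same template but with much lighter bookkeeping. So if the lemma is read as including the tabulated values, your proposal is incomplete at exactly that point; if it is read as only the two inequalities, your proof fully matches the paper's.
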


For both inequalities, smaller norms $\lVert A \rVert$ represent stronger robustness to perturbations. Neither bound is necessarily tighter, and the choice of which to use depends on how $v - v'$ is bounded. The first bound using the Euclidean distance $\norm{2}{v - v'}$ is perhaps more intuitive, but the second bound is more useful here because the results in \Cref{sec:robustness_input,sec:robustness_model} effectively relate to $\norm{\infty}{v - v'}$.
% (see \Cref{sec:explanations}).

This view of the summary technique's robustness raises a natural question, which is whether the approaches used in practice have relatively good or bad robustness (\Cref{tab:summary}). The answer is not immediately clear, because within the class of linear operators we can control the robustness arbitrarily:
% can make the robustness arbitrarily good or bad:
we can achieve $\norm{2}{A} = \norm{1,\infty}{A} = 0$ by setting $A = 0$ (strong robustness), and we can likewise get $\norm{2}{A}, \norm{1,\infty}{A} \to \infty$ by setting entries of $A$ to a large value (weak robustness). These results are not useful, however, because neither limiting case results in
% they come at the expense of generating
meaningful attributions.

Rather than considering robustness within the class of \textit{all} linear operators $A \in \R^{d \times 2^d}$, we therefore restrict our attention to attributions that are in some sense meaningful. There are multiple ways to define this, but we consider solutions that satisfy one or more of the following properties, which are motivated by axioms from the literature on game-theoretic credit allocation~\cite{shapley1953value, monderer2002variations}.

\begin{definition}
    We define the following properties for linear operators $A \in \R^{d \times 2^d}$ that are used for removal-based explanations:
    % , and which may or may not hold for all $v \in \R^{2^d}$:
    \begin{itemize}[leftmargin=0.6cm]
        % \item (Null player) For all $v \in \R^{2^d}$, a feature that makes zero marginal contribution to all feature sets must have zero attribution, or $(Av)_i = 0$ if $v_{S \cup \{i\}} = v_S$ for all $S \subseteq [d]$.

        \item (Boundedness)
        For all $v \in \R^{2^d}$, the attributions are bounded by each feature's smallest and largest
        % marginal
        contributions, or
        % $\min_{S \subseteq [d] \setminus \{i\}} (v_{S \cup \{i\}} - v_S) \leq (Av)_i \leq \max_{S \subseteq [d] \setminus \{i\}} (v_{S \cup \{i\}} - v_S)$
        $\min_{S \not\ni i} (v_{S \cup \{i\}} - v_S) \leq (Av)_i \leq \max_{S \not\ni i} (v_{S \cup \{i\}} - v_S)$
        for all $i \in [d]$.
        % and $v \in \R^{2^d}$.

        \item (Symmetry)
        For all $v \in \R^{2^d}$,
        two features that make equal marginal contributions to all feature sets must have equal attributions, or $(Av)_i = (Av)_j$ if $v_{S \cup \{i\}} = v_{S \cup \{j\}}$ for all $S \subseteq [d]$.

        \item (Efficiency) For all $v \in \R^{2^d}$,
        the attributions sum to the difference in predictions for the complete and empty sets, or $\vone^\top Av = v_{[d]} - v_{\{\}}$.
        % for all $v \in \R^{2^d}$.
    \end{itemize}
\end{definition}

% Among these properties, null player is perhaps the most essential, although all three have been discussed in the feature attribution literature~\cite{sundararajan2017axiomatic, lundberg2017unified}. We now show that constraining the summary technique to satisfy one or more of these properties yields clear upper and lower bounds for the robustness. We first address the operator norm $\norm{1,\infty}{A}$, which is the simpler case.
Among these properties, we prioritize boundedness because this leads to a class of solutions that are well-known in game theory, and which are called \textit{probabilistic values}~\cite{monderer2002variations}. We now show that constraining the summary technique to satisfy different combinations of these properties yields clear
% upper and lower
bounds for the robustness. We first address the operator norm $\norm{1,\infty}{A}$, which is the simpler case.

\begin{lemma}
    When the linear operator $A$ satisfies the \textbf{boundedness} property, we have $\norm{1,\infty}{A} = 2\sqrt{d}$.
\end{lemma}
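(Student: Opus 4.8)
The plan is to reduce the boundedness property to an explicit sign-and-magnitude structure on the rows of $A$, from which the row $1$-norms can be read off directly. Throughout, fix a feature $i \in [d]$ and write the $i$-th component of $Av$ as the linear functional $(Av)_i = \sum_{S \subseteq [d]} A_{iS} v_S$. It is convenient to introduce the $2^{d-1}$ marginal-contribution functionals $m_S(v) \equiv v_{S \cup \{i\}} - v_S$ indexed by $S \not\ni i$, so that the boundedness property reads $\min_{S \not\ni i} m_S(v) \le (Av)_i \le \max_{S \not\ni i} m_S(v)$ for all $v \in \R^{2^d}$.

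First I would establish that boundedness is equivalent to each row of $A$ representing a \emph{probabilistic value}, i.e. that there exist weights $p_S^i \ge 0$ with $\sum_{S \not\ni i} p_S^i = 1$ such that $(Av)_i = \sum_{S \not\ni i} p_S^i\, m_S(v)$. The key observation is that the linear map $v \mapsto (m_S(v))_{S \not\ni i}$ is surjective onto $\R^{2^{d-1}}$: one can realize any target vector by setting $v_T = 0$ for $T \not\ni i$ and $v_{S \cup \{i\}}$ to the desired value. On the kernel of this map---the subspace where all marginal contributions vanish---boundedness squeezes $(Av)_i$ between $0$ and $0$, so $(Av)_i$ vanishes there and hence factors through the marginal contributions, yielding coefficients $p_S^i$ with $(Av)_i = \sum_{S \not\ni i} p_S^i\, m_S(v)$. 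Testing this identity against the all-ones target $m_S \equiv 1$ gives $\sum_{S \not\ni i} p_S^i = 1$, and testing against the indicator target $m_{S_0} = 1$, $m_S = 0$ otherwise forces $p_{S_0}^i \ge 0$ through the lower bound (for $d \ge 2$; the case $d = 1$ is immediate, since the single weight must equal $1$).

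Next I would read off the entries of row $i$ from this representation. Expanding $\sum_{S \not\ni i} p_S^i (v_{S \cup \{i\}} - v_S)$ shows $A_{iT} = p_{T \setminus \{i\}}^i \ge 0$ for each $T \ni i$ and $A_{iT} = -p_T^i \le 0$ for each $T \not\ni i$. Hence the row $1$-norm splits into its positive and negative parts, each equal to $\sum_{S \not\ni i} p_S^i = 1$, giving $\sum_{T \subseteq [d]} |A_{iT}| = 2$ for every $i$. Since $\norm{1,\infty}{A}$ is the square root of the sum of squared row $1$-norms, this yields $\norm{1,\infty}{A} = \sqrt{\sum_{i=1}^d 2^2} = 2\sqrt{d}$.

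I expect the main obstacle to be the characterization step---specifically, arguing cleanly that $(Av)_i$ factors through the marginal contributions (via vanishing on the kernel of the marginal-contribution map) and that boundedness then pins the induced coefficients to a probability distribution. Once this equivalence with probabilistic values is in hand, the norm computation is routine and, notably, independent of the particular weights $p_S^i$, which is precisely why the value is exactly $2\sqrt{d}$ for \emph{every} bounded operator rather than merely an upper bound.
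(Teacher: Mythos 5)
Your proof is correct, and it reaches the same structural conclusion as the paper---every row of a bounded operator $A$ has $1$-norm exactly $2$---but by a genuinely more self-contained route. The paper's proof simply invokes Theorem 1 of Monderer and Samet, which states that linear solution concepts satisfying boundedness (the Milnor axiom) are exactly the \emph{probabilistic values}, and then reads off the row norms. You instead prove this characterization from scratch: you observe that the marginal-contribution map $v \mapsto (v_{S \cup \{i\}} - v_S)_{S \not\ni i}$ is surjective, that boundedness forces the functional $(Av)_i$ to vanish on its kernel (since the min and max marginal contributions are both zero there), and hence that $(Av)_i$ factors through the marginal contributions; testing against the all-ones and indicator targets then pins the induced coefficients to a probability distribution, with correct handling of the degenerate case $d=1$. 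All steps are sound: the factorization through a surjective map with kernel-vanishing is standard linear algebra, the coefficient matching $A_{iT} = p^i_{T\setminus\{i\}}$ (for $T \ni i$) and $A_{iT} = -p^i_T$ (for $T \not\ni i$) is forced by uniqueness of the representation of a linear functional, and the norm computation is then identical to the paper's. What your approach buys is a proof verifiable without reference to the game-theory literature, and it makes explicit that only linearity (implicit in $A$ being a matrix) and boundedness are used; what the paper's citation buys is brevity and an explicit bridge to the established taxonomy of probabilistic values, semivalues, and random-order values that its later lemmas also rely on.
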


The above result applies to several summary techniques shown in \Cref{tab:summary}, including leave-one-out~\cite{zeiler2014visualizing}, Shapley values~\cite{lundberg2017unified} and Banzhaf values~\cite{chen2020ls}. We now address the case of the spectral norm $\norm{2}{A}$, beginning with the case where boundedness and symmetry are satisfied.

\begin{lemma} \label{lemma:semivalue}
    When the linear operator $A$ satisfies the \textbf{boundedness} and \textbf{symmetry} properties, the spectral norm is bounded as follows,
    \begin{equation*}
        \frac{1}{2^{d/2 - 1}} \leq \norm{2}{A} \leq \sqrt{d + 1},
    \end{equation*}
    with $1/2^{d/2 - 1}$ achieved by the Banzhaf value and $\sqrt{d + 1}$ achieved by leave-one-out.
\end{lemma}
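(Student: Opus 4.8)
The plan is to reduce the constrained operator to an explicit finite-parameter family, compute its spectral norm exactly as a function of those parameters, and then optimize over the feasible parameters; the two extremal methods fall out as the minimizer and a maximizer of the resulting expression.

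\textbf{Step 1: reduce to semivalues.} First I would show that boundedness together with symmetry forces $A$ to be a \emph{semivalue}. Fixing a row $i$ and writing the marginal contributions $m_S^i \equiv v_{S \cup \{i\}} - v_S$ for $S \not\ni i$, the key observation is that these $2^{d-1}$ linear functionals are independent of the ``sum'' coordinates $v_{S \cup \{i\}} + v_S$; requiring the linear functional $(Av)_i$ to lie in $[\min_{S \not\ni i} m_S^i, \max_{S \not\ni i} m_S^i]$ for \emph{every} $v$ then forces $(Av)_i = \sum_{S \not\ni i} p_i^S \, m_S^i$ with weights $p_i^S \geq 0$ summing to one (the class of probabilistic values, following \cite{monderer2002variations}). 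Imposing symmetry collapses the weights to a dependence on cardinality alone, $p_i^S = p_{|S|}$, which is precisely the semivalue family. Concretely this yields $A_{iS} = p_{|S|-1}$ for $i \in S$ and $A_{iS} = -p_{|S|}$ for $i \notin S$, where $p_0,\dots,p_{d-1} \geq 0$ satisfy the normalization $\sum_{s=0}^{d-1}\binom{d-1}{s} p_s = 1$.

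\textbf{Step 2: exact spectral norm.} Next I would compute $A A^\top \in \R^{d \times d}$ directly. Grouping subsets $S$ by their intersection with $\{i,j\}$ and summing over the cardinality $t = |S \setminus \{i,j\}|$, every diagonal entry equals $\alpha \equiv 2\sum_{t=0}^{d-1}\binom{d-1}{t} p_t^2$ and every off-diagonal entry equals $\beta \equiv \sum_{t=0}^{d-2}\binom{d-2}{t}(p_{t+1}-p_t)^2$, so that $A A^\top = (\alpha-\beta)\,I + \beta\,J$ with $J$ the all-ones matrix. Since $\beta \geq 0$, the largest eigenvalue is the one along the all-ones direction, giving the clean identity $\norm{2}{A}^2 = \alpha + (d-1)\beta$.

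\textbf{Step 3: the two bounds.} For the lower bound I would discard the nonnegative $\beta$ term and apply Cauchy--Schwarz to $\alpha$ with weights $w_t = \binom{d-1}{t}$: the normalization makes $\big(\sum_t w_t p_t\big)^2 = 1$, while $\sum_t w_t = 2^{d-1}$, so $\alpha \geq 2/2^{d-1}$, i.e.\ $\norm{2}{A} \geq 1/2^{d/2-1}$, with equality exactly when the $p_t$ are constant (and hence $\beta = 0$) -- the Banzhaf value. For the upper bound I would use that $p \mapsto \alpha + (d-1)\beta$ is a convex quadratic, so its maximum over the simplex $\{p \geq 0,\ \sum_t \binom{d-1}{t} p_t = 1\}$ is attained at a vertex $p^{(k)}$ with $p_k = 1/\binom{d-1}{k}$ and all other entries zero. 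Evaluating there collapses to $\norm{2}{A}^2 = (d+1)/\binom{d-1}{k}$, maximized by minimizing $\binom{d-1}{k}$; this occurs at $k \in \{0, d-1\}$ where the binomial equals $1$, giving $\norm{2}{A} = \sqrt{d+1}$, and the choice $k = d-1$ is exactly leave-one-out.

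\textbf{Main obstacle.} The bulk of the difficulty is twofold: making the semivalue characterization in Step 1 rigorous rather than merely citing it, and the bookkeeping in Step 2 -- in particular verifying that the off-diagonal sum collapses to the squared-difference form $\sum_t \binom{d-2}{t}(p_{t+1}-p_t)^2$ through Pascal-type identities. Once $A A^\top = (\alpha-\beta)I + \beta J$ is in hand, the remaining optimization is routine: Cauchy--Schwarz handles the minimum and the convex-maximum-at-a-vertex principle handles the maximum.
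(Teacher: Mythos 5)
Your proposal is correct, and it differs from the paper's proof in a substantive way. Both arguments share the same skeleton: reduce to semivalues via the Monderer--Samet characterization (the paper cites this outright, as do you, though you also sketch the coordinate-change argument behind it), parameterize by cardinality weights, and locate the maximum at a vertex of the weight simplex, where the vertices $k=0$ and $k=d-1$ give leave-one-in and leave-one-out with norm $\sqrt{d+1}$. The divergence is in what gets computed and what gets cited. The paper never computes $AA^\top$ for a general semivalue: it exploits convexity of the spectral norm itself, bounding $\norm{2}{A(\alpha)}$ by the maximum over the basis elements $A^{(k)}$ (whose norms it computes one by one), and for the lower bound it defers entirely to Theorem C.3 of \citet{wang2022data}. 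You instead derive the exact identity $AA^\top = (\alpha-\beta)I + \beta J$, hence $\norm{2}{A}^2 = \alpha + (d-1)\beta$ for \emph{every} semivalue, which makes the lemma self-contained: the lower bound drops out of Cauchy--Schwarz applied to the normalization constraint (with equality exactly at the constant weights, i.e.\ Banzhaf, where additionally $\beta=0$), and the upper bound follows from maximizing a convex quadratic over the simplex---your vertex value $(d+1)/\binom{d-1}{k}$ matches the paper's $(d+1)\binom{d-1}{k-1}^{-1}$ up to the index shift between your $p_s$ and the paper's $\alpha_k$. What your route buys is an exact spectral-norm formula for the whole semivalue family and no reliance on an external result for the Banzhaf optimality; what the paper's route buys is brevity, since it avoids the off-diagonal bookkeeping (which I verified: your $\beta = \sum_t \binom{d-2}{t}(p_{t+1}-p_t)^2$ is correct) at the cost of outsourcing the harder half of the lemma. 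The only soft spot in your write-up is Step 1, which you acknowledge: making the probabilistic-value characterization rigorous is real work, but since the paper also handles it by citation, this is not a gap relative to the paper's own standard.
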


This shows that the Banzhaf value is the most robust summary within a class of linear operators, which are known as \textit{semivalues}~\cite{monderer2002variations}. Its robustness is even better than the Shapley value, a result that was recently discussed by
% has been discussed in recent work by
\citet{wang2022data}. However, the Banzhaf value is criticized for failing to satisfy the efficiency property~\cite{dubey1979mathematical}, so we now address the case where this property holds.

\begin{lemma} \label{lemma:random_order}
    When the linear operator $A$ satisfies the \textbf{boundedness} and \textbf{efficiency} properties, the spectral norm is bounded as follows,
    \begin{equation*}
        \sqrt{\frac{2}{d}} \leq \norm{2}{A} \leq \sqrt{2 + 2 \cdot \cos\left(\frac{\pi}{d + 1}\right)},
    \end{equation*}
    with $\sqrt{2/d}$ achieved by the Shapley value.
\end{lemma}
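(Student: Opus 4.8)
The plan is to first reduce the feasible set of operators to a well-understood polytope, and then bound the spectral norm over it using convexity for the upper bound and the efficiency constraint for the lower bound.

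\textbf{Reduction to random-order values.} As noted in the text, a linear operator satisfying boundedness is exactly a \emph{probabilistic value}~\cite{monderer2002variations}: there are per-feature probability weights $\{p_i^S\}_{S \not\ni i}$ with $(Av)_i = \sum_{S \not\ni i} p_i^S (v_{S \cup \{i\}} - v_S)$. Imposing efficiency in addition characterizes the \emph{random-order values}~\cite{monderer2002variations}, i.e. those $A$ expressible as $A = \sum_\sigma p(\sigma)\, A_\sigma$, where $\sigma$ ranges over the $d!$ orderings of $[d]$, $p$ is a probability distribution over orderings, and $A_\sigma$ is the marginal operator $(A_\sigma v)_i = v_{P_i^\sigma \cup \{i\}} - v_{P_i^\sigma}$ with $P_i^\sigma$ the predecessors of $i$ under $\sigma$. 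Thus the feasible set is the convex hull $\mathrm{conv}\{A_\sigma\}$, and each $A_\sigma$ itself lies in the class (boundedness because a single marginal contribution lies between the min and max; efficiency because the contributions telescope to $v_{[d]} - v_{\{\}}$).

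\textbf{Lower bound.} Here I would use only efficiency. Since $\vone^\top A v = v_{[d]} - v_{\{\}}$ for all $v$, we get $A^\top \vone = \ve_{[d]} - \ve_{\{\}}$, where $\ve_T \in \R^{2^d}$ is the standard basis vector for $T$; this difference has norm $\sqrt{2}$. Hence
\[
\norm{2}{A} = \norm{2}{A^\top} \ge \frac{\norm{2}{A^\top \vone}}{\norm{2}{\vone}} = \frac{\sqrt{2}}{\sqrt{d}} = \sqrt{\tfrac{2}{d}}.
\]
Tightness follows because the Shapley value is a random-order value whose spectral norm equals $\sqrt{2/d}$ (\Cref{tab:summary}), so it attains the bound.

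\textbf{Upper bound.} Because $\norm{2}{\cdot}$ is a norm, it is convex, so for $A = \sum_\sigma p(\sigma)\, A_\sigma$ we have $\norm{2}{A} \le \sum_\sigma p(\sigma)\,\norm{2}{A_\sigma} \le \max_\sigma \norm{2}{A_\sigma}$, reducing the problem to a single ordering. Relabeling features acts by a permutation on rows and on columns, both orthogonal and hence norm-preserving, so every $A_\sigma$ has the same spectral norm and I may take $\sigma$ to be the identity. Then $(A_\sigma v)_i = v_{\{1,\dots,i\}} - v_{\{1,\dots,i-1\}}$ involves only the $d+1$ chain subsets $\{\} \subset \{1\} \subset \cdots \subset [d]$; on these coordinates $A_\sigma$ acts as the $d \times (d+1)$ first-difference matrix $D$, and all remaining columns vanish. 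One then computes that $DD^\top$ is the $d \times d$ discrete Laplacian $\mathrm{tridiag}(-1,2,-1)$, whose eigenvalues are $2 - 2\cos\!\big(k\pi/(d+1)\big)$ for $k = 1,\dots,d$; the largest, at $k = d$, equals $2 + 2\cos\!\big(\pi/(d+1)\big)$. Hence $\norm{2}{A_\sigma} = \sqrt{2 + 2\cos(\pi/(d+1))}$, which is the stated upper bound and is attained by any single-ordering operator.

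\textbf{Main obstacle.} The one genuinely external ingredient is the characterization of boundedness-plus-efficiency operators as random-order values~\cite{monderer2002variations}, which supplies the convex-hull structure underpinning the convexity argument. The essential computational step is the spectral analysis of the vertex operator: recognizing that $A_\sigma$ collapses onto the length-$(d+1)$ chain and that $DD^\top$ is the Dirichlet discrete Laplacian with its classical closed-form spectrum, from which the $\cos(\pi/(d+1))$ term emerges.
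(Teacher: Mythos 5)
Your proof is correct. Your characterization step and your upper bound follow the paper's proof almost exactly: the paper likewise invokes the Monderer--Samet characterization of boundedness-plus-efficiency as random-order values, reduces to a single ordering by convexity of the spectral norm, notes all single-ordering operators are related by row/column permutations, and identifies the resulting Gram matrix as the tridiagonal Toeplitz matrix $\mathrm{tridiag}(-1,2,-1)$ with known spectrum --- which is precisely your $DD^\top$ Dirichlet-Laplacian computation. Where you genuinely depart from the paper is the lower bound. The paper proves it by a symmetrization argument: for an arbitrary distribution $p$ over orderings it forms all $d!$ relabeled copies $p_{\pi'}$, observes each yields an operator with the same spectral norm (similar Gram matrices), and uses convexity to conclude that their average --- the uniform distribution, i.e.\ the Shapley value --- has norm no larger than $\norm{2}{A(p)}$; the numerical value $\sqrt{2/d}$ is then imported from \Cref{lemma:operator_bounds}. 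You instead extract $A^\top \vone = \ve_{[d]} - \ve_{\{\}}$ directly from the efficiency identity and get $\norm{2}{A} = \norm{2}{A^\top} \geq \norm{2}{A^\top \vone}/\norm{2}{\vone} = \sqrt{2/d}$ in one line. Your argument is more elementary and strictly more general in one direction: it shows that \emph{every} linear operator satisfying efficiency alone --- bounded or not, inside the random-order polytope or not --- has spectral norm at least $\sqrt{2/d}$, so the efficiency axiom by itself imposes this floor. The paper's symmetrization is more general in a different direction: it works verbatim for any norm (or convex function) invariant under row and column permutations, and it exhibits the Shapley value as optimal because it is the symmetry center of the polytope, rather than by comparing against a separately computed value. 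For the lemma as stated both routes are equally conclusive, since tightness in either case is certified by the Shapley norm from \Cref{lemma:operator_bounds} and \Cref{tab:summary}.
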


We therefore observe that the Shapley value is the most robust choice within a different class of linear operators, which are known as \textit{random-order values}~\cite{monderer2002variations}. In comparison, the worst-case robustness is achieved by a method that has not been discussed in the literature, but which is a special case of a previously proposed method~\cite{frye2020asymmetric}.
The upper bound in \Cref{lemma:random_order} is approximately equal to $2$, whereas the upper bound in \Cref{lemma:semivalue} grows with the input dimensionality, suggesting that the efficiency property imposes a minimum level of robustness but limits robustness in the best case.

Finally, we can trivially say that $\norm{2}{A} = \sqrt{2 / d}$ when boundedness, symmetry and efficiency all hold,
% are all satisfied,
because the Shapley value is the only linear operator to satisfy all three properties~\cite{shapley1953value, monderer2002variations}.

\begin{lemma} \label{lemma:shapley}
    When the linear operator $A$ satisfies the \textbf{boundedness}, \textbf{symmetry} and \textbf{efficiency} properties, the spectral norm is $\norm{2}{A} = \sqrt{2 / d}$.
\end{lemma}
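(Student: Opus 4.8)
The plan is to exploit the fact that the three properties together pin down $A$ uniquely, and then simply read off the spectral norm that has already been computed. The key observation is that there is exactly one linear operator satisfying boundedness, symmetry, and efficiency simultaneously, namely the Shapley value operator in \Cref{tab:summary}; once this is established, $\norm{2}{A} = \sqrt{2/d}$ follows immediately, since this value was already shown for the Shapley operator (it is the minimizer appearing in \Cref{lemma:random_order}). So the entire content of the lemma reduces to the uniqueness claim, which is why the statement can be called ``trivial'' given the surrounding machinery.

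The heart of the argument is therefore this uniqueness, which I would establish constructively rather than by merely citing \cite{shapley1953value, monderer2002variations}. First I would recall that the boundedness property forces $A$ into probabilistic-value form, $(Av)_i = \sum_{S \not\ni i} p_i^S (v_{S \cup \{i\}} - v_S)$ with nonnegative weights $p_i^S \ge 0$ summing to one for each $i$; this is the representation already underlying \Cref{lemma:semivalue,lemma:random_order}. Adding symmetry then forces the weights to depend only on the coalition size, $p_i^S = p_{|S|}$, so that $A$ is a semivalue determined by a single sequence $p_0, \ldots, p_{d-1}$.

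Next I would impose efficiency as a linear constraint on this sequence. Collecting the coefficient of each coordinate $v_T$ in $\vone^\top A v = \sum_i (Av)_i$, the term $v_T$ contributes with weight $t\, p_{t-1} - (d-t)\, p_t$ where $t = |T|$: the positive part comes from the $|T|$ ways to write $T = S \cup \{i\}$ with $i \in T$, and the negative part from the $d-|T|$ indices $i \notin T$ with $S = T$. Matching against the required right-hand side (coefficient $+1$ on $v_{[d]}$, $-1$ on $v_{\{\}}$, and $0$ on every intermediate set) yields the boundary values $p_{d-1} = p_0 = 1/d$ together with the recurrence $t\, p_{t-1} = (d-t)\, p_t$ for $1 \le t \le d-1$. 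This recurrence admits a unique solution, and one verifies directly that it is the Shapley weight $p_s = \frac{s!\,(d-s-1)!}{d!}$, so $A$ must coincide with the Shapley operator.

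The main obstacle is not any single computation but keeping the first step clean: I would want the probabilistic-value representation stated and reused from the proofs of \Cref{lemma:semivalue,lemma:random_order}, so that symmetry and efficiency reduce to transparent constraints on the weight sequence rather than requiring a fresh derivation. Once the recurrence and its two boundary conditions are in hand, the uniqueness and the identification with the Shapley value are routine, and the value $\norm{2}{A} = \sqrt{2/d}$ is then inherited directly from \Cref{tab:summary} (equivalently, from the lower bound in \Cref{lemma:random_order}, which the Shapley value attains).
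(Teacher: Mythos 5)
Your proposal is correct, and its skeleton is the same as the paper's: reduce the lemma to the statement that boundedness, symmetry, and efficiency single out the Shapley operator, then inherit $\norm{2}{A} = \sqrt{2/d}$ from the spectral-norm computation already done for that operator (\Cref{lemma:operator_bounds}). The difference is in how the uniqueness is handled. The paper's proof is a two-line citation: it invokes the classical result of \citet{monderer2002variations} (and \citet{shapley1953value}) that the Shapley value is the only solution concept satisfying linearity, boundedness, symmetry, and efficiency. You instead derive the uniqueness constructively: boundedness gives the probabilistic-value form (the same Theorem~1 of \citet{monderer2002variations} already used in the paper's proof of the $\norm{1,\infty}{A} = 2\sqrt{d}$ lemma), symmetry collapses the weights to a cardinality-indexed sequence $p_0,\dots,p_{d-1}$, and efficiency, read off coefficient-by-coefficient against $v_T$, yields $d\,p_{d-1} = 1$, $d\,p_0 = 1$, and $t\,p_{t-1} = (d-t)\,p_t$ for $1 \le t \le d-1$; this system has the unique solution $p_s = s!\,(d-s-1)!/d!$, which is the Shapley weighting. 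Your coefficient bookkeeping checks out (the $t$ ways to realize $T = S \cup \{i\}$ give $+t\,p_{t-1}$, the $d-t$ indices outside $T$ give $-(d-t)\,p_t$), and the recurrence with $p_0 = 1/d$ indeed forces the Shapley weights, consistently with the other boundary condition. What your route buys is self-containedness: the lemma no longer rests on an external axiomatic uniqueness theorem, only on the probabilistic-value representation the paper already uses elsewhere. What it costs is length, and the mild care needed to note that uniqueness here is within the class of linear operators (linearity is built into the setting, so this is harmless). Either way the final step is identical: the norm value is not re-derived but inherited from the earlier Shapley computation.
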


% Explanation robustness
\section{Feature attribution robustness: main results} \label{sec:explanations}
We now shift our attention to the robustness properties of entire feature attribution algorithms. These results build on those presented in \Cref{sec:preliminaries}, combining them to provide general results that apply to existing methods, and to new methods that combine their implementation choices arbitrarily.

Our first main result relates to the robustness to input perturbations.

% \textcolor{blue}{Ian: maybe we should specify options within the theorem? The table isn't going to be very uniform between factors, it's going to look weird. Options: 1) separate table, 2) list in the theorem as piecewise functions.}

\begin{theorem} \label{theorem:input}
    The robustness of removal-based explanations to input perturbations is given by the following meta-formula,
    \begin{equation*}
        \norm{2}{\phi(f, x) - \phi(f, x')} \leq g(\textnormal{removal}) \cdot h(\textnormal{summary}) \cdot \norm{2}{x - x'},
    \end{equation*}
    where the factors for each method are defined as follows:
    \begin{align*}
        g(\textnormal{removal}) &= \begin{cases}
            L & \text{if \textnormal{removal} $=$ \textbf{baseline} or \textbf{marginal}}  \\
            L + 2BM & \text{if \textnormal{removal} $=$ \textbf{conditional}},
        \end{cases} \\
        h(\textnormal{summary}) &= \begin{cases}
            2 \sqrt{d} & \text{if \textnormal{summary} $=$ \textbf{Shapley}, \textbf{Banzhaf}, or \textbf{leave-one-out}} \\
            \sqrt{d} & \text{if \textnormal{summary} $=$ \textbf{mean when included}}.
        \end{cases}
    \end{align*}
\end{theorem}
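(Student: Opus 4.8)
The plan is to build the bound by composing the linear-operator identity with the preliminary prediction- and summary-robustness results, so that the perturbation propagates cleanly from input space to attribution space. First I would apply \Cref{prop:linear}: setting $v_S = f(\xS)$ and $v'_S = f(\xS')$ for each $S \subseteq [d]$, we have $\phi(f,x) = Av$ and $\phi(f,x') = Av'$, so $\norm{2}{\phi(f,x) - \phi(f,x')} = \norm{2}{A(v - v')}$. I would then invoke the second inequality of \Cref{lemma:operator_bounds}, giving $\norm{2}{A(v-v')} \leq \norm{1,\infty}{A} \cdot \norm{\infty}{v - v'}$. The reason to use this branch rather than the spectral-norm branch is that the prediction-robustness lemmas control each coordinate difference $|f(\xS) - f(\xS')|$ separately, and $\norm{\infty}{v-v'} = \max_{S \subseteq [d]} |f(\xS) - f(\xS')|$ is exactly the quantity they bound.

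The second step is to control $\norm{\infty}{v-v'}$ using the results of \Cref{sec:robustness_input}. For the baseline and marginal approaches, \Cref{lemma:marginal} yields $|f(\xS) - f(\xS')| \leq L \cdot \norm{2}{\xS - \xS'}$ for every $S$, while for the conditional approach \Cref{lemma:conditional_updated} yields the same with constant $L + 2BM$ under \Cref{asmp:tvd}; in each case the per-set Lipschitz constant is precisely $g(\textnormal{removal})$. To convert the partial-input norm into the full input norm, I would use that $\xS$ and $\xS'$ are the restrictions of $x$ and $x'$ to the coordinates indexed by $S$, whence $\norm{2}{\xS - \xS'} \leq \norm{2}{x - x'}$ because discarding coordinates cannot increase a Euclidean norm. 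Maximizing over $S$ then gives $\norm{\infty}{v - v'} \leq g(\textnormal{removal}) \cdot \norm{2}{x - x'}$.

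The final step is to identify $h(\textnormal{summary})$ with $\norm{1,\infty}{A}$ and evaluate it. For the Shapley, Banzhaf, and leave-one-out summaries, each operator satisfies the boundedness property, so the earlier lemma gives $\norm{1,\infty}{A} = 2\sqrt{d}$; for mean-when-included (RISE), the operator in \Cref{tab:summary} has every row $1$-norm equal to $1$, so $\norm{1,\infty}{A} = \sqrt{d}$. Substituting both intermediate bounds into $\norm{2}{A(v-v')} \leq \norm{1,\infty}{A} \cdot \norm{\infty}{v-v'}$ produces the stated meta-formula with $g$ and $h$ as defined.

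Most of the argument is a mechanical chaining of results proved earlier, so I do not expect real difficulty there. The main point requiring care is the coordinate-restriction inequality $\norm{2}{\xS - \xS'} \leq \norm{2}{x - x'}$ together with the deliberate choice of the $\norm{1,\infty}{A}$ branch of \Cref{lemma:operator_bounds}: had I instead used the spectral-norm branch, I would be forced to bound $\norm{2}{v - v'}$, which aggregates over all $2^d$ feature sets and would not recombine into a clean multiple of $\norm{2}{x-x'}$. Keeping the supremum norm on $v - v'$ is what makes the per-set Lipschitz bounds compose into the final product form.
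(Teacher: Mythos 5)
Your proposal is correct and follows essentially the same route as the paper's proof: bound each coordinate difference $|f(\xS) - f(\xS')|$ via \Cref{lemma:marginal} or \Cref{lemma:conditional_updated} together with the restriction inequality $\norm{2}{\xS - \xS'} \leq \norm{2}{x - x'}$, then apply the $\norm{1,\infty}{A}$ branch of \Cref{lemma:operator_bounds} and substitute the operator-norm values. Your justification for choosing the $\norm{1,\infty}$ bound over the spectral-norm bound matches the paper's reasoning exactly.
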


We therefore observe that the explanation functions themselves are Lipschitz continuous, but with a constant that combines properties of the original model with implementation choices of the attribution method. The model's inherent robustness interacts with the removal choice to yield the $g(\text{removal})$ factor, and the summary technique is accounted for by $h(\text{summary})$.

Our second result takes a similar form, but relates to the robustness to model perturbations.

\begin{theorem} \label{theorem:model}
    The robustness of removal-based explanations to model perturbations is given by the following meta-formula,
    \begin{equation*}
        \norm{2}{\phi(f, x) - \phi(f', x)} \leq h(\textnormal{summary}) \cdot \lVert f - f' \rVert,
    \end{equation*}
    where the functional distance and factor associated with the summary technique are specified as follows:
    \begin{align*}
        \lVert f - f' \rVert &= \begin{cases}
            \norm{\infty}{f - f'} & \text{if \textnormal{removal} $=$ \textbf{baseline} or \textbf{marginal}} \\
            \norm{\infty, \cX}{f - f'} & \text{if \textnormal{removal} $=$ \textbf{conditional}},
        \end{cases} \\
        h(\textnormal{summary}) &= \begin{cases}
            2 \sqrt{d} & \text{if \textnormal{summary} $=$ \textbf{Shapley}, \textbf{Banzhaf}, or \textbf{leave-one-out}} \\
            \sqrt{d} & \text{if \textnormal{summary} $=$ \textbf{mean when included}}.
        \end{cases}
    \end{align*}
\end{theorem}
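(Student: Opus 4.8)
The plan is to reduce the model-perturbation statement to the prediction-level results of \Cref{sec:robustness_model} via the linear-operator representation of the summary technique. Since both $\phi(f, x)$ and $\phi(f', x)$ use the same summary technique and are evaluated at the same input $x$, \Cref{prop:linear} gives a single operator $A$ with $\phi(f, x) = Av$ and $\phi(f', x) = Av'$, where $v_S = f(\xS)$ and $v'_S = f'(\xS)$ for each $S \subseteq [d]$. Hence $\phi(f, x) - \phi(f', x) = A(v - v')$, and the problem reduces to controlling $\norm{2}{A(v - v')}$ in terms of the functional distance between $f$ and $f'$.

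First I would apply the second inequality of \Cref{lemma:operator_bounds} to obtain $\norm{2}{Av - Av'} \leq \norm{1,\infty}{A} \cdot \norm{\infty}{v - v'}$; this choice is deliberate, since the model-perturbation analysis naturally produces a uniform (Chebyshev) bound on the entries of $v - v'$ rather than a Euclidean one. Next I would bound $\norm{\infty}{v - v'} = \max_{S \subseteq [d]} |f(\xS) - f'(\xS)|$ using the prediction-level lemmas: for the baseline or marginal removal approach, \Cref{lemma:model_everywhere} gives $|f(\xS) - f'(\xS)| \leq \norm{\infty}{f - f'}$, while for the conditional approach \Cref{lemma:model_manifold} gives $|f(\xS) - f'(\xS)| \leq \norm{\infty, \cX}{f - f'}$. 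The essential observation is that both bounds hold uniformly over every feature set $S$, so the maximum over the $2^d$ coordinates inherits the same constant, yielding exactly the case-split functional distance $\lVert f - f' \rVert$ stated in the theorem.

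Finally I would substitute the operator norms $\norm{1,\infty}{A}$ from \Cref{tab:summary}: this equals $2\sqrt{d}$ for Shapley, Banzhaf, and leave-one-out (each satisfies the boundedness property, for which $\norm{1,\infty}{A} = 2\sqrt{d}$), and $\sqrt{d}$ for the mean-when-included (RISE) summary, whose operator has nonnegative entries supported only on indices $i \in S$. Collecting these factors gives $\norm{2}{\phi(f, x) - \phi(f', x)} \leq h(\textnormal{summary}) \cdot \lVert f - f' \rVert$ with $h$ as stated. I expect no substantial obstacle here, since all the analytical work was carried out in the preliminaries; the only point requiring care is the uniformity of the prediction-level bounds across feature sets, which is precisely what lets the supremum norm $\norm{\infty}{v - v'}$ collapse to a single functional distance independent of $S$.
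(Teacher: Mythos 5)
Your proposal is correct and follows essentially the same route as the paper's proof: bound $\norm{\infty}{v - v'}$ uniformly over all subsets via \Cref{lemma:model_manifold,lemma:model_everywhere}, then apply the second inequality of \Cref{lemma:operator_bounds} and substitute the operator norms $\norm{1,\infty}{A}$ from \Cref{tab:summary}. No gaps to flag.
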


Similarly, we see here that the attribution difference is determined by the strength of the model perturbation, as measured by a functional distance metric that depends on the feature removal technique. This represents a form of Lipschitz continuity with respect to the model $f$. These results address the case with either an input perturbation or model perturbation, but we can also account for simultaneous input and model perturbations, as shown in \Cref{corr:combined} in \Cref{app:explanations}. The remainder of the paper empirically verifies these results and discusses their practical implications.
% for obtaining robust feature attributions.

% Short experiments section
\section{Summary of experiments}
\begin{figure}[h]
    \centering
    \vspace{-0.5cm}
    \includegraphics[width=\textwidth]{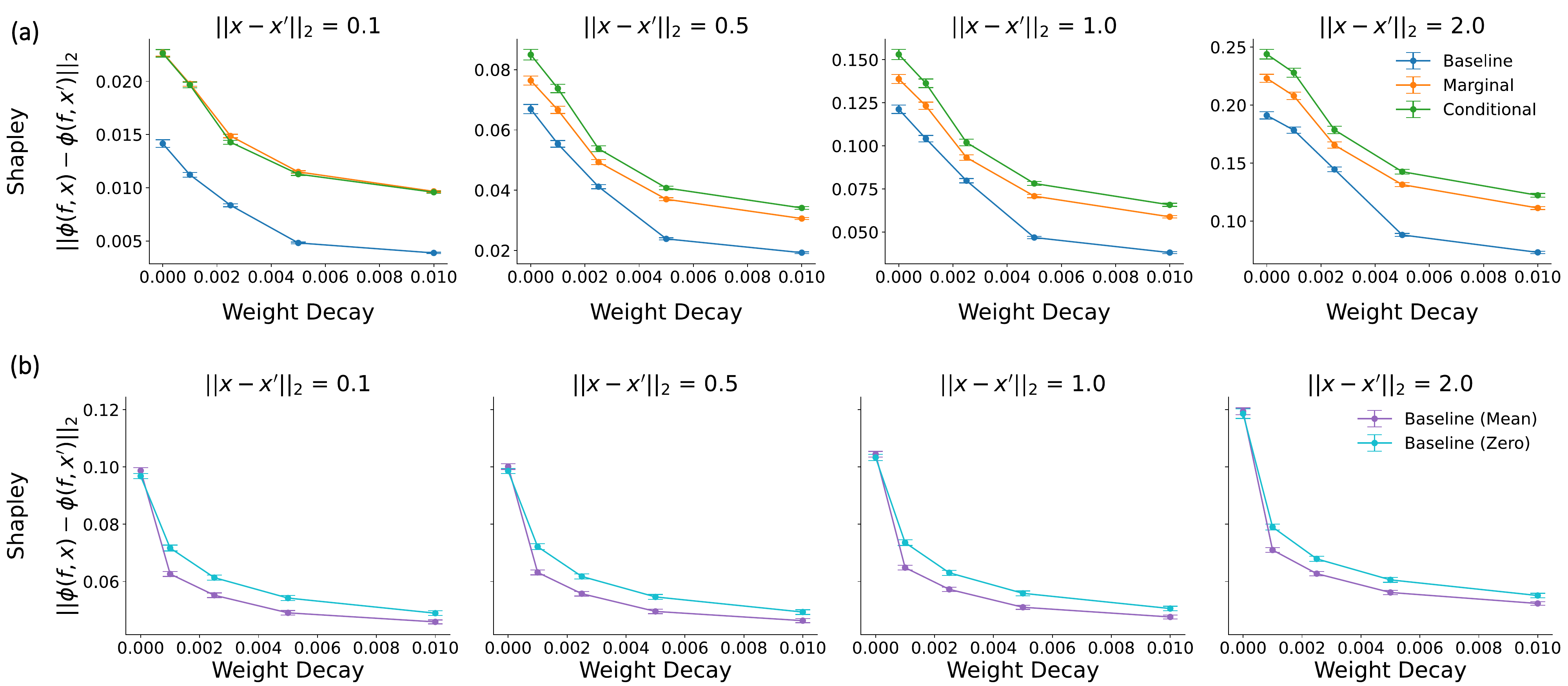}
    \vspace{-0.4cm}
    \caption{Shapley attribution difference for networks trained with increasing weight decay, under input perturbations with varying perturbation norms.
    % $\{0.1, 0.5, 1, 2\}$.
    The results include (a)~the wine quality dataset with FCNs and baseline,
    % (replacing with training set minimums),
    marginal, and conditional feature removal; and (b)~MNIST with CNNs and baseline feature removal with either training set means or zeros. Error bars show the mean and $95\%$ confidence intervals across explicand-perturbation pairs.}
    \label{fig:real_input_pert_shapley_main}
\end{figure}

\begin{figure}[h]
    \centering
    \includegraphics[width=0.98\textwidth]{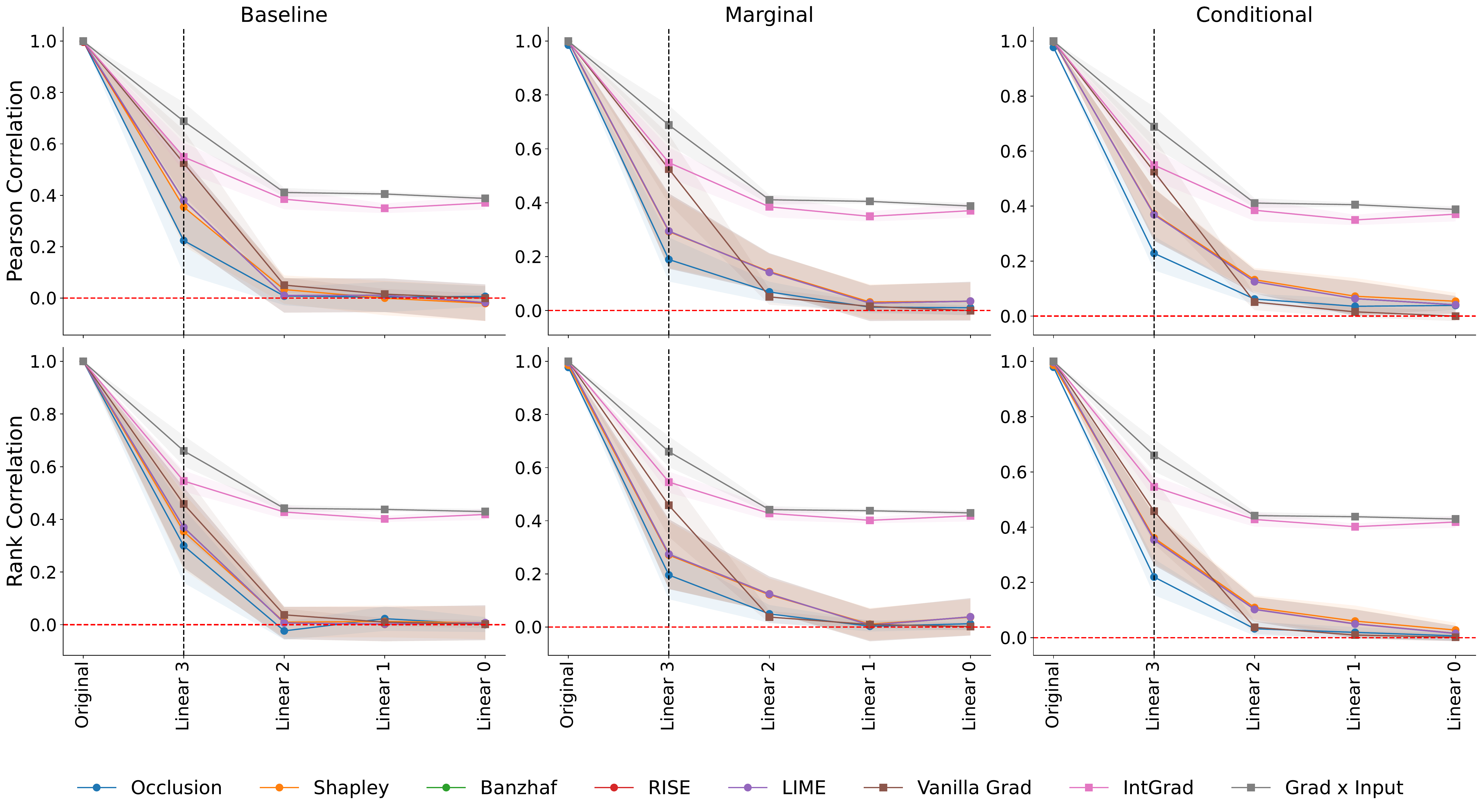}
    \vspace{-0.4cm}
    \caption{Sanity checks for attributions using cascading randomization for the FCN trained on the wine quality dataset. Attribution similarity is measured by Pearson correlation and Spearman rank correlation. We show the mean and $95\%$ confidence intervals
    % of the similarity between intact and perturbed attributions
    across $10$ random seeds.
    % runs of model randomization are shown.
    }
    \label{fig:wine_quality_sanity_checks_main}
\end{figure}

Due to space constraints, we defer our experiments to \Cref{sec:experiments} but briefly describe the results here.
% Experiment details and results are in Appendix \ref{sec:experiments}, which we briefly summarize here.
First, we empirically validate our results using synthetic data and a logistic regression classifier, where the model's Lipschitz constant and the upper bound for the total variation constant $M$ can be readily computed.
This allows us to verify the robustness bound under input perturbations (\Cref{theorem:input}).
Using the same setup, we construct logistic regression classifiers where
% such that
the functional distance on the data manifold is small, and we show that feature removal with the conditional distribution is indeed more robust to model perturbations than baseline or marginal removal, as implied by \Cref{theorem:model}.

Next, we demonstrate two practical implications of our findings
% theoretical results
with the UCI wine quality~\cite{cortez2009modeling, dua2019uci},
% and fully connected neural networks, as well as
MNIST~\cite{lecun1998gradient}, CIFAR-10~\cite{krizhevsky2009learning} and Imagenette datasets~\cite{deng2009imagenet, howard2020fastai}.
% , using fully connected and convolutional networks.
% with convolutional networks.
(1)~Removal-based attributions are more robust under input perturbations for networks trained with weight decay,
% regularization,
which encourages a smaller Lipschitz constant. Formally, this is because a network's Lipschitz constant is upper bounded by the product of its layers' spectral norms~\cite{szegedy2013intriguing}, which are respectively upper bounded by each layer's Frobenius norm. These results are shown for Shapley value attributions in \Cref{fig:real_input_pert_shapley_main} for the wine quality and MNIST datasets, where we see a consistent effect of weight decay across feature removal approaches. We speculate that a similar effect may be observed with other techniques that encourage Lipschitz regularity~\cite{virmaux2018lipschitz, gouk2021regularisation, araujo2023unified}.
% and more results are in \Cref{sec:experiments}.
(2)~Viewing the sanity check with parameter randomization proposed by \citet{adebayo2018sanity} as a form of model perturbation, we show that removal-based attributions empirically pass the sanity check, in contrast to certain gradient-based methods~\cite{shrikumar2017learning, sundararajan2017axiomatic}.
% such as Integrated Gradients~\cite{sundararajan2017axiomatic} and Gradient $\times$ Input~\cite{shrikumar2017learning}.
\Cref{fig:wine_quality_sanity_checks_main} shows results for the wine quality dataset, where the similarity for removal-based methods decays more rapidly than for IntGrad and Grad $\times$ Input. Results for the remaining datasets are in \Cref{sec:experiments}.

% Discussion
\section{Discussion} \label{sec:discussion}
% Our analysis
% % in this work
% focused on the robustness properties of removal-based feature attributions and
% % , and we
% considered perturbations to both the model $f$ and input $\vx$.
% % two distinct types of perturbations, those to the input $\vx$ and to the model $f$,
% We identified conditions under which the effect of perturbations on attributions $\phi(f, \vx)$ can be provably upper bounded: namely, we require only that the function output is bounded and that the model is Lipschitz continuous.
% % (see \Cref{asmp:lipschitz,asmp:bounded}).
% Based on this, we identified Lipschitz-like continuity properties in the attribution with respect to both the input and the model (\Cref{sec:explanations}).
% % input (\Cref{theorem:input}) and the model (\Cref{theorem:model}).
% These results have implications for the feasibility of adversarial attacks on attributions~\cite{ghorbani2019interpretation}, for fairwashing by model developers~\cite{anders2020fairwashing}, and also for general instability and unreliability in model explanations.

Our analysis focused on the robustness properties of removal-based feature attributions to perturbations in both the model $f$ and input $\vx$. For input perturbations, \Cref{theorem:input} shows that the attribution change is proportional to the input perturbation strength. The Lipchitz constant is determined by the attribution's removal and summary technique, as well as the model's inherent robustness. We find that both implementation choices play an important role, but our analysis in \Cref{sec:preliminaries} reveals that the associated factors in our main result can only be so small for meaningful attributions, implying that attribution robustness in some sense reduces to model robustness.
% because the model itself may provide the best tool to improve an attribution's sensitivity.
Fortunately, there are many existing works on improving a neural network's Lipschitz regularity~\cite{virmaux2018lipschitz, gouk2021regularisation, araujo2023unified}.

For model perturbations, \Cref{theorem:model} shows that the attribution difference is proportional to functional distance between the perturbed and original models, as measured by the Chebyshev distance (or infinity norm). Depending on the removal approach, the distance may depend on only a subdomain of the input space. One practical implication is that attributions should remain similar under minimal model changes, e.g., accurate model distillation or quantization. Another implication is that bringing the model closer to its correct form (e.g., the Bayes classifier) results in attributions closer to those of the correct model; several recent works have demonstrated this by exploring ensembled attributions~\cite{dong2020exploring, janizek2021uncovering, ning2022shapley, gyawali2022ensembling}, which are in some cases mathematically equivalent to attributions of an ensembled model.

Our robustness results provide a new lens to compare different removal-based attribution methods. In terms of the summary technique, our main proof technique relies on the associated operator norm, which does not distinguish between several approaches (Shapley, leave-one-out, Banzhaf); however, our results for the spectral norm suggest that the Banzhaf value is in some sense most robust. This result echoes recent work~\cite{wang2022data} and has implications for other uses of game-theoretic credit allocation, but we also find that the Shapley value is most robust within a specific class of summary techniques. Our findings regarding the removal approach are more complex: the conditional distribution approach has been praised by some works for being more informative than baseline or marginal~\cite{frye2021shapley, covert2021explaining}, but this comes at a cost of \textit{worse} robustness to input perturbations. On the other hand, it leads to \textit{improved} robustness to model perturbations, which has implications for fooling attributions with imperceptible off-manifold model changes~\cite{slack2020fooling}.

Next, we make connections between current understanding of gradient-based  methods and our findings for removal-based methods. Overall, the notion of Lipschitz continuity is important for robustness guarantees under input perturbations. Our results for removal-based attributions rely on Lipschitz continuity of the model itself, whereas gradient-based attributions rely on Lipschitz continuity of the model's \textit{gradient}
(i.e., Lipschitz smoothness)~\cite{dombrowski2022towards}. Practically, both are computationally intensive to determine for realistic networks~\cite{virmaux2018lipschitz, gouk2021regularisation}. Empirically, weight decay is an effective approach for generating robust attributions, for both removal-based methods (our results) and gradient-based methods (see \citet{dombrowski2022towards}), as the Frobenius norms of a neural network's weights upper bound both the network's Lipschitz constant and Hessian norm. Under model perturbations, removal-based attributions are more robust when features are removed such that $\vx_S$ stay on the data manifold, whereas gradient-based attributions are more robust when gradients are projected on the tangent space of the data manifold~\cite{dombrowski2019explanations}.

Finally, we consider limitations of our analysis. Future work may consider bounding attribution differences via different distance measures (e.g., $\ell_p$ norms for $p \neq 2$).
% , or analyzing different distributions when removing features (e.g., specific conditional distribution approximations); a key feature of such distributions is whether they condition on the observed features or not, and a range of specific baseline approaches not discussed here are captured by our analysis~\cite{fong2017interpretable, sturmfels2020visualizing, ren2021learning}.
The main limitation to our results is that they are conservative. By relying on global properties of the model, i.e., the Lipschitz constant and global prediction bounds, we arrive at worst-case bounds that are in many cases overly conservative, as reflected in our experiments.
% with both real and synthetic data (\Cref{sec:experiments}).
An alternative approach can focus on localized versions of our bounds. Analogous to recent work on certified robustness~\cite{katz2017reluplex, tjeng2017evaluating, raghunathan2018semidefinite, singh2019abstract, jordan2022zonotope}, tighter bounds may be achievable with specific perturbation strengths, unlike our approach that provides bounds simultaneously for any perturbation strength via global continuity properties.

\section*{Acknowledgements}
We thank members of the Lee Lab and Jonathan Hayase for helpful discussions. This work was funded by the National Science Foundation [DBI-1552309 and DBI-1759487]; and the National Institutes of Health [R35 GM 128638 and R01 NIA AG 061132].

% References
\bibliography{main}

\newpage
\appendix
\numberwithin{equation}{subsection}

% Appendix
\section{Experiments} \label{sec:experiments}
This section conducts a range of experiments to provide empirical validation for our theoretical results, and to examine their practical implications for common machine learning models. Code is available at \url{https://github.com/suinleelab/removal-robustness}.

\subsection{Synthetic data with input perturbation} \label{sec:synthetic_input_perturbation}

First, we empirically validate our results from \Cref{sec:explanations} using synthetic data and a logistic regression classifier, where the model's Lipschitz constant $L$ and the data distribution's total variation constant $M$ can be readily computed for the bound in \Cref{theorem:input}.

\textbf{Setup.} For each input $x \in \R^4$ in the synthetic dataset, features are generated from a multivariate Gaussian distribution with zero mean, $\vx_3, \vx_4$ independent from each other and from $\vx_1, \vx_2$, and with a correlation $\rho$ between $\vx_1, \vx_2$. That is, the covariance matrix $\Sigma \in \R^{4 \times 4}$ has the following form:
\begin{equation*}
    \Sigma = \begin{pmatrix}
        1 & \rho & 0 & 0 \\
        \rho & 1 & 0 & 0 \\
        0 & 0 & 1 & 0 \\
        0 & 0 & 0 & 1 \\
    \end{pmatrix}.
\end{equation*}
The binary label $\vy \in \{0, 1\}$ for each input is sampled from a Bernoulli distribution with probability $\sigma(\beta^\top \vx)$, where $\sigma(\cdot)$ is the sigmoid function and $\beta^\top = [5, 0, 3, 1]$. Unless stated otherwise, we consider $\rho = 1$,
% $\rho = 1 - 10^{-7}$,
zeros are used for baseline feature removal, and the exact
% oracle
Gaussian distributions are used for marginal and conditional feature removal with $20{,}000$ samples.

A logistic regression classifier was trained with $500$ samples, and removal-based feature attributions were computed for $50$ test samples (\ie explicands) with and without input perturbation. In our experiment, perturbations are randomly sampled from standard Gaussian distributions with $\ell_2$ norms normalized to values ranging between $0$ to $2$ with a total of $50$ different values. For each perturbation norm, $50$ perturbations are generated and applied to each explicand, resulting in $2{,}500$ total explicand-perturbation pairs. As illustrated in \Cref{fig:synthetic_input_pert_density_shapley}, a line is drawn to map
% upper bound
the maximum $\ell_2$ difference between original and perturbed attributions at each
% value of the
perturbation norm. The slope of such a line estimates the
% linear
dependency of the upper bound of attribution differences with respect to the perturbation norm. \Cref{theorem:input} guarantees a specific slope, and this experiment aims to verify this property.
% the slope to be bounded by $g(\text{removal}) \cdot h(\text{summary})$, and this experiment aims to verify this theoretical guarantee.

Furthermore, note that the sigmoid function $\sigma(\cdot)$ is $1/4$-Lipschitz because its largest derivative is $1/4$. A logistic regression classifier is a composition of a linear function and sigmoid function $\sigma(\cdot)$, so the model is $L$-Lipschitz with $L = \norm{2}{\hat{\beta}} / 4$, where $\hat{\beta}$ is the trained model coefficients. Finally, because the input features follow a Gaussian distribution, the total variation constant $M$ derived in \Cref{prop:gaussian_tv} applies here. Hence, we have specific values for all terms in \Cref{theorem:input} bound,
% the theoretical robustness bound under input perturbation,
which allows a direct comparison between theoretical and empirical results.

\begin{figure}[h]
    \centering
    \includegraphics[width=\textwidth]{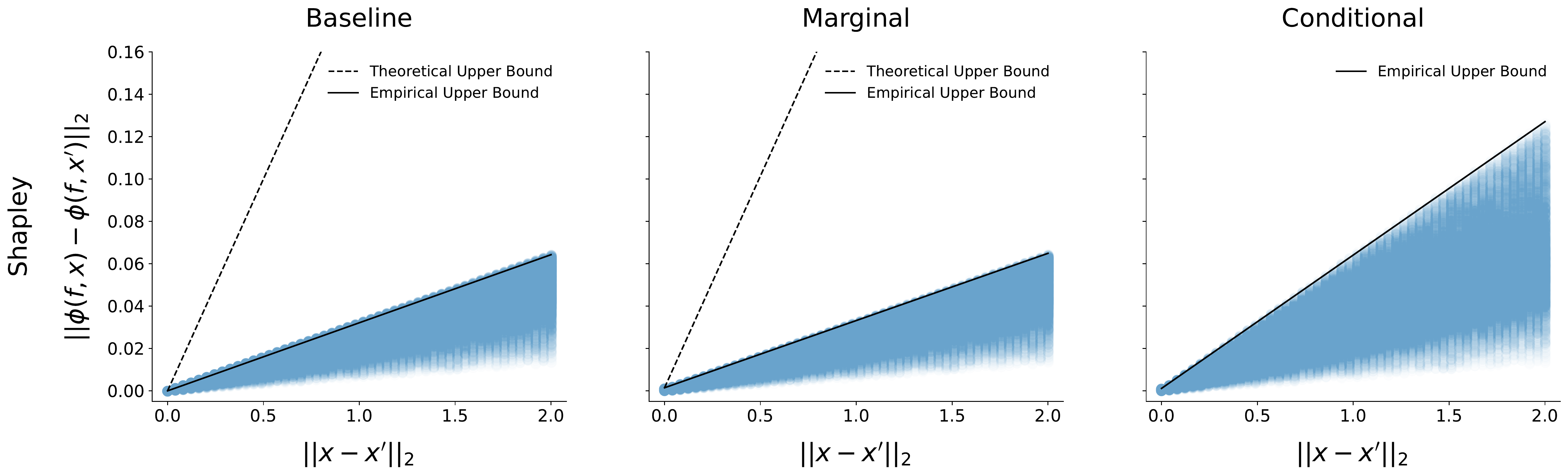}
    \caption{Shapley attribution differences under input perturbation with various perturbation norms in our synthetic data experiment. Lines bounding the maximum attribution difference at each perturbation norm are shown as empirical upper bounds. Theoretical upper bounds are included for baseline and marginal feature removal and omitted for conditional feature removal because it is infinite for $\rho = 1$.
    % too large with the default setting $\rho = 1 - 10^{-7}$.
    } \label{fig:synthetic_input_pert_density_shapley}
\end{figure}

\textbf{Results.} As shown in \Cref{fig:synthetic_input_pert_density_shapley}, for all three feature removal approaches, the theoretical upper bounds for the Shapley value are indeed greater than the corresponding empirical upper bounds. We also observe that the empirical upper bound with conditional feature removal exceeds those with baseline and marginal feature removal, as expected due to our results involving the conditional distribution's total variation distance (\Cref{lemma:conditional}).
% by our theoretical results due to the additional term related to total variation when features are removed using conditional distributions.

Moreover, we vary the Lipschitz constant $L$ of the trained logistic regression classifier by a factor of $\alpha = \{0.05, 0.1, 0.5, 1, 2\}$, which is done by multiplying the trained coefficients $\hat{\beta}$ by $\alpha$. \Cref{fig:synthetic_input_pert_alpha_rho_shapley} shows that the perturbation norm-dependent slope of the empirical bound increases linearly with $\alpha$, aligning with our theoretical results. Also, \Cref{prop:gaussian_tv} suggests that increasing $\rho$ would lead to a larger upper bound for conditional feature removal, which is observed in the empirical bound with $\rho = \{0, 0.25, 0.5, 0.75, 1\}$ (see \Cref{fig:synthetic_input_pert_alpha_rho_shapley} right).
% $\rho = \{0, 0.25, 0.5, 0.75, 1 - 10^{-7}\}$.

Overall, these empirical results validate our theoretical bound from \Cref{theorem:input}.
% our theoretical robustness bound under input perturbation is empirically validated.
We note that gaps between our theoretical and empirical bounds exist, likely because we use the global Lipschitz constant to provide guarantee under all input perturbation strengths, instead of local constants for specific input regions and perturbation strengths, and also due to looseness in the total variation constant $M$. For example, our bound for multivariate Gaussian distributions in \Cref{prop:gaussian_tv} suggests that $M = \infty$ when $\rho = 1$, which is far more conservative than our empirical results (see \Cref{fig:synthetic_input_pert_density_shapley} right); this is due to looseness in Pinsker's inequality (\Cref{app:prediction_input}). Similar results are observed for Occlusion, Banzhaf, RISE, and LIME attributions, and these are included in \Cref{sec:additional_synthetic_results}.

\begin{figure}[h]
    \centering
    \includegraphics[width=\textwidth]{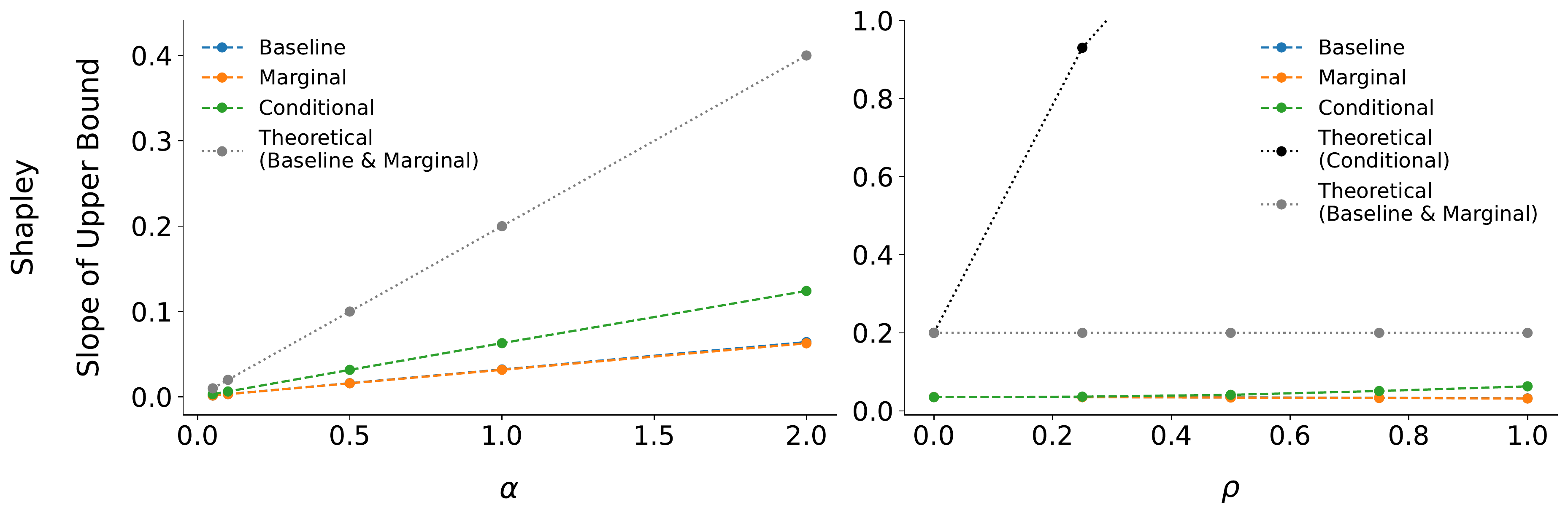}
    \vskip -0.07in
    \caption{Slopes of empirical and theoretical upper bounds with respect to the input perturbation norm, for Shapley value attribution difference in the synthetic experiment. The parameter $\alpha$ varies the logistic regression Lipschitz constant, and $\rho$ varies the correlation between the two synthetic features $\vx_1, \vx_2$.}
    \label{fig:synthetic_input_pert_alpha_rho_shapley}
\end{figure}

\subsection{Synthetic data with model perturbation} \label{sec:synthetic_model_perturbation}

Here, we construct a logistic regression classifier as well as a perturbed version such that their functional distance on the data manifold is small. We use this construction to empirically verify that removing features with the conditional distribution leads to attributions that are more robust against model perturbation, as suggested by \Cref{theorem:model}.

\textbf{Setup.} The synthetic data generation follows that of \Cref{sec:synthetic_input_perturbation}. A logistic regression classifier $f$ is constructed with coefficients $[5, 0, 3, 1]$, and its perturbed counterpart $f'$ has coefficients $[0, 5, 3, 1]$. Because the synthetic features $\vx_1, \vx_2$ are identical with the default value $\rho = 1$, we have the functional distance $\norm{\infty, \cX}{f - f'} = 0$, so our theoretical bound suggests that $\norm{2}{\phi(f, x) - \phi(f', x)} = 0$ for conditional feature removal. When we consider smaller values for $\rho$, we still expect better robustness for conditional rather than marginal removal due to \Cref{lemma:model}, because significant probability mass lies in a subdomain where $f$ and $f'$ are functionally close (near the hyperplane where $\vx_1 = \vx_2$). Feature attributions are computed for $100$ explicands with the same feature removal settings as in \Cref{sec:synthetic_input_perturbation}.

\begin{figure}[h]
    \centering
    \includegraphics[width=\textwidth]{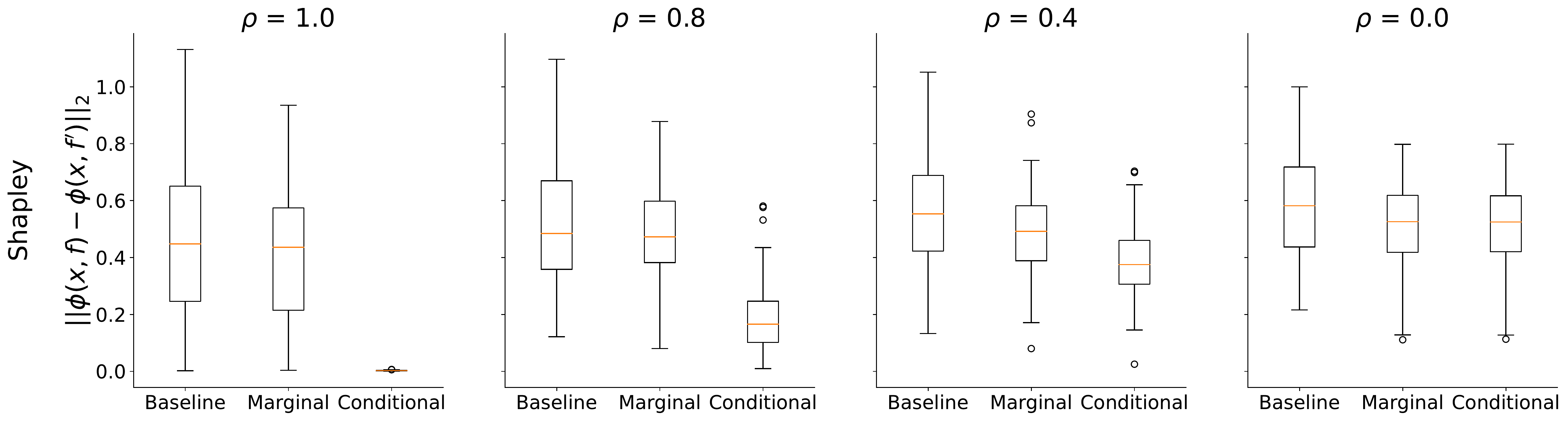}
    \caption{Shapley value attribution differences between the logistic regression classifiers $f, f'$ as constructed in \Cref{sec:synthetic_model_perturbation} with baseline, marginal, and conditional feature removal, and varying correlation $\rho$.
    % between synthetic features $\vx_1, \vx_2$.
    }
    \label{fig:synthetic_model_pert_shapley}
\end{figure}

\textbf{Results.} As suggested by our theoretical bound for model perturbation when $\rho = 1$, the Shapley attribution difference between $f, f'$ is near zero when features are removed using their conditional distribution, whereas the attribution difference can be large with baseline and marginal feature removal (\Cref{fig:synthetic_model_pert_shapley}). When we decrease $\rho$ in the synthetic data generation, the models $f, f'$ become more likely to generate different outputs,
% on the data manifold,
and we observe that the robustness gap between conditional vs. baseline and marginal feature removal reduces. These empirical results corroborate \Cref{lemma:model}.
% our theoretical bound for model perturbation.
Similar results are observed for Occlusion, Banzhaf, RISE, and LIME attributions and are included in \Cref{sec:additional_synthetic_results}. It is worth noting that the arbitrarily large bound for baseline and marginal feature removal underpins the adversarial attacks on LIME and SHAP demonstrated by \citet{slack2020fooling}. Our theory and synthetic experiments for model perturbation are similar to those in \citet{frye2021shapley}, but we provide a unified analysis for removal-based attribution methods instead of focusing on the Shapley value, and we consider cases where models disagree even on the data manifold (\Cref{lemma:model}).

\subsection{Implication I: robust attributions for regularized neural networks} \label{sec:implication_1}

Our theoretical upper bound for input perturbation decreases with the model's Lipschitz constant (\Cref{theorem:input}), suggesting that training a model with better Lipschitz regularity
% lower Lipschitz constant
% more Lipschitz continuous model
can help
% is one way to
obtain more robust removal-based feature attributions. Weight decay is one way to achieve this, as it encourages lower Frobenius norm for the individual layers in
% weights of
a neural network, and the product of these Frobenius norms upper bounds the network's Lipschitz constant~\cite{yoshida2017spectral}. We therefore examine whether neural networks trained with weight decay produce attributions that are more robust against input perturbation.

\textbf{Datasets.} The UCI white wine quality dataset~\cite{cortez2009modeling, dua2019uci} consists of $4{,}898$ samples of white wine, each with 11 input features for predicting the wine quality (whether the rating $> 6$). Two subsets of $500$ samples are randomly chosen as the validation and test set. For datasets with larger input dimension, we use MNIST~\cite{lecun1998gradient}, CIFAR-10~\cite{krizhevsky2009learning}, and ten classes of ImageNet~\cite{deng2009imagenet} (i.e., Imagenette\footnote{\url{https://github.com/fastai/imagenette}}). From the official training set of MNIST and CIFAR-10, $10{,}000$ images are randomly chosen as the validation set.

\textbf{Setup.} Fully connected networks (FCNs) and convolutional networks (CNNs) are trained with a range of weight decay values for the wine quality dataset and MNIST, respectively, where we use the values $\{0, 0.001, 0.0025, 0.005, 0.01\}$. ResNet-18 networks~\cite{he2016deep} are trained for CIFAR-10, with weight decay values $\{0.0075, 0.01, 0.025, 0.05, 0.075\}$. ResNet-50 networks pre-trained on ImageNet are fine-tuned for Imagenette, with weight decay values $\{0.005, 0.01, 0.025, 0.05\}$. The test set accuracy does not differ much with these weight decay values for the wine quality dataset, MNIST, and CIFAR-10 (\Cref{fig:weight_decay_test_acc} and \Cref{fig:cifar_imagenette_weight_decay_test_acc}), but large weight decay values can worsen the performance of ResNet-50 on Imagenette (\Cref{fig:cifar_imagenette_weight_decay_test_acc}). 
% first trained without weight decay for the wine quality and MNIST dataset, respectively.
Details on model architectures and other hyperparameters
% tuning
are in \Cref{sec:implementation_details}. 
% After the best hyperparameters are identified,
% FCNs and CNNs are then also trained with weight decay values in $\{0.001, 0.0025, 0.005, 0.01\}$.
% using their respective optimal hyperparameters.
Removal-based attributions are computed for the $500$ test samples and all features in the wine quality dataset, and for $100$ test images with non-overlapping patches of size $4 \times 4$ as features in MNIST and CIFAR-10, and of size $16 \times 16$ as features in Imagenette (see \Cref{app:grouping} for a discussion of how our theory extends to the use of feature groups). For the wine quality dataset, removal-based attributions are computed exactly with all $2^{11}$ feature subsets. For MNIST, certain attributions must be estimated rather than calculated exactly:
% Shapley and Banzhaf values are computed with a least squares regression similar to KernelSHAP~\cite{lundberg2017unified}; 
Occlusion attributions are computed exactly, whereas Shapley, Banzhaf, LIME, and RISE are estimated with $10{,}000$ random feature sets, and Shapley and Banzhaf use a least squares regression similar to KernelSHAP~\cite{lundberg2017unified}. For CIFAR-10 and Imagenette, we focus on Occlusion, RISE, and Shapley values, computed with the same settings as for MNIST, except RISE and Shapley values are estimated with $20{,}000$ random feature sets for Imagenette due to its large number of features.

\begin{figure}[h]
    \centering
    \includegraphics[width=\textwidth]{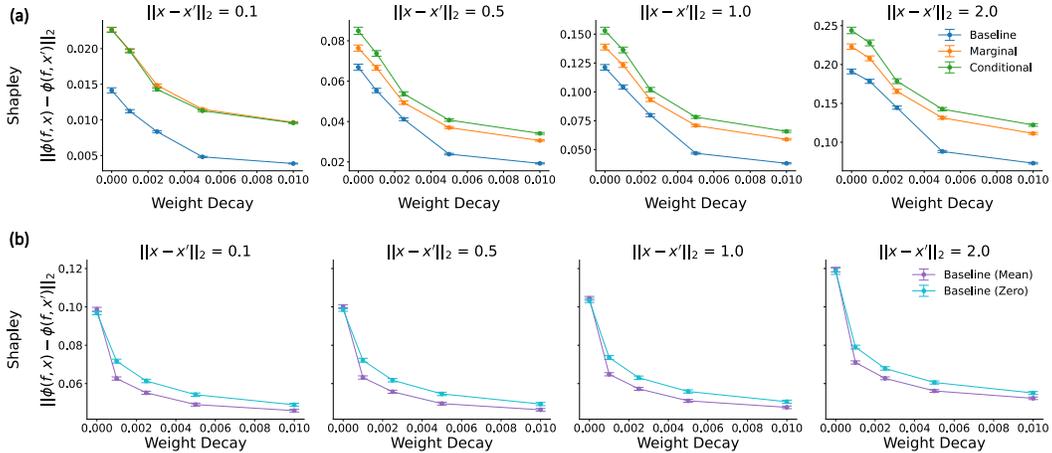}
    \caption{Identical to \Cref{fig:real_input_pert_shapley_main} and shown here again for readability. Shapley attribution difference for networks trained with increasing weight decay, under input perturbations with perturbation norms $\{0.1, 0.5, 1, 2\}$. The results include (a)~the wine quality dataset with FCNs and baseline (replacing with training set minimums), marginal, and conditional feature removal; and (b)~MNIST with CNNs and baseline feature removal with either training set means or zeros. Error bars show the mean and $95\%$ confidence intervals across explicand-perturbation pairs.}
    \label{fig:real_input_pert_shapley}
\end{figure}

\begin{figure}[H]
    \centering
    \includegraphics[width=\textwidth]{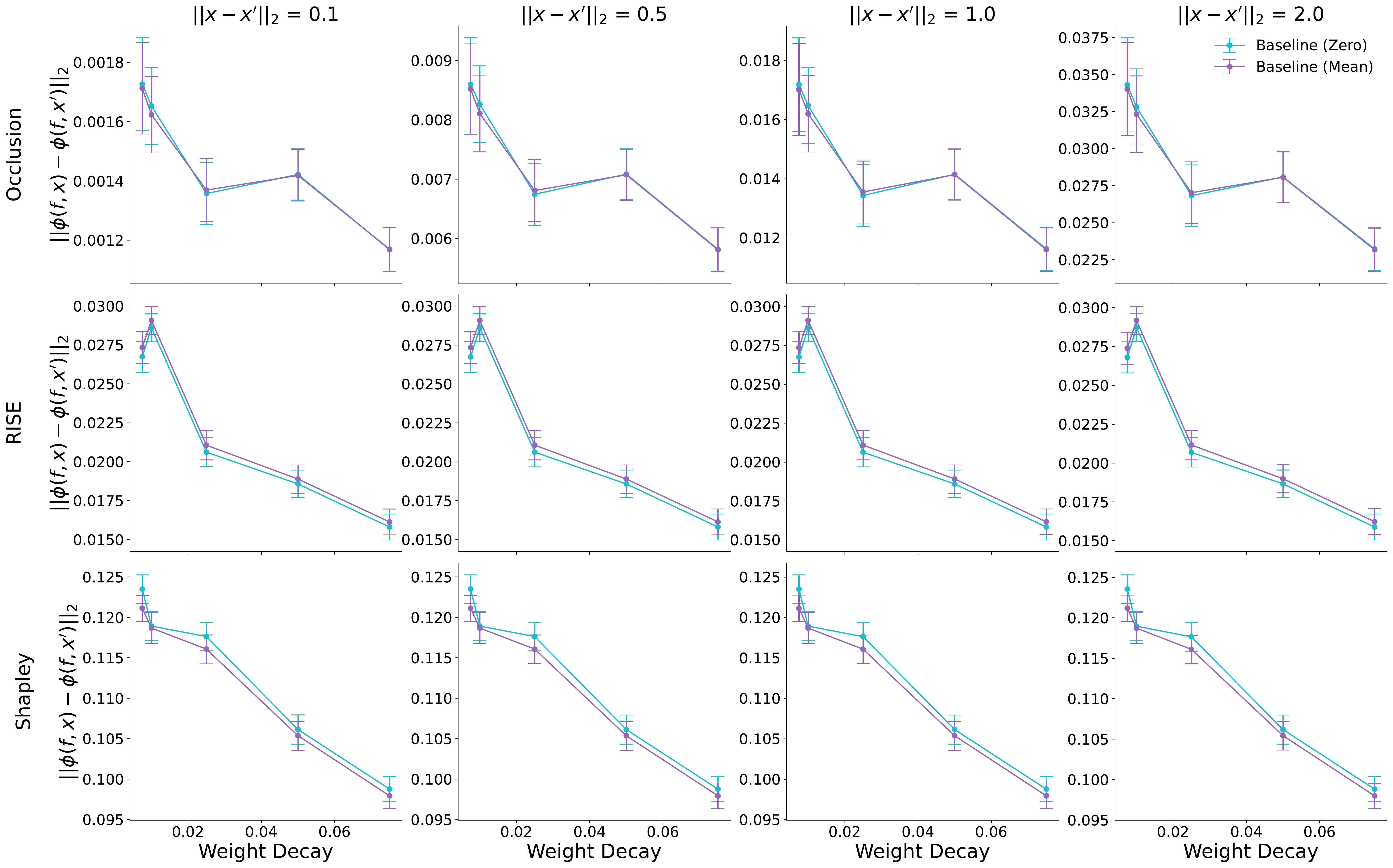}
    \caption{Attribution difference for ResNet-18 networks trained on CIFAR-10 with increasing weight decay, under input perturbations with perturbation norms $\{0.1, 0.5, 1, 2\}$. Error bars show the means and $95\%$ confidence intervals across explicand-perturbation pairs.}
    \label{fig:cifar_input_pert_l2}
\end{figure}

When using baseline feature removal, features are replaced with the training set minimums for the wine quality dataset, with the training set means or zeros for MNIST and CIFAR-10, and with zeros (identical to the training set means due to normalization) for Imagenette. The wine quality training set is used for removing features with their marginal distributions, while a conditional variational autoencoder (CVAE) is trained following the procedure in \citet{frye2021shapley} to approximate feature removal with the conditional distribution (details in \Cref{sec:implementation_details}). For both marginal and conditional feature removal in the wine quality dataset, $250$ samples are used.
% With image data and CNN,
Marginal and conditional feature removal are computationally expensive and less common for image data, so we use only baseline feature removal as is often done in practice. Finally, input perturbations are generated as in \Cref{sec:synthetic_input_perturbation}, with $10$ random perturbations applied per explicand and with perturbation norms $\{0.1, 0.5, 1, 2\}$ for the wine quality dataset, MNIST, and CIFAR-10. For Imagenette, $5$ random perturbations with norms $\{50, 100, 200, 400\}$ are applied per explicand to obtain similar perturbation strength per input.

\textbf{Results.} \Cref{fig:real_input_pert_shapley} shows that Shapley value attribution differences decrease with increasing weight decay, across different feature removal techniques, for both the wine quality dataset and MNIST, and for all perturbation norms in $\{0.1, 0.5, 1, 2\}$. This demonstrates the practical implication that removal-based attributions
% for neural networks trained with increasing weight decay can indeed be
can be made more robust against input perturbation by training with weight decay. Interestingly, weight decay has also been shown to improve the robustness of gradient-based attributions~\cite{dombrowski2022towards}.
% ; it is perhaps more natural that it should help in our setting, due to its connection to the model's Lipschitz constant
In \Cref{sec:additional_practical_implications}, similar results are observed for Occlusion, Banzhaf, RISE, and LIME attributions, and also when we measure explanation similarity using Pearson or Spearman rank correlation rather than $\ell_2$ distance. For ResNet-18 networks trained on CIFAR-10 and ResNet-50 networks trained on Imagenette, similar trends are observed for Occlusion, RISE, and Shapley attributions (\Cref{fig:cifar_input_pert_l2} and \Cref{fig:imagenette_input_pert_l2}).

\begin{figure}[H]
    \centering
    \includegraphics[width=\textwidth]{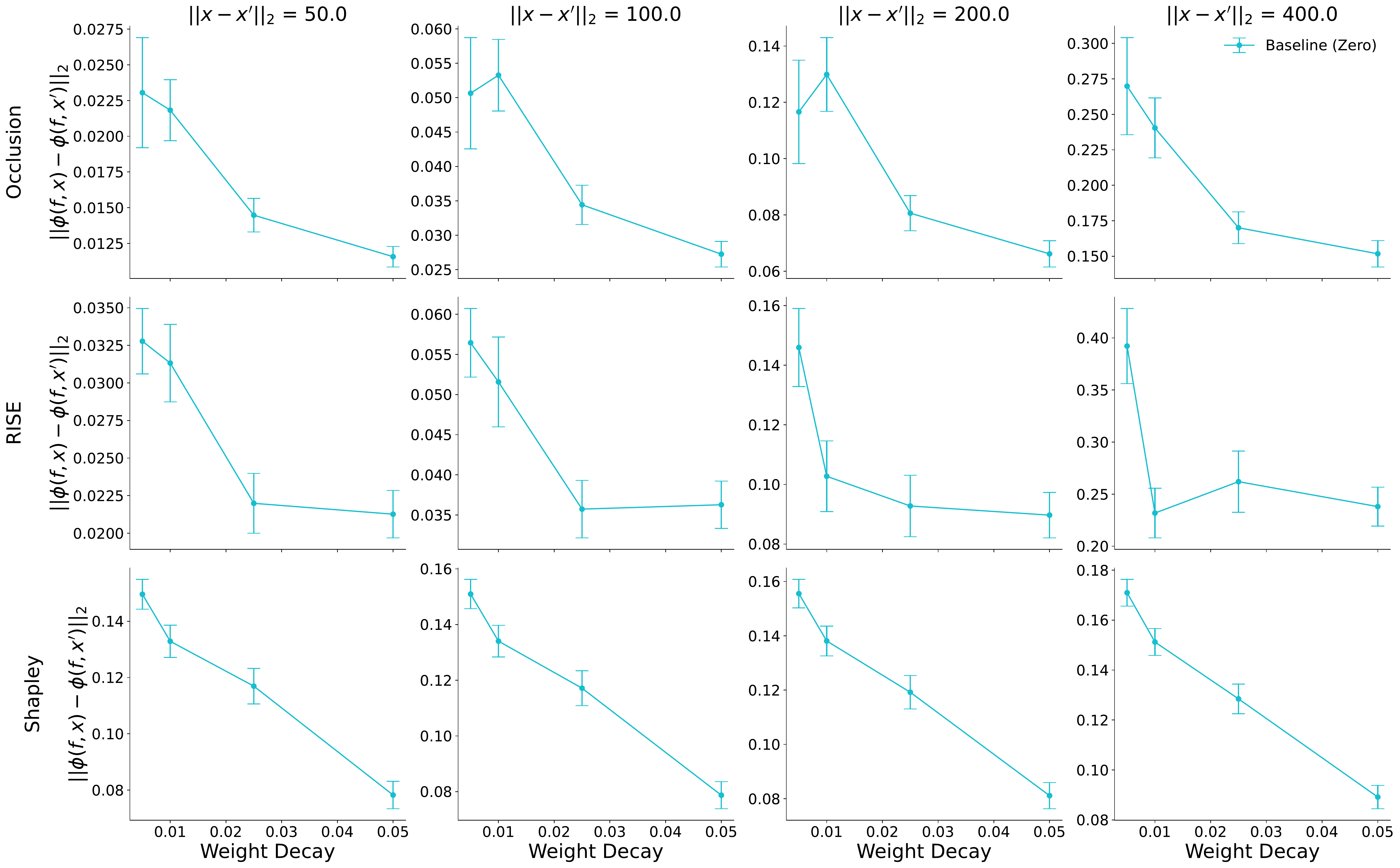}
    \caption{Attribution difference for ResNet-50 networks trained on Imagenette with increasing weight decay, under input perturbations with perturbation norms $\{50, 100, 200, 400\}$. Error bars show the means and $95\%$ confidence intervals across explicand-perturbation pairs.}
    \label{fig:imagenette_input_pert_l2}
\end{figure}

% for all removal-based methods discussed in this paper with Pearson's correlation and Spearman's rank correlation as similarity metrics instead of the L2 norm.

\subsection{Implication II: sanity checks for removal-based attributions}

The robustness bound for model perturbation in \Cref{theorem:model} suggests that the attribution difference between an intact and perturbed model should be
% is guaranteed to be
small if the model perturbation is small. Similarly, we expect that if the perturbation intensifies, the attribution differences may become larger.
% can potentially increase as model perturbation intensifies.
One way to perturb a neural network with increasing intensity is to sequentially randomize its parameters from the output to input layer, a procedure known as \textit{cascading randomization} introduced by \citet{adebayo2018sanity} for sanity checking feature attributions. 
% This procedure corresponds to the parameter randomization test proposed by \citet{adebayo2018sanity}, which is a sanity check for whether a feature attribution method can be used for downstream applications requiring sensitivity to the model.
Here, we study whether removal-based attributions indeed pass the parameter randomization check, as suggested by our theoretical bound.

\begin{figure}[h]
    \centering
    \includegraphics[width=\textwidth]{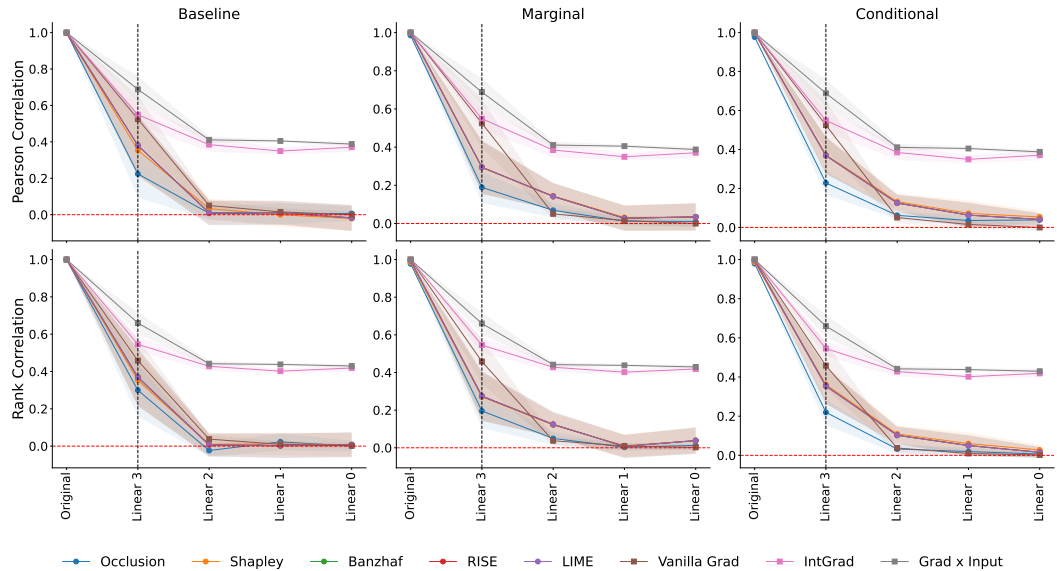}
    \caption{Identical to \Cref{fig:wine_quality_sanity_checks_main} and shown here again for readability. Sanity checks for attributions using cascading randomization for the FCN trained on the wine quality dataset. Attribution similarity is measured by Pearson correlation and Spearman rank correlation. We show the mean and $95\%$ confidence intervals
    % of the similarity between intact and perturbed attributions
    across $10$ random seeds.
    % runs of model randomization are shown.
    }
    \label{fig:wine_quality_sanity_checks}
\end{figure}

\begin{figure}[h]
    \centering
    \includegraphics[width=0.85\textwidth]{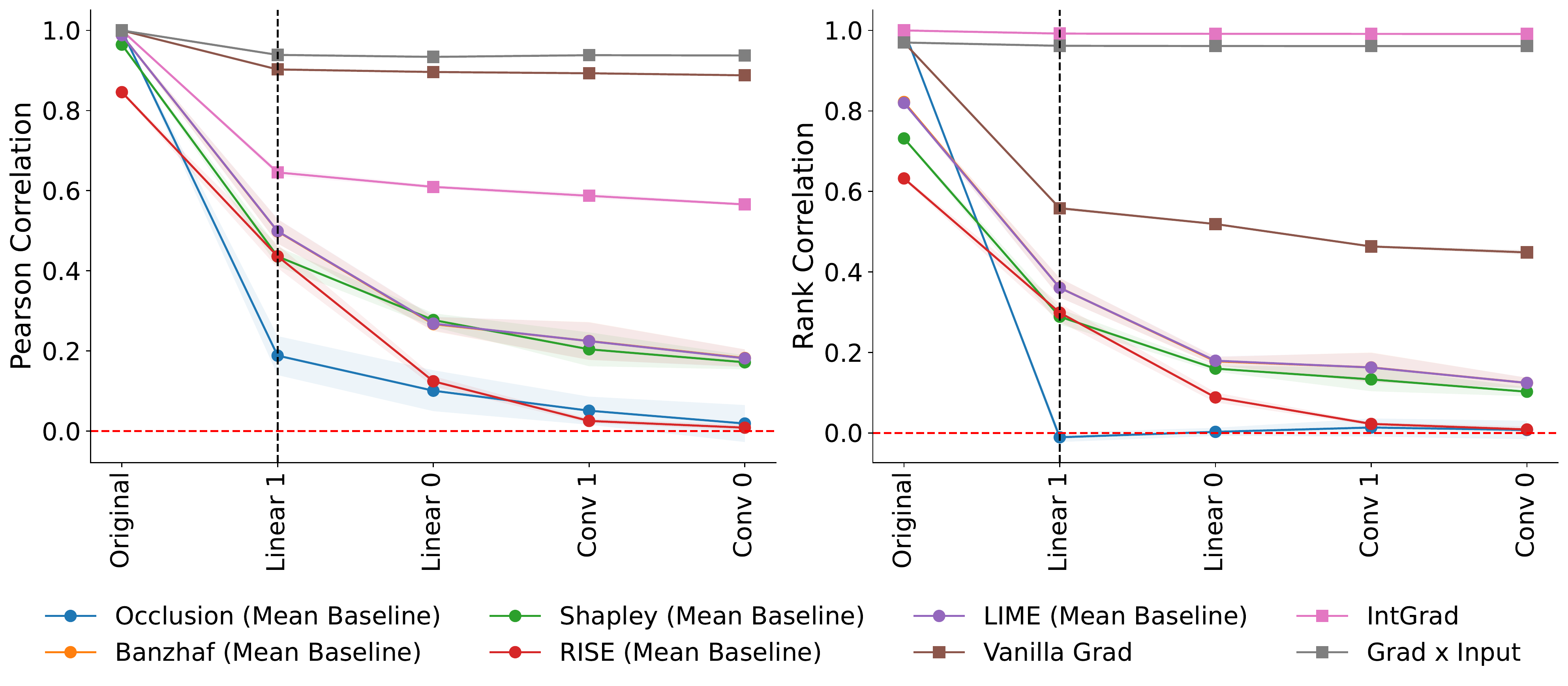}
    \caption{Sanity checks for attributions using cascading randomization for the CNN trained on MNIST. Attribution similarity is measured by Pearson correlation and Spearman rank correlation. We show the mean and $95\%$ confidence intervals 
    % of the similarity between original and perturbed attributions
    across $10$ random seeds.
    % runs of model randomization are shown.
    }
    \label{fig:mnist_sanity_checks}
\end{figure}

\textbf{Datasets.} We use the same datasets and splits as in \Cref{sec:implication_1}.
% The datasets from \Cref{sec:implication_1} are used and processed in the same way.

\textbf{Setup.} The best performing networks from 
% The optimal FCN and CNN without weight decay from
\Cref{sec:implication_1} are used here. The settings for removal-based attributions are the same as in \Cref{sec:implication_1}, except $50{,}000$ random feature sets are used for more stable Shapley estimations in Imagenette. Parameters for each network are randomized in a cascading fashion from the output to input layer as in \citet{adebayo2018sanity}, with attributions computed after randomizing each layer and compared to the original attributions using Pearson correlation and Spearman rank correlation. Randomization is performed using the layer's default initialization. Feature attributions with the original model are computed twice and compared to capture variations due to sampling in feature removal and in attribution estimation using feature sets.
% ; we use a large number of samples to ensure that these differences are minimal. 
This experiment is run $10$ times with different random seeds for the wine quality dataset, MNIST, and CIFAR-10, and $3$ times for Imagenette. Gradient-based attribution methods such as Vanilla Gradients~\cite{simonyan2013deep}, Integrated Gradients~\cite{sundararajan2017axiomatic}, and Grad $\times$ Input~\cite{shrikumar2017learning} are included as references, as they were the focus of the original work on sanity checks~\cite{adebayo2018sanity}.

\textbf{Results.} As suggested by our robustness bound for model perturbation, the original and perturbed removal-based attributions become dissimilar as more parameters are randomized,
% in the FCN trained on the wine quality dataset are randomly perturbed,
leading to near-zero correlations when the entire network is re-initialized (\Cref{fig:wine_quality_sanity_checks}). This shows that removal-based attributions reliably pass the model randomization sanity check. In contrast, we observe that for Integrated Gradients and Grad $\times$ Input, the correlation between intact and perturbed attributions remains around $0.5$ even when the entire network is randomized. Similarly, for the CNN trained on MNIST, the original and perturbed removal-based attributions have lower correlations as more parameters are randomized, whereas the correlations decrease only slightly for Integrated Gradients and Grad $\times$ Input (\Cref{fig:mnist_sanity_checks}). Similar trends are observed for the ResNet-18 trained on CIFAR-10 and ResNet-50 trained on Imagenette (\Cref{fig:cifar_sanity_checks_mean_baseline}, \Cref{fig:imagenette_sanity_checks_zero_baseline}, and \Cref{fig:cifar10_sanity_checks_zero_baseline}).

\begin{figure}[h]
    \centering
    \includegraphics[width=0.9\textwidth]{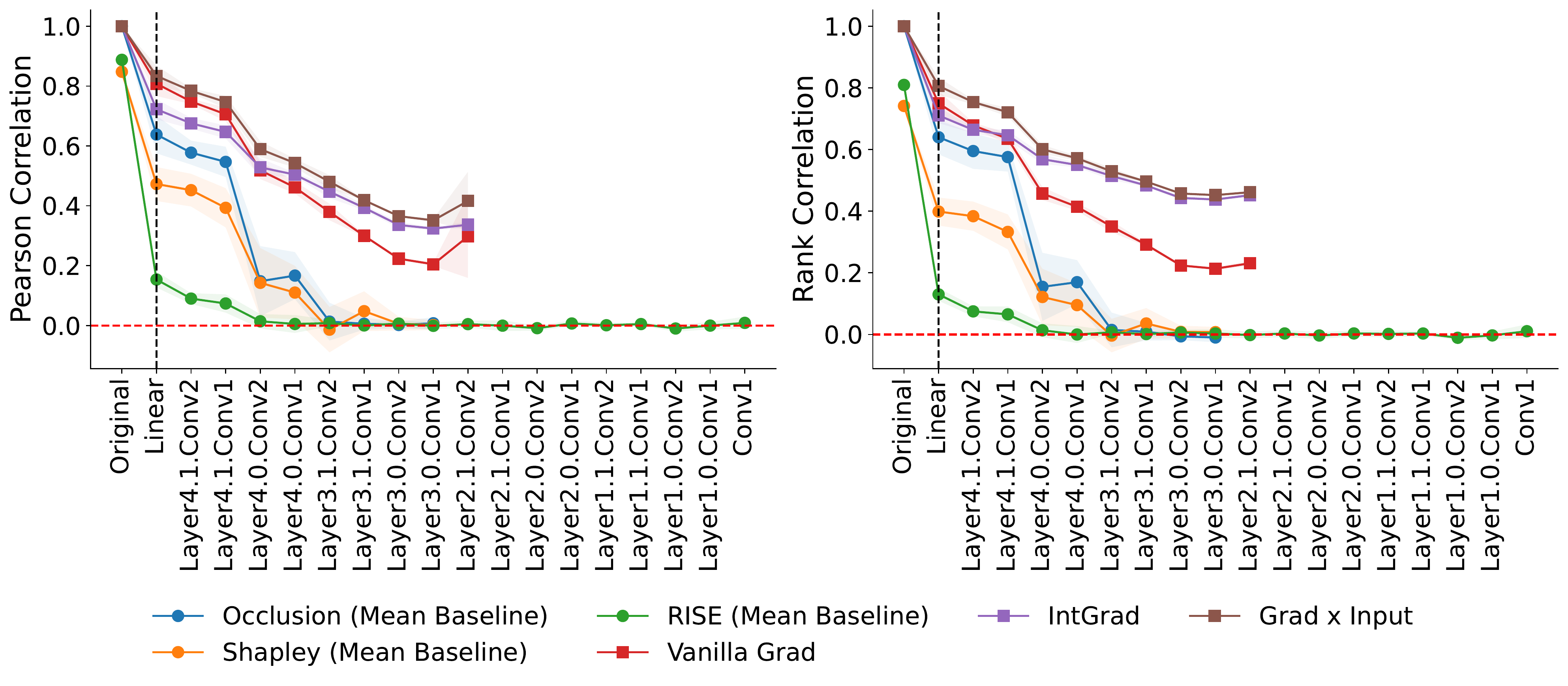}
    \caption{Sanity checks for attributions using cascading randomization for the ResNet-18 trained on CIFAR-10. Features are removed by replacing them with training set means. Attribution similarity is measured by Pearson correlation and Spearman rank correlation. We show the means and $95\%$ confidence intervals across $10$ random seeds. Missing points correspond to undefined correlations due to constant attributions under model randomization.
    }
    \label{fig:cifar_sanity_checks_mean_baseline}
\end{figure}

\begin{figure}[h]
    \centering
    \includegraphics[width=0.9\textwidth]{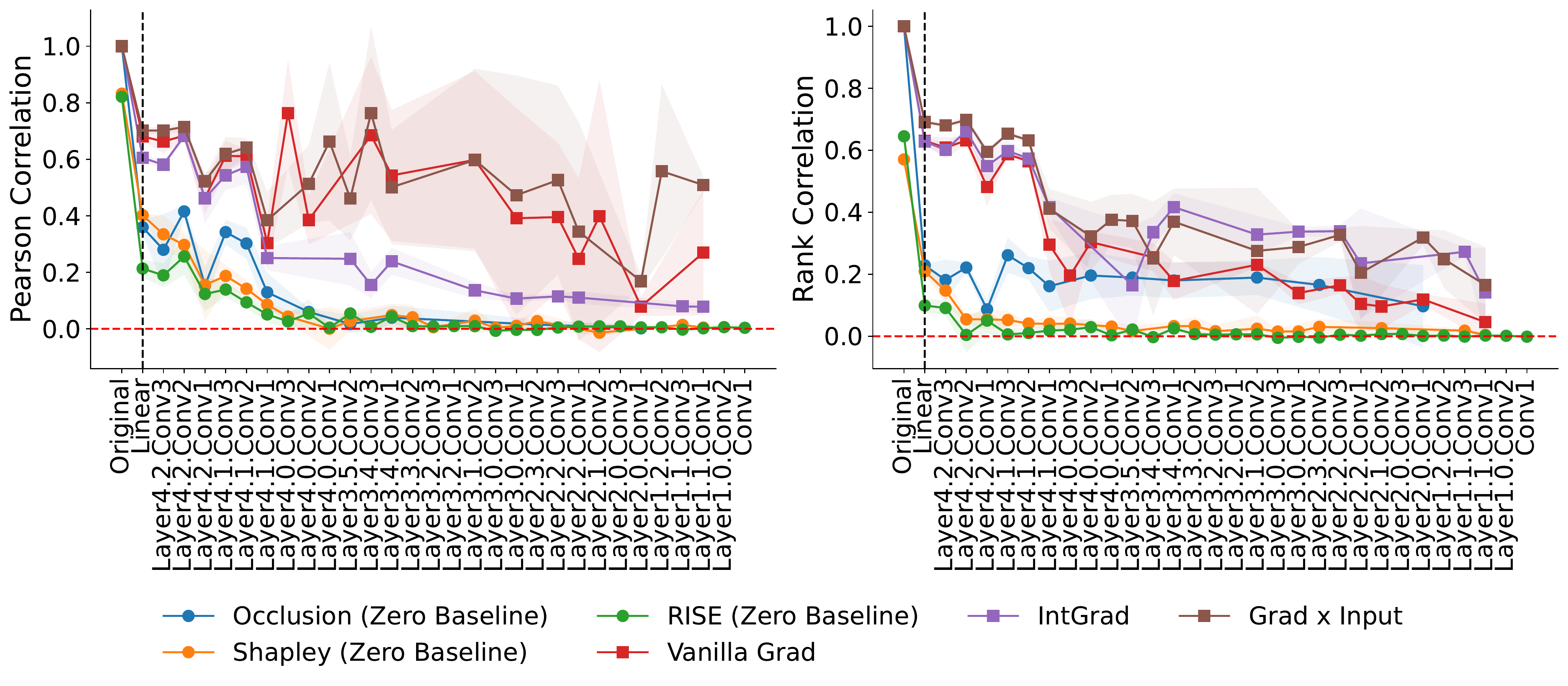}
    \caption{Sanity checks for attributions using cascading randomization for the ResNet-50 trained on Imagenette. Features are removed by replacing them with zeros (identical to training set means due to normalization). Attribution similarity is measured by Pearson correlation and Spearman rank correlation. We show the means and $95\%$ confidence intervals across $3$ random seeds. Missing points correspond to undefined correlations due to constant attributions under model randomization.
    }
    \label{fig:imagenette_sanity_checks_zero_baseline}
\end{figure}

\clearpage
\section{Proofs: prediction robustness to input perturbations} \label{app:prediction_input}

We now re-state and prove our results from \Cref{sec:robustness_input}.

% \textbf{Lemma 1: baseline and marginal removal.}
\begin{applemma} % \label{lemma:marginal}
    When removing features using either the \textbf{baseline} or \textbf{marginal} approaches, the prediction function $f(\xS)$ for any feature set $\xS$ is $L$-Lipschitz continuous:
    \begin{equation*}
        |f(\xS) - f(\xS')| \leq L \cdot \norm{2}{\xS - \xS'} \quad \forall \; \xS, \xS' \in \R^{|S|}.
    \end{equation*}
\end{applemma}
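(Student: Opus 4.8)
The plan is to exploit the fact that in both the baseline and marginal cases, the imputation distribution $q(\vxbarS)$ does \emph{not} depend on the observed values $\xS$, so the $L$-Lipschitz continuity of the underlying model $f$ (\Cref{asmp:lipschitz}) transfers directly to the reduced function $f(\xS)$. The two cases are handled separately but by the same mechanism.

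First, for the \textbf{baseline} approach, $q(\vxbarS)$ is a Dirac delta at $b_{\bar S}$, so $f(\xS) = f(\xS, b_{\bar S})$. I would then apply \Cref{asmp:lipschitz} to the two concatenated full inputs $(\xS, b_{\bar S})$ and $(\xS', b_{\bar S})$, which agree on the $\bar S$ coordinates. Because these inputs differ only in the $S$ block, their Euclidean distance equals $\norm{2}{\xS - \xS'}$, and the claim follows immediately.

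For the \textbf{marginal} approach, $f(\xS) = \int f(\xS, \xbarS) p(\xbarS) d\xbarS$ with the \emph{same} weighting $p(\xbarS)$ used at both $\xS$ and $\xS'$. I would write the difference as a single integral $\int [f(\xS, \xbarS) - f(\xS', \xbarS)] p(\xbarS) d\xbarS$, move the absolute value inside via the triangle inequality for integrals, and bound the integrand pointwise using \Cref{asmp:lipschitz}. As in the baseline case, for each fixed $\xbarS$ the two full inputs differ only in the $S$ block, so $\norm{2}{(\xS, \xbarS) - (\xS', \xbarS)} = \norm{2}{\xS - \xS'}$; pulling this constant out of the integral and using $\int p(\xbarS) d\xbarS = 1$ then yields the bound.

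The argument is essentially mechanical and I do not anticipate a genuine obstacle, but the one point requiring care is justifying the exchange of absolute value and integral (the triangle inequality for integrals, or equivalently Jensen's inequality) together with the observation that the fixed $\bar S$ coordinates contribute nothing to the Euclidean distance. The conceptual crux shared by both cases is that a removal distribution independent of $\xS$ allows the per-point Lipschitz bound to survive the averaging—precisely the property that fails for the conditional approach treated in \Cref{lemma:conditional}, where $q(\vxbarS)$ itself varies with $\xS$ and so contributes an extra total-variation term.
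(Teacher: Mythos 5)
Your proposal is correct and follows essentially the same argument as the paper: both rest on the observation that $q(\vxbarS)$ is independent of $\xS$ for the baseline and marginal approaches, then write the difference as a single integral, apply the triangle inequality for integrals, bound the integrand pointwise using \Cref{asmp:lipschitz} (noting the two full inputs differ only in the $S$ coordinates), and use $\int q(\xbarS)\,d\xbarS = 1$. The only cosmetic difference is that you treat the baseline case separately, whereas the paper absorbs it into the unified integral formulation by viewing the Dirac delta as a particular choice of $q(\vxbarS)$.
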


\begin{proof}
    The key insight for this result is that removing features using either the baseline or marginal approach is equivalent to using a distribution $q(\vxbarS)$ that does not depend on the observed features. Because of that, we have:

    \begin{align*}
        |f(\xS) - f(\xS')| &= \Big| \int f(\xS, \xbarS) q(\xbarS) d\xbarS - \int f(\xS', \xbarS) q(\xbarS) d\xbarS  \Big| \\
        &\leq \int \big| f(\xS, \xbarS) - f(\xS', \xbarS) \big| \cdot q(\xbarS) d\xbarS \\
        &\leq L \cdot \norm{2}{\xS - \xS'} \cdot \int q(\xbarS) d\xbarS \\
        &= L \cdot \norm{2}{\xS - \xS'}.
    \end{align*}
\end{proof}

% \textbf{Lemma 2: conditional removal.}
\begin{applemma} % \label{lemma:conditional}
    When removing features using the \textbf{conditional} approach, the prediction function $f(\xS)$ for a feature set $\xS$ satisfies
    \begin{equation*}
        |f(\xS) - f(\xS')| \leq L \cdot \norm{2}{\xS - \xS'} + 2B \cdot d_{TV} \Big( p(\vxbarS \mid \xS), p(\vxbarS \mid \xS') \Big),
    \end{equation*}
    where the total variation distance is defined via the $L^1$ functional distance as
    \begin{equation*}
        d_{TV} \Big( p(\vxbarS \mid \xS), p(\vxbarS \mid \xS') \Big) \equiv \frac{1}{2} \Bignorm{1}{p(\vxbarS \mid \xS) - p(\vxbarS \mid \xS')}.
    \end{equation*}
    % see https://en.wikipedia.org/wiki/Lp_space#Lp_spaces_and_Lebesgue_integrals
\end{applemma}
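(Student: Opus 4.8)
The plan is to reduce everything to the two assumptions already in hand—$L$-Lipschitz continuity (\Cref{asmp:lipschitz}) and bounded predictions (\Cref{asmp:bounded})—by isolating the two distinct sources of discrepancy. Under the conditional approach we have $f(\xS) = \int f(\xS, \xbarS)\, p(\xbarS \mid \xS)\, d\xbarS$, so the difference $f(\xS) - f(\xS')$ is affected both by evaluating $f$ at shifted observed features and by integrating against two \emph{different} conditional densities $p(\vxbarS \mid \xS)$ and $p(\vxbarS \mid \xS')$. Unlike \Cref{lemma:marginal}, the imputation distribution now moves with the conditioning value, so the single-step bound used there does not apply directly.

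The key step is a telescoping decomposition: insert the hybrid term $\int f(\xS', \xbarS)\, p(\xbarS \mid \xS)\, d\xbarS$, which pairs the perturbed model input with the original conditional density. This writes $f(\xS) - f(\xS')$ as a sum of two integrals—one holding the density fixed while varying the model argument, the other holding the model argument fixed while varying the density:
\begin{align*}
    f(\xS) - f(\xS') &= \int \big[ f(\xS, \xbarS) - f(\xS', \xbarS) \big]\, p(\xbarS \mid \xS)\, d\xbarS \\
    &\quad + \int f(\xS', \xbarS)\, \big[ p(\xbarS \mid \xS) - p(\xbarS \mid \xS') \big]\, d\xbarS.
\end{align*}

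For the first integral I would apply the triangle inequality and then \Cref{asmp:lipschitz}, noting that $(\xS, \xbarS)$ and $(\xS', \xbarS)$ differ only in their $S$-coordinates so that $\norm{2}{(\xS,\xbarS) - (\xS',\xbarS)} = \norm{2}{\xS - \xS'}$; since $p(\vxbarS \mid \xS)$ is a density that integrates to $1$, this term is bounded by $L \cdot \norm{2}{\xS - \xS'}$, exactly as in the baseline/marginal case. For the second integral I would move the absolute value inside, bound $|f(\xS', \xbarS)| \le B$ using \Cref{asmp:bounded}, and recognize the remaining integral as $\norm{1}{p(\vxbarS \mid \xS) - p(\vxbarS \mid \xS')} = 2\, d_{TV}\big(p(\vxbarS \mid \xS), p(\vxbarS \mid \xS')\big)$, giving the bound $2B \cdot d_{TV}$. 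Adding the two bounds via the triangle inequality yields the claim.

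The main obstacle is conceptual rather than computational: choosing the intermediate term so that each resulting integral is controlled by exactly one assumption. Any valid hybrid (using either $\xS$ or $\xS'$ in the model argument) works, but one must ensure the term carrying the density difference multiplies a \emph{bounded} factor—otherwise the total variation distance cannot be extracted—while the term carrying the model difference is integrated against a genuine probability density so that its mass integrates to one and the Lipschitz bound detaches cleanly.
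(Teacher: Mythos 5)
Your proposal is correct and matches the paper's own proof essentially step for step: both insert the same hybrid term $f(\xS', \xbarS)\, p(\xbarS \mid \xS)$ to split the difference into a model-argument term (bounded by $L \cdot \norm{2}{\xS - \xS'}$ via Lipschitz continuity and the fact that the density integrates to one) and a density-difference term (bounded by $2B \cdot d_{TV}$ via the prediction bound). There is no substantive difference in approach.
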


\begin{proof}
    In this case, we must be careful to account for the different distributions used when marginalizing out the removed features.

    \begin{align*}
        |f(\xS) - f(\xS')| &= \Big| \int f(\xS, \xbarS) p(\xbarS \mid \xS) d\xbarS - \int f(\xS', \xbarS) p(\xbarS \mid \xS') d\xbarS  \Big| \\
        &\leq \int \big| f(\xS, \xbarS) p(\xbarS \mid \xS) - f(\xS', \xbarS) p(\xbarS \mid \xS') \big| d\xbarS \\
        &\leq \int \big| f(\xS, \xbarS) p(\xbarS \mid \xS) - f(\xS', \xbarS) p(\xbarS \mid \xS) \big| d\xbarS \\
        &\quad + \int \big| f(\xS', \xbarS) p(\xbarS \mid \xS) - f(\xS', \xbarS) p(\xbarS \mid \xS') \big| d\xbarS \\
        &\leq \int \big| f(\xS, \xbarS) - f(\xS', \xbarS) \big| \cdot p(\xbarS \mid \xS) d\xbarS \\
        &\quad + \int \big| f(\xS', \xbarS) \big| \cdot \big| p(\xbarS \mid \xS) - p(\xbarS \mid \xS') \big| d\xbarS \\
        &\leq L \cdot \norm{2}{\xS - \xS'} \cdot \int p(\xbarS \mid \xS) d\xbarS + B \cdot \int \big| p(\xbarS \mid \xS) - p(\xbarS \mid \xS') \big| d\xbarS \\
        &= L \cdot \norm{2}{\xS - \xS'} + 2B \cdot d_{TV}\Big( p(\xbarS \mid \xS), p(\xbarS \mid \xS') \Big).
    \end{align*}
\end{proof}

% \textbf{Example 1}
\begin{appexample} % \label{prop:gaussian_tv}
    For a Gaussian random variable $\vx \sim \cN(\mu, \Sigma)$ with mean $\mu \in \R^d$ and covariance $\Sigma \in \R^{d \times d}$,
    % the property in
    \Cref{asmp:tvd} holds with $M = \frac{1}{2} \sqrt{\lambda_{\max}(\Sigma^{-1}) - \lambda_{\min}(\Sigma^{-1})}$.
    % Intuitively, this means that if one dimension is roughly a linear combination of the others, or if $\lambda_{\max}(\Sigma^{-1})$ is large, the conditional distribution can change quickly as a function of the conditioning variables.
    If $\vx$ is assumed to be standardized, this captures the case where independent features yield $M = 0$.
    Intuitively, it also means that if one dimension is roughly a linear combination of the others, or if $\lambda_{\max}(\Sigma^{-1})$ is large, the conditional distribution can change quickly as a function of the conditioning variables.
\end{appexample}

\begin{proof}
    % Sketch: Gaussian distribution. Conditional distribution. Different distributions for different conditioning values. Total variation distance upper bounded by KL divergence. KL divergence given equal covariance. Require max eigenvalue. Can get by examining formula for inversion of block matrix.

    Consider a Gaussian random variable $\vx$ with mean $\mu \in \R^d$ and positive definite covariance $\Sigma \in \R^{d \times d}$. We can partition the variable into two parts, denoted $\vx_1$ and $\vx_2$, and then partition the variable and parameters as follows:

    \begin{align*}
        \vx = \begin{pmatrix}
            \vx_1 \\
            \vx_2
        \end{pmatrix}
        \quad\quad\quad
        \mu &= \begin{pmatrix}
            \mu_1 \\
            \mu_2
        \end{pmatrix}
        \quad\quad\quad
        \Sigma = \begin{pmatrix}
            \Sigma_{11} & \Sigma_{12} \\
            \Sigma_{21} & \Sigma_{22}
        \end{pmatrix}.
    \end{align*}
    
    % \begin{align*}
    %     \begin{pmatrix}
    %         \vx_1 \\
    %         \vx_2
    %     \end{pmatrix} \sim \cN \left(
    %     \begin{pmatrix}
    %         \mu_1 \\
    %         \mu_2
    %     \end{pmatrix},
    %     \begin{pmatrix}
    %         \Sigma_{11} & \Sigma_{12} \\
    %         \Sigma_{21} & \Sigma_{22}
    %     \end{pmatrix}
    %     \right).
    % \end{align*}
    
    We can then consider the distribution for $\vx_1$ conditioned on a specific $x_2$ value. The conditional distribution is $\vx_1 \mid x_2 \sim \cN(\bar \mu, \bar \Sigma)$ with the parameters defined as

    \begin{align*}
        \bar \mu &= \mu_1 + \Sigma_{12} \Sigma_{22}^{-1} (x_2 - \mu_2)
        \quad\quad\quad
        \bar \Sigma = \Sigma_{11} - \Sigma_{12} \Sigma_{22}^{-1} \Sigma_{21}.
    \end{align*}

    Our goal is to understand the total variation distance when conditioning on different values $x_2, x_2'$.
    % To do so, we use the following upper bound based on the KL divergence:
    Pinsker's inequality gives us the following upper bound based on the KL divergence:

    \begin{align*}
        d_{TV}\Big( p(\vx_1 \mid x_2) , p(\vx_1 \mid x_2') \Big) \leq \sqrt{ \frac{1}{2} \KL\Big( p(\vx_1 \mid x_2) \mid\mid p(\vx_1 \mid x_2') \Big) }.
    \end{align*}

    The KL divergence between Gaussian distributions has a closed-form solution, and in the case with equal covariance matrices reduces to the following:

    \begin{align*}
        \KL\Big( p(\vx_1 \mid x_2) \mid\mid p(\vx_1 \mid x_2') \Big)
        = \frac{1}{2} (x_2 - x_2')^\top \Sigma_{22}^{-1} \Sigma_{21} \left( \Sigma_{11} - \Sigma_{12} \Sigma_{22}^{-1} \Sigma_{21} \right)^{-1}   \Sigma_{12} \Sigma_{22}^{-1} (x_2 - x_2')
    \end{align*}

    To provide an upper bound on this term based on $\norm{2}{x_2 - x_2'}$, it suffices to find the maximum eigenvalue of the above matrix. To characterize the matrix, we observe that it is present in the formula for the block inversion of the joint covariance matrix:

    \begin{align*}
        \Sigma^{-1} = \begin{pmatrix}
            \left( \Sigma_{11} - \Sigma_{12} \Sigma_{22}^{-1} \Sigma_{21} \right)^{-1}
            &
            - \left( \Sigma_{11} - \Sigma_{12} \Sigma_{22}^{-1} \Sigma_{21} \right)^{-1} \Sigma_{12} \Sigma_{22}^{-1}
            \\
            - \Sigma_{22}^{-1} \Sigma_{21} \left( \Sigma_{11} - \Sigma_{12} \Sigma_{22}^{-1} \Sigma_{21} \right)^{-1}
            &
            \Sigma_{22}^{-1} + \Sigma_{22}^{-1} \Sigma_{21} \left( \Sigma_{11} - \Sigma_{12} \Sigma_{22}^{-1} \Sigma_{21} \right)^{-1}   \Sigma_{12} \Sigma_{22}^{-1}
        \end{pmatrix}
    \end{align*}

    We therefore have the following, because the matrix we're interested in is a principal minor:

    \begin{align*}
        \lambda_{\max} \Big( \Sigma_{22}^{-1} + \Sigma_{22}^{-1} \Sigma_{21} \left( \Sigma_{11} - \Sigma_{12} \Sigma_{22}^{-1} \Sigma_{21} \right)^{-1}   \Sigma_{12} \Sigma_{22}^{-1} \Big) \leq \lambda_{\max}(\Sigma^{-1}).
    \end{align*}

    Weyl's inequality also gives us:

    \begin{align*}
        &\lambda_{\min}(\Sigma_{22}^{-1}) + \lambda_{\max} \Big( \Sigma_{22}^{-1} \Sigma_{21} \left( \Sigma_{11} - \Sigma_{12} \Sigma_{22}^{-1} \Sigma_{21} \right)^{-1}   \Sigma_{12} \Sigma_{22}^{-1} \Big) \\
        \leq \; &
        \lambda_{\max} \Big( \Sigma_{22}^{-1} + \Sigma_{22}^{-1} \Sigma_{21} \left( \Sigma_{11} - \Sigma_{12} \Sigma_{22}^{-1} \Sigma_{21} \right)^{-1}   \Sigma_{12} \Sigma_{22}^{-1} \Big)
    \end{align*}

    Combining this with the previous inequality, we arrive at the following:

    \begin{align*}
        \lambda_{\max} \Big( \Sigma_{22}^{-1} \Sigma_{21} \left( \Sigma_{11} - \Sigma_{12} \Sigma_{22}^{-1} \Sigma_{21} \right)^{-1}   \Sigma_{12} \Sigma_{22}^{-1} \Big) \leq \lambda_{\max}(\Sigma^{-1}) - \lambda_{\min}(\Sigma_{22}^{-1})
    \end{align*}

    For the subtractive term, we can write:

    \begin{align*}
        \lambda_{\min}(\Sigma_{22}^{-1}) = \frac{1}{\lambda_{\max}(\Sigma_{22})} \geq \frac{1}{\lambda_{\max}(\Sigma)} = \lambda_{\min}(\Sigma^{-1}).
    \end{align*}

    % \textcolor{blue}{Ian: next step is to write formula for block inversion. Then say that max eigenvalue of block is less than max eigenvalue of whole. Then maybe use Weyl's inequality to say that max eigenvalue of block is lower bounded by sum of our matrix's max and other matrix's min. We want the min to be large to squeeze the bound lower, but for a global bound we need to consider the smallest that it can be. And I'm not sure how to reason about that, because the add-on term that's inconvenient involves the inversion of a single block. Max eigenvalue of a principal minor cannot exceed max eigenvalue of the whole; min eigenvalue of inverse must be larger than inverse of max eigenvalue of the whole; put it together and we're good.}

    We therefore have the following upper bound on the KL divergence:

    \begin{align*}
        \KL\Big( p(\vx_1 \mid x_2) \mid\mid p(\vx_1 \mid x_2') \Big) \leq \frac{1}{2} \cdot \norm{2}{x_2 - x_2'}^2 \cdot \Big( \lambda_{\max}(\Sigma^{-1}) - \lambda_{\min}(\Sigma^{-1}) \Big).
    \end{align*}

    Combining this with Pinsker's inequality, we arrive at the final result:

    \begin{align*}
        % d_{TV}\big( p(\vx_1 \mid x_2) , p(\vx_1 \mid x_2') \big) \big) \leq \frac{1}{2} \cdot \norm{2}{x_2 - x_2'} \cdot \sqrt{\frac{1}{\lambda_{\min}(\Sigma)} - \frac{1}{\lambda_{\max}(\Sigma)}}.
        d_{TV}\big( p(\vx_1 \mid x_2) , p(\vx_1 \mid x_2') \big) \big) \leq \frac{1}{2} \cdot \norm{2}{x_2 - x_2'} \cdot \sqrt{\lambda_{\max}(\Sigma^{-1}) - \lambda_{\min}(\Sigma^{-1})}.
    \end{align*}

    % \textcolor{blue}{Ian: can it get simpler than this? Maybe just write using precision matrix. Does this account for the independent case? Not quite, only the isotropic case. Where are getting unnecessary looseness? Not account for zeros in the covariance terms might be wasteful.}

    This means that we have $M = \frac{1}{2} \sqrt{\lambda_{\max}(\Sigma^{-1}) - \lambda_{\min}(\Sigma^{-1})}$.
    
    Notice that this does not imply $M = 0$ when the features $\vx$ are independent: it only captures this for the case of isotropic variance. However, if we assume that $\vx$ is standardized, so that each dimension has variance equal to 1, $M = 0$ is implied for independent features because we have $\Sigma = I$.
    
\end{proof}

% \textbf{Lemma 3: updated conditional removal.}
\begin{applemma} % \label{lemma:conditional_updated}
    Under \Cref{asmp:tvd}, the prediction function $f(\xS)$ defined using the \textbf{conditional} approach is Lipschitz continuous with constant $L + 2BM$ for any feature set $\xS$.
\end{applemma}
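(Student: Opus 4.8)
The plan is to obtain this result as an immediate corollary of \Cref{lemma:conditional} combined with \Cref{asmp:tvd}. \Cref{lemma:conditional} already provides the two-term bound
\[
|f(\xS) - f(\xS')| \leq L \cdot \norm{2}{\xS - \xS'} + 2B \cdot d_{TV}\Big( p(\vxbarS \mid \xS), p(\vxbarS \mid \xS') \Big),
\]
in which the first term is already linear in $\norm{2}{\xS - \xS'}$. The only remaining task is therefore to control the total variation term by the same input distance, so that the whole right-hand side collapses into a single Lipschitz-type estimate.

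First I would invoke \Cref{asmp:tvd}, which asserts precisely that $d_{TV}\big( p(\vxbarS \mid \xS), p(\vxbarS \mid \xS') \big) \leq M \cdot \norm{2}{\xS - \xS'}$ uniformly over $S \subseteq [d]$ and $\xS, \xS' \in \R^{|S|}$. Substituting this into the bound above gives
\[
|f(\xS) - f(\xS')| \leq L \cdot \norm{2}{\xS - \xS'} + 2B M \cdot \norm{2}{\xS - \xS'} = (L + 2BM) \cdot \norm{2}{\xS - \xS'},
\]
which is exactly the claimed global Lipschitz continuity with constant $L + 2BM$. Since \Cref{asmp:tvd} is stated for every feature subset, the conclusion holds for any $\xS$, as required.

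I do not expect any genuine obstacle here: all of the substantive work has already been carried out upstream. The careful part was establishing \Cref{lemma:conditional} itself, where the integral defining $f(\xS)$ is split into a term controlled by the model's Lipschitz constant $L$ (via \Cref{asmp:lipschitz}) and a term controlled by the bounded-prediction constant $B$ (via \Cref{asmp:bounded}) paired with the conditional distribution's total variation distance. The present statement simply chains that bound with the continuity hypothesis on the conditional distribution, and the factor of $2B$ multiplying $M$ is inherited directly from the bounded-prediction term in \Cref{lemma:conditional}.
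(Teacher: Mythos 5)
Your proposal is correct and matches the paper's proof exactly: both chain the two-term bound from \Cref{lemma:conditional} with the uniform total variation bound in \Cref{asmp:tvd} to collapse the right-hand side into $(L + 2BM)\cdot\norm{2}{\xS - \xS'}$. Nothing is missing, and your additional remarks about where the substantive work was done (in establishing \Cref{lemma:conditional}) are accurate.
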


\begin{proof}
    The result follows directly from \Cref{lemma:conditional} and \Cref{asmp:tvd}:

    \begin{align*}
        |f(\xS) - f(\xS')| &\leq L \cdot \norm{2}{\xS - \xS'} + 2B \cdot d_{TV}\Big( p(\xbarS \mid \xS), p(\xbarS \mid \xS') \Big) \\
        &\leq L \cdot \norm{2}{\xS - \xS'} + 2BM \cdot \norm{2}{\xS - \xS'} \\
        &= (L + 2BM) \cdot \norm{2}{\xS - \xS'}.
    \end{align*}
\end{proof}

\clearpage
\section{Proofs: prediction robustness to model perturbations} \label{app:prediction_model}

We now re-state and prove our results from \Cref{sec:robustness_model}.

% \textbf{Lemma 4: distance on subdomain.}
\begin{applemma} % \label{lemma:model}
    For two models $f, f': \R^d \mapsto \R$ and a subdomain $\cX' \subseteq \R^d$, the prediction functions $f(\xS), f'(\xS)$ for any feature set $\xS$ satisfy
    \begin{equation*}
        |f(\xS) - f'(\xS)| \leq \norm{\infty, \cX'}{f - f'} \cdot Q_{\xS}(\cX') + 2B \cdot \big(1 - Q_{\xS}(\cX') \big),
    \end{equation*}
    where $Q_{\xS}(\cX')$ is the probability of imputed samples lying in $\cX'$ based on the distribution $q(\vxbarS)$:
    \begin{equation*}
        Q_{\xS}(\cX') \equiv \E_{q(\vxbarS)}\left[ \sI\left\{ (\xS, \vxbarS) \in \cX' \right\} \right].
    \end{equation*}
\end{applemma}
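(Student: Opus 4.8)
The plan is to start from the definition of partial predictions in \cref{eq:removal}, noting that both models use the \emph{same} removal distribution $q(\vxbarS)$, so that the difference of the two averaged predictions is itself an average of a pointwise difference. Concretely, I would write
\begin{equation*}
    |f(\xS) - f'(\xS)| = \Big| \E_{q(\vxbarS)}\big[ f(\xS, \vxbarS) - f'(\xS, \vxbarS) \big] \Big| \leq \E_{q(\vxbarS)}\big[ \big| f(\xS, \vxbarS) - f'(\xS, \vxbarS) \big| \big],
\end{equation*}
using linearity of expectation followed by Jensen's inequality (equivalently, the triangle inequality for integrals). This reduces the problem to controlling the pointwise discrepancy $|f(\xS, \xbarS) - f'(\xS, \xbarS)|$ under the averaging measure $q$.

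The central step is to split this expectation according to whether the imputed sample lands inside the subdomain $\cX'$. I would insert the partition of unity $1 = \sI\{(\xS, \vxbarS) \in \cX'\} + \sI\{(\xS, \vxbarS) \notin \cX'\}$ and separate the two resulting integrals. On the first region, every imputed point $(\xS, \xbarS)$ lies in $\cX'$, so the pointwise difference is bounded by the restricted Chebyshev distance $\norm{\infty, \cX'}{f - f'}$ by definition of that norm; pulling this constant out leaves $\norm{\infty, \cX'}{f - f'} \cdot \E_{q(\vxbarS)}[\sI\{(\xS, \vxbarS) \in \cX'\}] = \norm{\infty, \cX'}{f - f'} \cdot Q_{\xS}(\cX')$. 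On the second region, where imputed points may fall \emph{outside} $\cX'$ and the two models are uncontrolled, I would fall back on \cref{asmp:bounded}: the triangle inequality gives $|f(\xS, \xbarS) - f'(\xS, \xbarS)| \leq |f(\xS, \xbarS)| + |f'(\xS, \xbarS)| \leq 2B$, so this contribution is at most $2B \cdot \E_{q(\vxbarS)}[\sI\{(\xS, \vxbarS) \notin \cX'\}] = 2B \cdot (1 - Q_{\xS}(\cX'))$. Summing the two bounds yields the claim.

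I do not expect a genuine obstacle here; the argument is an elementary measure-splitting estimate. The only point requiring care is recognizing that the restricted norm $\norm{\infty, \cX'}{f - f'}$ is only a valid pointwise bound \emph{on} $\cX'$, so the off-subdomain mass must be handled separately, and that this is precisely where the boundedness assumption (rather than any closeness of the models) enters. This also makes transparent the degenerate behavior noted in the text: as $Q_{\xS}(\cX') \to 0$ the bound collapses to the trivial $2B$, reflecting that the removal distribution places no weight on the region where the two models are known to agree.
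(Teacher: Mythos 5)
Your proposal is correct and follows essentially the same argument as the paper's proof: bound $|f(\xS) - f'(\xS)|$ by the integral of the pointwise difference under the common removal distribution $q(\vxbarS)$, split that integral via the indicator of $(\xS, \vxbarS) \in \cX'$, apply the restricted Chebyshev norm $\norm{\infty,\cX'}{f - f'}$ on-region and the $2B$ bound from \Cref{asmp:bounded} off-region, and sum to obtain the claimed inequality. No gaps; the justifications match the paper's step for step.
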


\begin{proof}
    The main idea here is that when integrating over the difference in function outputs, we can separate the integral over $(\xS, \vxbarS)$ values falling in the domains $\cX'$ and $\R^d \setminus \cX'$. Because we have a single value for the observed variables $\xS$, the distribution $q(\vxbarS)$ is identical for both terms regardless of the removal technique.

    \begin{align*}
        |f(\xS) - f'(\xS)|
        &= \Big| \int f(\xS, \xbarS)q(\xbarS) d\xbarS - \int f'(\xS, \xbarS)q(\xbarS) d\xbarS \Big| \\
        &\leq \int \big| f(\xS, \xbarS) - f'(\xS, \xbarS) \big| \cdot q(\xbarS) d\xbarS \\
        %
        % &= \int \big| f(\xS, \xbarS) - f'(\xS, \xbarS) \big| \cdot \big( \sI\left\{ (\xS, \vxbarS) \in \cX' \right\} + \sI\left\{ (\xS, \vxbarS) \notin \cX' \right\} \big) \cdot q(\xbarS) d\xbarS \\
        &= \int \big| f(\xS, \xbarS) - f'(\xS, \xbarS) \big| \cdot \sI\left\{ (\xS, \vxbarS) \in \cX' \right\} \cdot q(\xbarS) d\xbarS \\
        &\quad + \int \big| f(\xS, \xbarS) - f'(\xS, \xbarS) \big| \cdot \sI\left\{ (\xS, \vxbarS) \notin \cX' \right\} \cdot q(\xbarS) d\xbarS \\
        &\leq \norm{\infty,\cX'}{f-f'} \int \sI\left\{ (\xS, \vxbarS) \in \cX' \right\} \cdot q(\xbarS) d\xbarS \\
        &\quad + 2B \cdot \int \sI\left\{ (\xS, \vxbarS) \notin \cX' \right\} \cdot q(\xbarS) d\xbarS \\
        &= \norm{\infty,\cX'}{f-f'} \cdot Q_{\xS}(\cX') + 2B \cdot \big(1 - Q_{\xS}(\cX') \big)
    \end{align*}
\end{proof}

% \textbf{Lemma 5: on-manifold distance.}
\begin{applemma} % \label{lemma:model_manifold}
    When removing features using the \textbf{conditional} approach, the prediction functions $f(\xS), f'(\xS)$ for two models $f, f'$ and any feature set $\xS$ satisfy
    \begin{equation*}
        |f(\xS) - f'(\xS)| \leq \norm{\infty, \cX}{f - f'}.
    \end{equation*}
\end{applemma}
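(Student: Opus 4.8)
The plan is to obtain this result as an immediate specialization of the general bound in \Cref{lemma:model}, taking the subdomain to be the data support itself, $\cX' = \cX$. The entire argument hinges on verifying that, for the conditional removal approach, the imputation probability satisfies $Q_{\xS}(\cX) = 1$. Once this is shown, the second (trivial) term in \Cref{lemma:model} vanishes and the first term collapses to exactly $\norm{\infty, \cX}{f - f'}$, giving the claim.

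First I would recall the definition $Q_{\xS}(\cX) = \E_{q(\vxbarS)}\big[ \sI\{ (\xS, \vxbarS) \in \cX \} \big]$ and substitute the conditional removal distribution $q(\vxbarS) = p(\vxbarS \mid \xS)$. The key observation---and the only nontrivial step---is that completing an observed $\xS$ with a draw $\vxbarS \sim p(\vxbarS \mid \xS)$ produces a joint sample $(\xS, \vxbarS)$ that lies in the support $\cX$ of $p(\vx)$ almost surely. This is essentially built into the definition of the conditional distribution: its support, combined with the conditioning value $\xS$ (which itself comes from a point $x \in \cX$, under the standing assumption that all explanations are generated for inputs $x \in \cX$), is contained in $\cX$. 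Hence the indicator equals $1$ with probability one, so $Q_{\xS}(\cX) = 1$.

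With this established, I would plug $\cX' = \cX$ and $Q_{\xS}(\cX) = 1$ into \Cref{lemma:model}:
\begin{equation*}
    |f(\xS) - f'(\xS)| \leq \norm{\infty, \cX}{f - f'} \cdot Q_{\xS}(\cX) + 2B \cdot \big( 1 - Q_{\xS}(\cX) \big) = \norm{\infty, \cX}{f - f'},
\end{equation*}
which is the claimed bound.

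The main obstacle is the justification that conditional imputation stays on the data manifold, i.e.\ $Q_{\xS}(\cX) = 1$. While intuitively clear, stating it rigorously requires being precise about what ``support'' means for $p(\vx)$ and its conditionals, and ensuring that the marginalization over $\vxbarS \mid \xS$ places no probability mass outside $\cX$. Everything else is a routine substitution into the already-proven general lemma, so no further estimation is needed.
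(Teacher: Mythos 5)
Your proposal is correct and follows exactly the paper's own proof: specialize \Cref{lemma:model} with $\cX' = \cX$ and observe that conditional removal yields $Q_{\xS}(\cX) = 1$, since samples $(\xS, \vxbarS)$ with $\vxbarS \sim p(\vxbarS \mid \xS)$ lie on the data manifold almost surely. The paper states this support condition as the existence of $\xbarS$ with $p(\xS, \xbarS) > 0$, which matches the justification you give.
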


\begin{proof}
    The result follows directly from \Cref{lemma:model} when we set $\cX' = \cX$. Note that when we remove features using the conditional distribution, we have $Q_{\xS}(\cX) = 1$ for all $\xS$ lying on the data manifold, or where there exists $\xbarS$ such that $p(\xS, \xbarS) > 0$.
\end{proof}

% \textbf{Lemma 6: distance everywhere.}
\begin{applemma} % \label{lemma:model_everywhere}
    When removing features using the \textbf{baseline} or \textbf{marginal} approach, the prediction functions $f(\xS), f'(\xS)$ for two models $f, f'$ and any feature set $\xS$ satisfy
    \begin{equation*}
        |f(\xS) - f'(\xS)| \leq \norm{\infty}{f - f'}.
    \end{equation*}
\end{applemma}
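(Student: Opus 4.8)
The plan is to obtain this as an immediate specialization of the general model-perturbation bound in \Cref{lemma:model}, taking the subdomain to be the entire ambient space, $\cX' = \R^d$. Recall that \Cref{lemma:model} holds for \emph{any} removal distribution $q(\vxbarS)$ and any $\cX'$, giving
\begin{equation*}
    |f(\xS) - f'(\xS)| \leq \norm{\infty, \cX'}{f - f'} \cdot Q_{\xS}(\cX') + 2B \cdot \big(1 - Q_{\xS}(\cX')\big).
\end{equation*}
The key step is then to observe that with $\cX' = \R^d$ we automatically get $Q_{\xS}(\R^d) = 1$: every imputed composite point $(\xS, \vxbarS)$ lies in $\R^d$ by construction, so $\sI\{(\xS, \vxbarS) \in \R^d\} = 1$ for every value in the support of $q$, whether $q$ is the baseline Dirac delta at $b_{\bar S}$ or the marginal $p(\vxbarS)$. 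Hence $Q_{\xS}(\R^d) = \E_{q(\vxbarS)}[1] = 1$. Substituting, the second term $2B(1 - Q_{\xS}(\R^d))$ vanishes and the first reduces to $\norm{\infty, \R^d}{f - f'} = \norm{\infty}{f - f'}$, which is the claim.

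\textbf{Self-contained alternative.} If one prefers to avoid invoking \Cref{lemma:model}, the bound follows directly by mirroring the earlier proofs. For baseline or marginal removal the distribution $q(\vxbarS)$ does not depend on the observed features and is identical for both models, so I would write the difference of the two marginalized predictions as a single integral against the common $q$, pass the absolute value inside the integral, bound the integrand pointwise by $|f(\xS, \xbarS) - f'(\xS, \xbarS)| \le \norm{\infty}{f - f'}$, and finish with $\int q(\xbarS)\, d\xbarS = 1$. This is essentially the same three-line computation used for \Cref{lemma:marginal}, with the Lipschitz bound on $f$ replaced by the pointwise sup-norm bound on $f - f'$.

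\textbf{Main obstacle.} There is no genuine technical difficulty here; the statement is a one-line corollary. The only point worth flagging is conceptual: we cannot sharpen $\norm{\infty}{f - f'}$ to the restricted distance $\norm{\infty, \cX}{f - f'}$ as in the conditional case (\Cref{lemma:model_manifold}). Under baseline or marginal removal the imputed point $(\xS, \xbarS)$ need not lie on the data manifold $\cX$—the baseline $b_{\bar S}$, or a sample from the marginal $p(\vxbarS)$, may be arbitrary relative to the fixed $\xS$—so $Q_{\xS}(\cX)$ can be strictly less than $1$, forcing the comparison of $f$ and $f'$ over all of $\R^d$. This is precisely why the resulting bound is (weakly) larger, consistent with $\norm{\infty}{f - f'} \geq \norm{\infty, \cX}{f - f'}$.
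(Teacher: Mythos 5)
Your proof is correct and takes exactly the paper's route: the paper likewise derives this as an immediate corollary of \Cref{lemma:model} by setting $\cX' = \R^d$ and noting $Q_{\xS}(\R^d) = 1$ for all $\xS$. Your self-contained alternative and the remark on why the bound cannot be tightened to $\norm{\infty,\cX}{f-f'}$ are also sound, though the paper does not include them.
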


\begin{proof}
    The result follows directly from \Cref{lemma:model} when we set $\cX'= \R^d$, because we have $Q_{\xS}(\R^d) = 1$ for all $\xS \in \R^{|S|}$.
\end{proof}

\clearpage
\section{Proofs: summary technique robustness} \label{app:summary}

We now re-state and prove our results from \Cref{sec:robustness_summary}.

% \textbf{Proposition 2: linear operation.}
\begin{appproposition} % \label{prop:linear}
    The attributions for each method can be calculated by applying a linear operator $A \in \R^{d \times 2^d}$ to a vector $v \in \R^{2^d}$ representing the predictions with each feature set, or
    \begin{equation*}
        \phi(f, x) = Av,
    \end{equation*}
    where the linear operator $A$ for each method is listed in \Cref{tab:summary}, and $v$ is defined as $v_S = f(\xS)$ for each $S \subseteq [d]$ based on the chosen feature removal technique.
\end{appproposition}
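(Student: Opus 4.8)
The plan is to prove the proposition method-by-method. Since $(Av)_i = \sum_{S \subseteq [d]} A_{iS} \, v_S$ is linear in $v$ by construction, it suffices to start from each summary technique's defining attribution formula, rewrite it as an explicit linear combination of the entries $v_S = f(\xS)$, and read off the coefficient of each $v_S$ to confirm that it matches the corresponding entry of $A$ in \Cref{tab:summary}. The whole proposition thus reduces to a coefficient-matching exercise, carried out once per row of the table.

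For three of the methods the verification is essentially by inspection. \emph{Leave-one-out} sets $\phi_i(f,x) = f(x_{[d]}) - f(x_{[d]\setminus\{i\}}) = v_{[d]} - v_{[d]\setminus\{i\}}$, so the only nonzero coefficients are $+1$ on the unique set $S \ni i$ with $|S| = d$ and $-1$ on the unique set $S \not\ni i$ with $|S| = d-1$, which are exactly $\sI\{|S|=d\}$ and $-\sI\{|S|=d-1\}$. \emph{RISE} averages the prediction over the $2^{d-1}$ subsets containing $i$, giving $\phi_i = 2^{-(d-1)} \sum_{S \ni i} v_S$ and hence coefficient $1/2^{d-1}$ when $i \in S$ and $0$ otherwise. \emph{Banzhaf} averages the marginal contribution $v_{S\cup\{i\}} - v_S$ over all $S \subseteq [d]\setminus\{i\}$ with uniform weight $2^{-(d-1)}$; reindexing by $T = S\cup\{i\}$ and $T = S$ respectively yields $\pm 1/2^{d-1}$ as tabulated.

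The one case requiring genuine bookkeeping is the \emph{Shapley value}. Starting from its definition in \Cref{sec:removal}, I would split the sum into the terms originating from $f(x_{S\cup\{i\}})$, which contribute to sets $T \ni i$ through the substitution $T = S\cup\{i\}$, and the terms from $-f(\xS)$, which contribute to sets $T \not\ni i$ through $T = S$. Collecting the coefficient of a fixed $v_T$ then reduces to rewriting the inverse binomial weights $\frac{1}{d}\binom{d-1}{|T|-1}^{-1}$ (for $i \in T$) and $-\frac{1}{d}\binom{d-1}{|T|}^{-1}$ (for $i \notin T$) in factorial form, which gives precisely $\frac{(|T|-1)!(d-|T|)!}{d!}$ and $-\frac{|T|!(d-|T|-1)!}{d!}$. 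I expect this binomial-to-factorial conversion to be the only step with any real room for error, though it is routine.

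Finally, \emph{LIME} does not admit a single fixed $A$, since its operator depends on the weighting kernel, regularization, and intercept conventions. I would therefore defer it to \Cref{app:summary}, writing the weighted-least-squares solution in the closed form $A = (Z^\top W Z)^{-1} Z^\top W$ for the appropriate binary design matrix $Z$ and kernel weight matrix $W$, and noting that its limiting parameter regimes recover the other rows of the table. The main obstacle overall is simply keeping the Shapley combinatorics straight and handling LIME's implementation-dependence separately; no deeper idea is needed.
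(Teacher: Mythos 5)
Your proposal is correct and follows essentially the same route as the paper's proof: treat each method in turn, rewrite its defining attribution formula as an explicit linear combination of the entries $v_S = f(\xS)$ (with the Shapley case handled by the binomial-to-factorial conversion you describe, which is exactly right), and handle LIME separately via the weighted-least-squares closed form $A = (Z'^\top W Z')^{-1}_{[1:]} Z'^\top W$, noting its dependence on kernel, regularization, and intercept choices. No gaps; the paper's appendix proof is just a fully written-out version of this same coefficient-matching argument.
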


\begin{proof}
    This result follows directly from the definition of each method. Following the approach of \citet{covert2021explaining}, we disentangle between each method's feature removal implementation and how it generates attribution scores. Assuming a fixed input $x$ where $f(\xS)$ denotes the prediction with a feature set, each method's attributions are defined as follows.

    \begin{itemize}[leftmargin=5mm]
        \item \textbf{Occlusion}~\cite{zeiler2014visualizing}: this method simply compares the prediction given all features and the prediction with a single missing feature. The attribution score $\phi_i(f, x)$ is defined as

        \begin{equation*}
            \phi_i(f, x) = f(x) - f(x_{[d] \setminus \{i\}}).
        \end{equation*}

        We refer to this summary technique as ``leave-one-out'' in \Cref{tab:summary}.

        \item \textbf{Shapley value}~\cite{shapley1953value, lundberg2017unified}: this method calculates the impact of removing a single feature, and then averages this over all possible preceding subsets. The attribution score $\phi_i(f, x)$ is defined as

        \begin{equation*}
            \phi_i(f, x) = \frac{1}{d} \sum_{S \subseteq [d] \setminus \{i\}} \binom{d -1}{|S|}^{-1} \Big( f(x_{S \cup \{i\}}) - f(\xS) \Big).
        \end{equation*}

        \item \textbf{Banzhaf value}~\cite{banzhaf1964weighted, chen2020ls}: this method is similar to the Shapley value, but it applies a uniform weighting when averaging over all preceding subsets. The attribution score $\phi_i(f, x)$ is defined as

        \begin{equation*}
            \phi_i(f, x) = \frac{1}{2^{d - 1}} \sum_{S \subseteq [d] \setminus \{i\}} \Big( f(x_{S \cup \{i\}}) - f(\xS) \Big).
        \end{equation*}

        \item \textbf{RISE}~\cite{petsiuk2018rise}: this method is similar to the Banzhaf value, but it does not subtract the value for subsets where a feature is not included. The attribution score $\phi_i(f, x)$ is defined as

        \begin{equation*}
            \phi_i(f, x) = \frac{1}{2^{d - 1}} \sum_{S \subseteq [d] \setminus \{i\}} f(x_{S \cup \{i\}}).
        \end{equation*}

        We refer to this summary technique as ``mean when included'' in \Cref{tab:summary}

        \item \textbf{LIME}~\cite{ribeiro2016should}: this method is more flexible than the others and generally involves solving a regression problem that treats the features as inputs and the predictions as labels. LIME can be viewed as a removal-based explanation when the features are represented in a binary fashion, indicating whether they are provided to the model or not~\cite{covert2021explaining}. There is also a choice of weighting kernel $w(S) \geq 0$, of a regularization term, and of whether to fit an intercept term.
        % that we assume to be zero.
        % a ridge penalty. 
        
        We assume that the regularization is omitted and that we fit an intercept term. Denoting the model's coefficients as $(\beta_0, \ldots, \beta_d)$, the problem is the following:

        \begin{equation*}
            \min_{\beta_0, \ldots, \beta_d} \; \sum_{S \subseteq [d]} w(S) \Big( \beta_0 + \sum_{i \in S} \beta_i - f(\xS) \Big)^2.
        \end{equation*}

        We can rewrite this problem in a simpler fashion to derive its solution. First, we let $W \in [0, 1]^{2^d \times 2^d}$ be a diagonal matrix containing the weights $w(S)$ for each subset. Next, we let the predictions be represented by a vector $v \in \R^{2^d}$ where $v_S = f(\xS)$, and we let the matrix $Z \in \{0, 1\}^{2^d \times d}$ enumerate all subsets in a binary fashion. Finally, to accommodate the intercept term, we let $\beta = (\beta_0, \ldots, \beta_d) \in \R^{d + 1}$ represent the parameters, and we prepend a columns of ones to $Z$, resulting in $Z'\in \{0, 1\}^{2^d \times (d + 1)}$. The problem then becomes:
        
        % We can rewrite this problem in a simpler fashion to derive its solution. First, we let $W \in \{0, 1\}^{2^d \times 2^d}$ be a diagonal matrix containing the weights $w(S)$ for each subset. Next, we let the predictions be represented by a vector $v \in \R^{2^d}$, and the non-intercept coefficients by $\beta \in \R^d$. We let the matrix $Z \in \{0, 1\}^{2^d \times d}$ enumerate all subsets in a binary fashion. The problem then becomes:

        \begin{equation*}
            \min_{\beta} \; (Z'\beta - v)^\top W (Z'\beta - v') = \bignorm{W}{Z'\beta - v}^2.
        \end{equation*}

        % \begin{equation*}
        %     \min_{\beta_0, \ldots, \beta_d} \; \bignorm{2}{Z\beta + \beta_0 \vone - v}^2 + \lambda \norm{2}{\beta}^2.
        % \end{equation*}

        Assuming that $Z'^\top W Z'$ is invertible, which is guaranteed when $w(S) > 0$ for all $S \subseteq [d]$, this problem has a closed-form solution:

        \begin{equation*}
            \beta^* = (Z'^\top W Z')^{-1} Z'^\top W v.
        \end{equation*}

        LIME's attribution scores are given by $\phi_i(f, x) = \beta^*_i$, for $i = 1, ..., d$, where we discard the intercept term. We can therefore conclude that each attribution is a linear function of $v$, with coefficients given by all but the first row of $(Z'^\top W Z')^{-1} Z'^\top W$. If we denote all but the first row of $(Z'^\top W Z')^{-1}$ as $(Z'^\top W Z')^{-1}_{[1:]}$, then we have $A = (Z'^\top W Z')^{-1}_{[1:]} Z'^\top W$.

        The formula for the attributions remains roughly the same if we do not fit an intercept term: if the intercept is zero, we have $A = (Z^\top W Z)^{-1} Z^\top W$.
        % If the intercept is set to $f(x_{\{\}})$, we have $A = (Z^\top W Z)^{-1} Z^\top W - $
        We can similarly allow for ridge regularization: given a penalty parameter $\lambda > 0$, we then have $A = (Z^\top W Z + \lambda I_{2^d})^{-1} Z^\top W$.
        
        These coefficients depend on the specific choice of weighting kernel $w(S)$, and previous work has shown that leave-one-out, Shapley and Banzhaf values all correspond to specific weighting kernels~\cite{covert2021explaining}. Our analysis of those methods therefore pertains to LIME, at least in certain special cases. However, LIME often uses a heuristic exponential kernel in practice, which is more difficult to characterize analytically. We therefore omit the specific entries and norms for LIME in \Cref{tab:summary}, but show norms for the exponential kernel in our computed results.
        
        We refer to this summary technique as ``weighted least squares'' in \Cref{tab:summary}.
    \end{itemize}

    We can therefore see that in each method, $\phi_i(f, x)$ is a linear function of the predictions for each $S \subseteq [d]$. Across all methods, we can let the predictions be represented by a vector $v \in \R^{2^d}$ and the coefficients by a matrix $A \in \R^{d \times 2^d}$.
\end{proof}

% \textbf{Lemma 7: matrix inequalities.}
\begin{applemma} % \label{lemma:operator_bounds}
    The difference in attributions given the same summary technique $A$ and different model outputs $v, v'$ can be bounded as
    \begin{equation*}
        \norm{2}{Av - Av'} \leq \norm{2}{A} \cdot \norm{2}{v - v'} \quad\quad \text{or} \quad\quad \norm{2}{Av - Av'} \leq \norm{1,\infty}{A} \cdot \norm{\infty}{v - v'},
    \end{equation*}
    where $\norm{2}{A}$ is the spectral norm, and the operator norm $\norm{1,\infty}{A}$ is the square root of the sum of squared row 1-norms, with values for each $A$ given in \Cref{tab:summary}.
\end{applemma}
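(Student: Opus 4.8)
The plan is to handle the two inequalities separately, both reducing to elementary operator-norm reasoning applied to the vector $u = v - v'$, since $Av - Av' = A(v - v') = Au$.

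For the first bound, I would simply invoke the definition of the spectral norm as the operator norm induced by the Euclidean norm, namely $\norm{2}{A} = \sup_{u \neq 0} \norm{2}{Au} / \norm{2}{u}$. This immediately yields $\norm{2}{Au} \leq \norm{2}{A} \cdot \norm{2}{u}$, and substituting $u = v - v'$ gives the claim. No further work is needed here.

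For the second bound, I would argue componentwise. Writing the $i$-th entry of $Au$ as $(Au)_i = \sum_{S \subseteq [d]} A_{iS} u_S$, I would apply H\"older's inequality with the dual exponents $1$ and $\infty$ to each row, obtaining $|(Au)_i| \leq \sum_S |A_{iS}| \cdot \norm{\infty}{u} = \norm{1}{A_{i,:}} \cdot \norm{\infty}{u}$, where $A_{i,:}$ denotes the $i$-th row of $A$. Squaring and summing over $i \in [d]$ then gives $\norm{2}{Au}^2 \leq \big( \sum_i \norm{1}{A_{i,:}}^2 \big) \norm{\infty}{u}^2$, and taking square roots recognizes the leading factor as exactly $\norm{1,\infty}{A}$, the square root of the sum of squared row $1$-norms. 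Substituting $u = v - v'$ completes the argument.

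Since both inequalities collapse to a single application of a standard norm bound—the spectral-norm definition in the first case, and row-wise H\"older followed by Euclidean aggregation in the second—there is no substantive obstacle. The only point requiring care is bookkeeping of the mixed norm: ensuring that the ``square root of the sum of squared row $1$-norms'' description of $\norm{1,\infty}{A}$ matches precisely the factor produced by summing the squared per-row bounds, which it does by construction.
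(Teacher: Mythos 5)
Your two inequalities are proved correctly, and by exactly the paper's method: the first is the definition of the spectral norm as the operator norm induced by the Euclidean norm, and the second is a row-wise application of H\"older's inequality with exponents $1$ and $\infty$ followed by squaring and summing over rows. This matches the paper's derivation $\norm{2}{Au} = \sqrt{\sum_{i=1}^d (A_i^\top u)^2} \leq \norm{\infty}{u} \cdot \norm{1,\infty}{A}$ essentially line for line, so there is nothing to correct in that portion.

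However, the lemma asserts more than the two abstract bounds: it also claims the norms take the values listed in \Cref{tab:summary} (``with values for each $A$ given in \Cref{tab:summary}''), and the bulk of the paper's proof is devoted to verifying precisely those entries. Concretely, the paper shows that leave-one-out, Shapley, and Banzhaf each have row $1$-norms equal to $2$ (hence $\norm{1,\infty}{A} = 2\sqrt{d}$) while RISE has row $1$-norms equal to $1$ (hence $\sqrt{d}$), and it then computes $AA^\top$ for each method to obtain the spectral norms: $\sqrt{d+1}$ for leave-one-out, $1/2^{d/2-1}$ for Banzhaf, $\sqrt{(d+1)/2^d}$ for RISE, and, after a lengthy combinatorial calculation with the Shapley weights, $\sqrt{2/d}$ for the Shapley value. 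These values are not decoration: the later lemmas on semivalues, random-order values, and the Shapley value explicitly cite the spectral norms ``derived in this lemma,'' and the factors $2\sqrt{d}$ and $\sqrt{d}$ reappear as $h(\textnormal{summary})$ in both main theorems. Your proposal therefore establishes the generic inequalities but leaves the table-value portion of the statement unproved; to match the paper you would need to add the per-method norm computations, of which the Shapley spectral norm is the only genuinely laborious one (the others reduce to recognizing $AA^\top$ as an identity plus a rank-one matrix and reading off the leading eigenvalue).
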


\begin{proof}
    We first derive the two matrix inequalities. The first inequality follows from the definition of the spectral norm, which is defined as the maximum singular value:

    \begin{align*}
        \norm{2}{A} &\equiv \sup_{u \neq 0} \frac{\norm{2}{Au}}{\norm{2}{u}} = \sigma_{\max}(A) = \sqrt{ \lambda_{\max}(AA^\top) }.
    \end{align*}

    We prove the second bound using Holder's inequality as follows, where $A_i$ denotes the $i$th row:

    \begin{align*}
        \norm{2}{Au} &= \sqrt{ \sum_{i=1}^d (A_i^\top u)^2 }
        \leq \sqrt{ \sum_{i = 1}^d \norm{1}{A_i}^2 \cdot \norm{\infty}{u}^2 }
        = \norm{\infty}{u} \cdot \sqrt{\sum_{i = 1}^d \norm{1}{A_i}^2}
        = \norm{\infty}{u} \cdot \norm{1, \infty}{A}.
    \end{align*}

    In the special case where all of $A$'s row 1-norms are equal, or where we have $\norm{1}{A_i} = \norm{1}{A_j}$ for all $i,j \in [d]$, we can use the simplified expression $\norm{1, \infty}{A} = \norm{1}{A_1} \cdot \sqrt{d}$.

    Next, we derive the specific norm values for each linear operator in \Cref{tab:summary}.
    
    Beginning with the operator norm $\norm{1, \infty}{A}$, we remark that several methods have equal 1-norms across all rows: following their interpretation as \textit{semivalues}~\cite{monderer2002variations}, leave-one-out, Shapley and Banzhaf all have $\norm{1}{A_i} = 2$ for all $i \in [d]$. Similarly, RISE has $\norm{1}{A_i} = 1$ for all $i \in [d]$. This yields the norms $\norm{1, \infty}{A} = 2 \sqrt{d}$ for leave-one-out, Banzhaf and Shapley, and $\norm{1,\infty}{A} = \sqrt{d}$ for RISE.

    For the spectral norm, we consider each method separately. We find that for all methods, it is simplest to calculate $AA^\top$ and then $\lambda_{\max}(AA^\top)$ to find the maximum singular value $\sigma_{\max}(A)$. Note that the entries of the matrix are determined by the inner product $(AA^\top)_{ij} = A_i^\top A_j$.

    For leave-one-out, we derive $AA^\top$ as follows:

    \begin{align*}
        (AA^\top)_{ij} = \begin{cases}
            2 & \text{if} \; i = j \\
            1 & \text{otherwise}.
        \end{cases}
    \end{align*}

    The matrix $AA^\top$ is therefore the sum of a rank-one component and an identity term, and we have $AA^\top = \vone_d \vone_d + I_d$. Recognizing that $\vone_d$ is the leading eigenvector, we get the following result:

    \begin{align*}
        \sigma_{\max}(A)
        = \sqrt{ \lambda_{\max}(AA^\top) }
        = \sqrt{ \frac{(\vone_d \vone_d + I_d) \vone_d}{\norm{2}{\vone_d}} }
        = \sqrt{ \frac{ \vone_d d + \vone_d}{\norm{2}{\vone_d}} }
        = \sqrt{d + 1}.
    \end{align*}

    For the Banzhaf value, we derive $AA^\top$ as follows:

    \begin{align*}
        (AA^\top)_{ij} = \begin{cases}
            1/2^{2d-2} & \text{if} \; i = j \\
            0 & \text{otherwise}.
        \end{cases}
    \end{align*}

    The max eigenvalue is therefore $\lambda_{\max}(AA^\top) = 1/2^{d - 2}$, which yields $\norm{2}{A} = 1 / 2^{d/2 - 1}$.

    For the mean when included technique used by RISE, we derive $AA^\top$ as follows:

    \begin{align*}
        (AA^\top)_{ij} = \begin{cases}
            % \frac{1}{2^{d-1}} & \text{if} \; i = j \\
            % \frac{1}{2^{d-2}} & \text{otherwise}.
            1 / 2^{d-1} & \text{if} \; i = j \\
            1 / 2^d & \text{otherwise}.
        \end{cases}
    \end{align*}

    The matrix $AA^\top$ is therefore the sum of a rank-one component and an identity term, or we have $AA^\top = \vone_d \vone_d / 2^d + I_d / 2^d$. Recognizing that $\vone_d$ is the leading eigenvector, we get the following result:

    \begin{align*}
        \sigma_{\max}(A)
        = \sqrt{ \lambda_{\max}(AA^\top) }
        = \sqrt{ \frac{(\vone_d \vone_d / 2^d + I_d / 2^d) \vone_d}{\norm{2}{\vone_d}} }
        = \sqrt{ \frac{ \vone_d d / 2^d + \vone_d / 2^d}{\norm{2}{\vone_d}} }
        = \sqrt{\frac{d + 1}{2^d}}.
    \end{align*}

    For the Shapley value, we derive $AA^\top$ by first considering the diagonal entries:

    \begin{align*}
        (AA^\top)_{ii} = A_i^\top A_i
        &= \sum_{S:i\in S} \left( \frac{(|S| - 1)! (d - |S|)!}{d!} \right)^2 + \sum_{S: i \notin S} \left( \frac{|S|!(d - |S| - 1)!}{d!} \right)^2 \\
        &= \sum_{k = 0}^d \left[ \binom{d - 1}{k - 1} \left( \frac{(k - 1)! (d - k)!}{d!} \right)^2 + \binom{d - 1}{k} \left( \frac{k!(d - k - 1)!}{d!} \right)^2 \right] \\
        &= \frac{1}{d} \sum_{k = 1}^d \frac{(k - 1)!(d - k)!}{d!} + \frac{1}{d} \sum_{k = 0}^{d - 1} \frac{k!(d - k - 1)!}{d!} \\
        &= \sum_{k = 1}^{d - 1} \frac{(k - 1)!(d - k - 1)!}{d!} + \underbrace{\frac{2}{d^2}}_{k = 0, k = d} \\
        &= \frac{1}{d(d - 1)} \sum_{k = 1}^{d - 1} \binom{d - 2}{k - 1}^{-1} + \frac{2}{d^2}
    \end{align*}

    We next consider the off-diagonal entries:

    \begin{align*}
        (AA^\top)_{ij} = A_i^\top A_j
        &= \sum_{S:i \in S, j \in S} \frac{(|S| - 1)!(d-|S|)!}{d!} \frac{(|S| - 1)!(d-|S|)!}{d!} \\
        &\quad - \sum_{S:i\in S, j \notin S} \frac{(|S| - 1)!(d-|S|)!}{d!} \frac{|S|!(d-|S| - 1)!}{d!} \\
        &\quad - \sum_{S:i \notin S, j \in S} \frac{|S|!(d-|S| - 1)!}{d!} \frac{(|S| - 1)!(d-|S|)!}{d!} \\
        &\quad + \sum_{S:i \notin S, j \notin S} \frac{|S|!(d-|S| - 1)!}{d!} \frac{|S|!(d-|S| - 1)!}{d!} \\
        &= \sum_{k = 0}^d \binom{d - 2}{k - 2} \frac{(k - 1)!(d-k)!}{d!} \frac{(k - 1)!(d-k)!}{d!} \\
        &\quad - 2 \sum_{k = 0}^d \binom{d - 2}{k - 1} \frac{(k - 1)!(d-k)!}{d!} \frac{k!(d- k - 1)!}{d!} \\
        &\quad + \sum_{k = 0}^d \binom{d - 2}{k} \frac{k!(d- k - 1)!}{d!} \frac{k!(d- k - 1)!}{d!} \\
        &= \sum_{k = 2}^d \frac{(k - 1)}{d(d-1)} \frac{(k - 1)!(d - k)!}{d!} \\
        &\quad - 2 \sum_{k = 1}^{d - 1} \frac{1}{d(d-1)} \frac{k!(d-k)!}{d!} \\
        &\quad + \sum_{k = 0}^{d - 2} \frac{(d-k-1)}{d(d-1)} \frac{k!(d-k-1)!}{d!} 
    \end{align*}

    We first focus on the case where $d > 2$, and we later return to the case where $d = 2$. We can group terms that share the same $k$ values to simplify the above:

    \begin{align*}
        A_i^\top A_j
        &= \frac{1}{d(d-1)} \left[ - \sum_{k = 2}^{d - 2} \frac{(k - 1)!(d - k - 1)!}{(d - 1)!} + \underbrace{\frac{2(d - 1)}{d}}_{k = 0, k = d} - \underbrace{\frac{2}{d - 1}}_{k = 1, k = d - 1} \right] \\
        &= - \frac{1}{d(d-1)} \sum_{k = 2}^{d - 2} \frac{(k - 1)!(d - k - 1)!}{(d - 1)!} + \frac{2}{d^2} - \frac{2}{d(d - 1)^2} \\
        &= - \frac{1}{d(d-1)^2} \sum_{k = 2}^{d - 2} \binom{d - 2}{k - 1}^{-1} + \frac{2}{d^2} - \frac{2}{d(d - 1)^2}.
    \end{align*}

    By convention, we let the summation above evaluate to zero for $d = 3$. We therefore have the following result for $AA^\top$:

    \begin{align*}
        (AA^\top)_{ij} = \begin{cases}
            \frac{1}{d(d - 1)} \sum_{k = 1}^{d - 1} \binom{d - 2}{k - 1}^{-1} + \frac{2}{d^2} & \text{if} \; i = j \\
            - \frac{1}{d(d-1)^2} \sum_{k = 2}^{d - 2} \binom{d - 2}{k - 1}^{-1} + \frac{2}{d^2} - \frac{2}{d(d - 1)^2} & \text{otherwise}.
        \end{cases}
    \end{align*}

    From this, we can see that $AA^\top$ is the sum of a rank-one component and an identity term, or that we have $AA^\top = \vone_d \vone_d b + I_d (c - b)$ with values for $b = (AA^\top)_{ij}$ and $c = (AA^\top)_{ii}$ given above. This matrix has two eigenvalues, $c - b$ and $db + (c - b)$, and we must consider which of the two is larger. Under our assumption of $d > 2$, we can observe that $b > 0$:

    \begin{align*}
        b = (AA^\top)_{ij}
        &= \frac{2}{d^2} - \frac{1}{d(d-1)^2} \sum_{k = 2}^{d - 2} \binom{d - 2}{k - 1}^{-1} - \frac{2}{d(d - 1)^2} \\
        &\geq \frac{2}{d^2} - \frac{1}{d(d-1)^2} \sum_{k = 2}^{d - 2} \binom{d - 2}{1}^{-1} - \frac{2}{d(d - 1)^2} \\
        &= \frac{2}{d^2} - \frac{d-3}{d(d-1)^2(d-2)} - \frac{2}{d(d-1)^2} \\
        &= \frac{2(d - 1)^2 (d - 2) - d(d - 3) - 2d(d - 2)}{d^2 (d - 1)^2 (d - 2)}
    \end{align*}

    It can be verified that the numerator of the above fraction is positive for $d > 2$. This implies that $db + (c - b)$ is the larger eigenvalue. We can therefore calculate $\lambda_{\max}(AA^\top)$ as follows:

    \begin{align*}
        \lambda_{\max}(AA^\top) &= db + (c - b) = c + (d - 1) b \\
        &= \frac{2}{d^2} + \underbrace{\frac{2}{d(d - 1)}}_{k = 1, k = d - 1} + \frac{2(d - 1)}{d^2} - \frac{2}{d(d - 1)} \\
        &= \frac{2(d - 1) + 2(d - 1)^2}{d^2(d - 1)} \\
        &= \frac{2}{d}
    \end{align*}

    On the other hand, when we have $d = 2$ we have the following entries for $AA^\top$:

    \begin{align*}
        (AA^\top)_{ij} = \begin{cases}
            1 & \text{if} \; i = j \\
            0 & \text{otherwise}.
        \end{cases}
    \end{align*}

    This yields $\lambda_{\max}(AA^\top) = 1$, which coincides with the formula $\lambda_{\max}(AA^\top) = 2 / d$ derived above. Finally, this let us conclude that the spectral norm for the Shapley value is the following:

    \begin{align*}
        \norm{2}{A} = \sqrt{\lambda_{\max}(AA^\top)} = \sqrt{\frac{2}{d}}.
    \end{align*}
\end{proof}

% \textbf{Lemma 8: operator norm for probabilistic values.}
\begin{applemma}
    When the linear operator $A$ satisfies the \textbf{boundedness} property, we have $\norm{1,\infty}{A} = 2\sqrt{d}$.
\end{applemma}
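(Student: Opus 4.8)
The plan is to show that the boundedness property forces every row $A_i$ to have $1$-norm exactly equal to $2$, from which $\norm{1,\infty}{A} = \sqrt{\sum_{i=1}^d \norm{1}{A_i}^2} = \sqrt{4d} = 2\sqrt{d}$ follows immediately. The substance of the argument is therefore to characterize the rows $A_i$ that satisfy boundedness; these are precisely the so-called \emph{probabilistic values}, each of which can be written as a convex combination of the marginal-contribution functionals $v \mapsto v_{S \cup \{i\}} - v_S$ over $S \not\ni i$. Recovering this representation is the crux of the proof.

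To do so, fix $i \in [d]$ and, for each $S \not\ni i$, encode the marginal contribution as a vector $c_S \in \R^{2^d}$ with $c_S = e_{S \cup \{i\}} - e_S$, where $e_T$ denotes the standard basis vector of coordinate $T$; thus $v_{S \cup \{i\}} - v_S = \langle c_S, v\rangle$ and $(Av)_i = \langle A_i, v\rangle$. Let $K_i = \mathrm{conv}\{c_S : S \not\ni i\}$, a compact convex polytope, and note that its support function is exactly $\sigma_{K_i}(v) = \max_{S \not\ni i} \langle c_S, v\rangle = \max_{S \not\ni i} (v_{S \cup \{i\}} - v_S)$. The upper half of the boundedness property says precisely that $\langle A_i, v\rangle \le \sigma_{K_i}(v)$ for all $v$. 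By the standard duality between a compact convex set and its support function (equivalently, by a separating hyperplane: if $A_i \notin K_i$ there would exist $v$ with $\langle A_i, v\rangle > \sigma_{K_i}(v)$, violating the bound), this is equivalent to $A_i \in K_i$. Hence there exist weights $p^i_S \ge 0$ with $\sum_{S \not\ni i} p^i_S = 1$ such that $A_i = \sum_{S \not\ni i} p^i_S (e_{S \cup \{i\}} - e_S)$.

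With the representation in hand, the $1$-norm is a direct calculation, and the key observation is that the positive and negative parts live on disjoint coordinates. A coordinate $T$ with $i \in T$ receives a contribution only from the single term $S = T \setminus \{i\}$, giving $A_{iT} = p^i_{T \setminus \{i\}} \ge 0$; a coordinate $T$ with $i \notin T$ receives a contribution only as the subtracted set $S = T$, giving $A_{iT} = -p^i_T \le 0$. Summing absolute values, $\norm{1}{A_i} = \sum_{T \ni i} p^i_{T \setminus \{i\}} + \sum_{T \not\ni i} p^i_T = 2 \sum_{S \not\ni i} p^i_S = 2$. Since this holds for every $i$, we conclude $\norm{1,\infty}{A} = 2\sqrt{d}$.

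The main obstacle is the middle step, deducing the convex-combination (probabilistic value) representation from the boundedness inequality. The easy direction, $\norm{1}{A_i} \le 2$, can also be seen directly: for any $v$ with $\norm{\infty}{v} \le 1$ we have $\max_{S \not\ni i}(v_{S\cup\{i\}} - v_S) \le 2$, so $(Av)_i \le 2$, and taking the supremum over such $v$ bounds $\norm{1}{A_i}$ by $2$. The matching lower bound, and hence equality, is exactly what the convex-duality argument supplies; without it one only obtains an inequality. I would therefore present the support-function/separating-hyperplane step carefully, since it is the only nonroutine ingredient.
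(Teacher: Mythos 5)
Your proof is correct, and it reaches the same two-stage structure as the paper's proof---first establish that each row $A_i$ is a convex combination of marginal-contribution vectors (a \emph{probabilistic value}), then read off $\norm{1}{A_i}=2$ from the fact that the positive and negative parts occupy disjoint coordinates---but you handle the crux differently. The paper does not prove the representation at all: it invokes Theorem 1 of Monderer and Samet, which states that linear solution concepts satisfying the boundedness (Milnor) axiom are exactly the probabilistic values, and then performs the identical row-norm computation. You instead supply a self-contained proof of that characterization via support-function duality: encoding the marginal contributions as $c_S = e_{S\cup\{i\}}-e_S$, observing that the upper half of boundedness says $\langle A_i, v\rangle \le \sigma_{K_i}(v)$ for all $v$, and concluding $A_i \in K_i = \mathrm{conv}\{c_S\}$ by separation. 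This is a standard and airtight argument (compactness of $K_i$ makes the strict separation valid), and it has the merit of making the lemma independent of the game-theory literature and of isolating exactly what is used (linearity plus one half of boundedness; the other half follows by applying the first to $-v$). What the paper's route buys is brevity and a uniform framing: the same citation machinery (probabilistic values, semivalues, random-order values from Monderer--Samet) is reused in the subsequent spectral-norm lemmas, so the appeal to the classification theorem is not an isolated shortcut. One further remark: in this finite linear setting the representation can be bypassed entirely---applying boundedness to the basis vectors $v = e_T$ pins down the sign pattern $A_{iT}\ge 0$ for $i\in T$ and $A_{iT}\le 0$ for $i\notin T$, and applying it to $v_T=\ind\{i\in T\}$ and $v_T=\ind\{i\notin T\}$ gives $\sum_{T\ni i}A_{iT}=1$ and $\sum_{T\not\ni i}A_{iT}=-1$, hence $\norm{1}{A_i}=2$ directly---so an even more elementary proof than yours exists, though yours is perfectly sound.
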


\begin{proof}
    Following Theorem 1 in \citet{monderer2002variations}, solution concepts satisfying linearity and the boundedness property (also known as the Milnor axiom) are \textit{probabilistic values}. This means that each attribution is the average of a feature's marginal contributions, or that for each feature $i \in [d]$ there exists a distribution $p_i(S)$ with support on the power set of $[d] \setminus \{i\}$, such that the attributions are given by

    \begin{align}
        \phi_i(f, x) = \E_{p_i(S)} \big[ f(x_{S \cup \{i\}}) - f(x_S) \big] = \sum_{S \subseteq [d] \setminus \{i\}} p_i(S) \big( f(x_{S \cup \{i\}}) - f(x_S) \big). \label{eq:probabilistic}
    \end{align}

    The corresponding matrix $A \in \R^{d \times 2^d}$ therefore has rows $A_i$ with entries $A_{iS} \geq 0$ when $i \in S$ and $A_{iS} \leq 0$ when $i \notin S$, where the positive and negative entries sum to 1 and -1 respectively. We can therefore conclude that each row satisfies $\norm{1}{A_i} = 2$, and that $\norm{1,\infty}{A} = 2 \sqrt{d}$.
    
\end{proof}

% \textbf{Lemma 9: spectral norm for semivalues.}
\begin{applemma} % \label{lemma:semivalue}
    When the linear operator $A$ satisfies the \textbf{boundedness} and \textbf{symmetry} properties, the spectral norm is bounded as follows,
    \begin{equation*}
        \frac{1}{2^{d/2 - 1}} \leq \norm{2}{A} \leq \sqrt{d + 1},
    \end{equation*}
    with $1/2^{d/2 - 1}$ achieved by the Banzhaf value and $\sqrt{d + 1}$ achieved by leave-one-out.
\end{applemma}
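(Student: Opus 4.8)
The plan is to reduce the problem to a one-parameter family using the game-theoretic classification of these operators, and then to handle the two bounds by different means: an eigenvalue-averaging argument for the lower bound and a convexity argument for the upper bound. \textbf{Step 1 (reduction to semivalues).} By the preceding boundedness lemma, linearity together with boundedness already forces $A$ to be a probabilistic value, so each row averages a feature's marginal contributions. Adding the symmetry property, the characterization of semivalues in \citet{monderer2002variations} shows that there must exist weights $w_0, \dots, w_{d-1} \ge 0$ with $\sum_{k=0}^{d-1} \binom{d-1}{k} w_k = 1$ such that $A_{iS} = w_{|S|-1}$ for $i \in S$ and $A_{iS} = -w_{|S|}$ for $i \notin S$. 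The operator is thus parameterized by a point $w$ in the simplex $\Delta = \{w \ge 0 : \sum_k \binom{d-1}{k} w_k = 1\}$, with leave-one-out corresponding to $w_{d-1} = 1$ and the Banzhaf value to the uniform choice $w_k = 1/2^{d-1}$.

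\textbf{Step 2 (structure of $AA^\top$).} Since an entry $A_{iS}$ depends on the column $S$ only through $|S|$ and the membership of the row index, $A$ is equivariant under simultaneous permutation of features and subsets; hence $AA^\top$ commutes with every $d \times d$ permutation matrix and must take the form $AA^\top = (c-b) I_d + b\, \vone_d \vone_d^\top$ for a constant diagonal $c$ and constant off-diagonal $b$. Its eigenvalues are therefore $c - b$ (multiplicity $d-1$) and $c + (d-1)b$ (multiplicity $1$), so $\norm{2}{A}^2 = \max\big(c-b,\; c+(d-1)b\big)$. A direct count also yields the closed form $c = \tfrac{1}{d}\,\norm{F}{A}^2 = 2\sum_{k=0}^{d-1}\binom{d-1}{k} w_k^2$.

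\textbf{Step 3 (lower bound).} Because the largest eigenvalue dominates the multiplicity-weighted average, $\norm{2}{A}^2 = \lambda_{\max}(AA^\top) \ge \tfrac{1}{d}\Tr(AA^\top) = c$. Minimizing $c = 2\sum_k \binom{d-1}{k} w_k^2$ over $\Delta$ is a one-line Cauchy--Schwarz estimate: with $a_k = \binom{d-1}{k}$, we have $1 = (\sum_k a_k w_k)^2 \le (\sum_k a_k)(\sum_k a_k w_k^2)$, so $c \ge 2/\sum_k a_k = 2/2^{d-1} = 1/2^{d-2}$, giving $\norm{2}{A} \ge 1/2^{d/2-1}$. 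Equality in Cauchy--Schwarz forces $w_k$ constant, i.e. the Banzhaf value, for which one checks $b = 0$ so that $\lambda_{\max} = c = 1/2^{d-2}$ exactly; the bound is thus attained.

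\textbf{Step 4 (upper bound).} Here I would exploit that the entries of $A$ are \emph{linear} in $w$ and that the spectral norm is a convex function of a matrix, so $w \mapsto \norm{2}{A(w)}$ is convex on the polytope $\Delta$ and attains its maximum at an extreme point. The vertices of $\Delta$ are the points $w = e_{k^*}/\binom{d-1}{k^*}$ for $k^* \in \{0, \dots, d-1\}$; evaluating there gives $c = 2/\binom{d-1}{k^*}$ and $b = 1/\binom{d-1}{k^*}$, hence $\lambda_{\max} = c + (d-1)b = (d+1)/\binom{d-1}{k^*} \le d+1$, with equality precisely when $\binom{d-1}{k^*} = 1$, that is $k^* \in \{0, d-1\}$, the latter being leave-one-out. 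This yields $\norm{2}{A} \le \sqrt{d+1}$, attained by leave-one-out. The main obstacle is the bookkeeping in Steps~2 and~4: one must carefully verify the two-value structure of $AA^\top$ and carry out the binomial sums for the off-diagonal entry $b$ (the computation mirrors the Shapley-value calculation already performed in \Cref{lemma:operator_bounds}). Once that structure is established, the lower bound is just an averaging inequality plus a constrained quadratic minimization, and the upper bound reduces to convex maximization over a simplex — both routine.
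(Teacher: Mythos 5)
Your proof is correct, and while its upper-bound half follows the paper's route, its lower-bound half is a genuinely different and more self-contained argument. For the upper bound you do exactly what the paper does: reduce to semivalues via \citet{monderer2002variations}, note that the spectral norm is convex in the cardinality weights, push the maximum to a vertex of the simplex, and compute $\lambda_{\max}\big(AA^\top\big) = c + (d-1)b = (d+1)\binom{d-1}{k^*}^{-1}$ there (your claimed vertex values $c = 2/\binom{d-1}{k^*}$, $b = 1/\binom{d-1}{k^*}$ check out, since the cross terms between the two cardinalities vanish and Pascal's rule collapses the two surviving sums; this is precisely the paper's computation of $A^{(k)}{A^{(k)}}^\top$). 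For the lower bound, however, the paper simply cites Theorem C.3 of \citet{wang2022data}, whereas you prove it from scratch: the permutation-equivariance structure $AA^\top = (c-b)I_d + b\,\vone_d\vone_d^\top$ gives
\begin{equation*}
    \norm{2}{A}^2 = \lambda_{\max}(AA^\top) \;\geq\; \tfrac{1}{d}\Tr(AA^\top) = c = 2\sum_{k=0}^{d-1}\binom{d-1}{k}w_k^2,
\end{equation*}
and Cauchy--Schwarz against the constraint $\sum_k \binom{d-1}{k} w_k = 1$ yields $c \geq 1/2^{d-2}$, with equality forcing constant $w_k$ (Banzhaf), where $b = 0$ makes the trace bound tight. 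This replaces the external dependency with an elementary averaging-plus-Cauchy--Schwarz argument and simultaneously delivers the uniqueness of the minimizer, which is a modest improvement over the paper's presentation; the only small gaps are that you defer the binomial bookkeeping for $b$ at the vertices (which is routine and mirrors \Cref{lemma:operator_bounds}) and implicitly use $b > 0$ there to select the eigenvalue $c + (d-1)b$ over $c - b$, both of which are easily filled.
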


\begin{proof}
    % Sketch: linearity, boundedness and symmetry imply semivalue. Semivalues are parameterized by weights on each cardinality, and associating a matrix with each weight lets us show that minimization is a convex problem in those weights. Data Banzhaf gives proof of lower bound, we refer readers to their theorem. For the upper bound, we use convexity to show that upper bound is achieved by the worst basis element. We probably need to derive a general solution for each basis element... Worst-case will come from leave-one-out, whose $AA^\top$ matrix is rank-one plus identity with 2 on diagonal and 1 off-diagonal.

    Following \citet{monderer2002variations}, solution concepts satisfying linearity, boundedness and symmetry are \textit{semivalues}. This means that they are \textit{probabilistic values} for which the probability distribution $p_i(S)$ depends only on the cardinality $|S|$ (see the definition of probabilistic values above). As a result, we can parameterize the summary technique by how much weight it places on each cardinality.

    Let $\alpha \in \R^d$ denote a probability distribution over the cardinalities $k = 1, \ldots, d$. We can write the attributions generated by any semivalue as follows:

    \begin{align*}
        \phi_i(f, x) = \sum_{k = 1}^d \alpha_k \sum_{\substack{S \subseteq [d] \setminus \{i\} \\ |S| = k - 1}} \binom{d - 1}{k - 1}^{-1} \Big( f(x_{S \cup \{i\}}) - f(\xS) \Big).
    \end{align*}

    Note that this resembles \cref{eq:probabilistic}, only with coefficients that are shared between all subsets with the same cardinality. Due to each attribution being a linear combination of the predictions, we can associate a matrix $A \in \R^{d \times 2^d}$ with each semivalue, similar to \Cref{prop:linear}. Furthermore, we can let $A$ be a linear combination of basis elements corresponding to each cardinality. We define the basis matrix $A^{(k)} \in \R^{d \times 2^d}$ for each cardinality $k = 1, \ldots, d$ as follows:

    \begin{align*}
        A^{(k)}_{iS} = \begin{cases}
            \binom{d - 1}{|S| - 1}^{-1} & \text{if} \; i \in S \; \text{and} \; |S| = k \\
            - \binom{d - 1}{|S|}^{-1} & \text{if} \; i \notin S \; \text{and} \; |S| = k - 1 \\
            0 & \text{otherwise}.
        \end{cases}
    \end{align*}

    With this, we can write the matrix associated with any semivalue as a function of the cardinality weights $\alpha$:

    \begin{align*}
        A(\alpha) = \sum_{k = 1}^d \alpha_k A^{(k)}. 
        % \label{eq:semivalue_linear}
    \end{align*}

    An important insight is that by composing a linear function
    % in \cref{eq:semivalue_linear}
    with the convex spectral norm function~\cite{boyd2004convex}, we can see that $\norm{2}{A(\alpha)}$ is a convex function of $\alpha$. This lets us find both the upper and lower bound of $\norm{2}{A(\alpha)}$ subject to $\alpha$ being a valid probability distribution.

    For the lower bound, we refer readers to Theorem C.3 from \citet{wang2022data}, who prove that the Banzhaf value achieves the minimum with $\alpha^*$ such that $\alpha^*_k = \binom{d - 1}{k - 1} / 2^{d - 1}$ and spectral norm $\norm{2}{A(\alpha^*)} = 1/2^{d/2 - 1}$. Notice that this is a valid probability distribution because we have $\sum_{k = 1}^d \binom{d - 1}{k - 1} = 2^{d - 1}$. Also note that the proof in~\cite{wang2022data} parameterizes $\alpha$ slightly differently.

    For the upper bound, we can leverage convexity to make the following argument. The fact that spectral norm is convex with $\alpha$ representing probabilities lets us write the following for any $\alpha$ value:

    \begin{align*}
        \sigma_{\max} \Big( A(\alpha) \Big) = \sigma_{\max}\left( \sum_{k = 1}^d \alpha_k \cdot A^{(k)} \right) \leq \sum_{k = 1}^d \alpha_k \cdot \sigma_{\max} \big( A^{(k)} \big) \leq \max_k \Big\{ \sigma_{\max} \big( A^{(k)} \big) \Big\}.
    \end{align*}

    Finding the upper bound therefore reduces to finding the basis element with the largest spectral norm. For each basis element $A^{(k)}$, we can derive the spectral norm by first finding the maximum eigenvalue of $A^{(k)}{A^{(k)}}^\top \in \R^{d \times d}$. In deriving this matrix, we first consider the diagonal entries:

    \begin{align*}
        \big(A^{(k)}{A^{(k)}}^\top \big)_{ii}
        &= \sum_{S: i \in S, |S| = k} \binom{d - 1}{|S| - 1}^{-2} + \sum_{S: i \notin S, |S| = k - 1} \binom{d - 1}{|S|}^{-2} \\
        &= \binom{d - 1}{k - 1} \binom{d - 1}{k - 1}^{-2} + \binom{d - 1}{k - 1} \binom{d - 1}{k - 1}^{-2} \\
        &= 2 \binom{d - 1}{k - 1}^{-1}.
    \end{align*}

    Next, we consider the off-diagonal entries:

    \begin{align*}
        \big(A^{(k)}{A^{(k)}}^\top \big)_{ij}
        &= \sum_{S: i \in S, j \in S, |S| = k} \binom{d - 1}{|S| - 1}^{-2} + \sum_{S: i \notin S, j \notin S, |S| = k - 1} \binom{d - 1}{|S|}^{-2} \\
        % + \sum_{S: i \in S, j \notin S, |S| = k - 1} + \sum_{S: i \notin S, j \in S, |S| = k - 1}
        &= \binom{d - 2}{k - 2} \binom{d - 1}{k - 1}^{-2} + \binom{d - 2}{k - 1} \binom{d - 1}{k - 1}^{-2} \\
        &= \frac{k - 1}{d - 1} \binom{d - 1}{k - 1}^{-1} + \frac{d - k}{d - 1} \binom{d - 1}{k - 1}^{-1} \\
        &= \binom{d - 1}{k - 1}^{-1}.
    \end{align*}
    
    To summarize, the entries of this matrix are the following:

    \begin{align*}
        \big(A^{(k)}{A^{(k)}}^\top \big)_{ij} = \begin{cases}
            2 \binom{d - 1}{k - 1}^{-1} &\text{if} \; i = j \\
            \binom{d - 1}{k - 1}^{-1} &\text{otherwise.}
        \end{cases}
    \end{align*}

    We therefore see that $A^{(k)}{A^{(k)}}^\top$ is equal to a rank-one matrix plus an identity matrix, or $A^{(k)}{A^{(k)}}^\top = b \vone_d \vone_d^\top + (c - b)I_d$, with values for $b = \big(A^{(k)}{A^{(k)}}^\top \big)_{ij}$ and $c = \big(A^{(k)}{A^{(k)}}^\top \big)_{ii}$ given above. Recognizing that $b > 0$ and therefore that $\vone_d$ is the leading eigenvector, we have the following expression for the maximum eigenvalue:

    \begin{align*}
        \lambda_{\max}\big( A^{(k)}{A^{(k)}}^\top \big)
        &= c + (d - 1) b \\
        &= \big(A^{(k)}{A^{(k)}}^\top \big)_{ii} + (d - 1) \cdot \big(A^{(k)}{A^{(k)}}^\top \big)_{ij} \\
        &= (d + 1) \binom{d - 1}{k - 1}^{-1}.
    \end{align*}

    Finally, we must consider how to maximize this as a function of $k$. Setting $k = 1$ or $k = d$ achieves the same value of $d + 1$, which yields the following spectral norm:

    \begin{align*}
        \norm{2}{A^{(d)}} = \sqrt{\lambda_{\max}\big( A^{(d)}{A^{(d)}}^\top \big)} = \sqrt{d + 1}.
    \end{align*}

    The case with $k = d$ corresponds to leave-one-out (see \Cref{lemma:operator_bounds}), and $k = 1$ corresponds to an analogous \textit{leave-one-in} approach that has been discussed in prior work~\cite{covert2020understanding, covert2021explaining}.
\end{proof}

% \textbf{Lemma 10: spectral norm for random-order values.}
\begin{applemma} % \label{lemma:random_order}
    When the linear operator $A$ satisfies the \textbf{boundedness} and \textbf{efficiency} properties, the spectral norm is bounded as follows,
    \begin{equation*}
        \sqrt{\frac{2}{d}} \leq \norm{2}{A} \leq \sqrt{2 + 2 \cdot \cos\left(\frac{\pi}{d + 1}\right)},
    \end{equation*}
    with $\sqrt{2/d}$ achieved by the Shapley value.
\end{applemma}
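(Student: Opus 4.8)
The plan is to exploit the game-theoretic characterization of operators satisfying boundedness and efficiency. Following \citet{monderer2002variations}, these are precisely the \emph{random-order values}: there exists a probability distribution $P$ over the orderings (permutations) of $[d]$ such that each attribution is the expected marginal contribution of a feature when players are inserted in the sampled order. Concretely, for a permutation $\pi$ let $A^\pi \in \R^{d \times 2^d}$ be the operator whose $i$-th row computes $v_{\{\pi(1),\ldots,\pi(i)\}} - v_{\{\pi(1),\ldots,\pi(i-1)\}}$; then $A = \sum_\pi P(\pi) A^\pi$ is a convex combination of these single-permutation operators. This representation is the random-order analogue of the cardinality-weighted decomposition used in the semivalue case (\Cref{lemma:semivalue}), and it drives both bounds.

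For the lower bound I would use efficiency alone. Efficiency states $\vone^\top A v = v_{[d]} - v_{\emptyset}$ for all $v$, hence $A^\top \vone = e_{[d]} - e_{\emptyset}$, a vector of Euclidean norm $\sqrt{2}$. Since $\norm{2}{A^\top \vone} \le \norm{2}{A} \cdot \norm{2}{\vone}$ and $\norm{2}{\vone} = \sqrt{d}$, we immediately get $\norm{2}{A} \ge \sqrt{2}/\sqrt{d} = \sqrt{2/d}$. Equality is attained by the Shapley value, whose spectral norm was already shown to equal $\sqrt{2/d}$ in \Cref{lemma:operator_bounds}.

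For the upper bound I would combine convexity with symmetry. Because $\norm{2}{\cdot}$ is convex and $P \mapsto \sum_\pi P(\pi) A^\pi$ is linear, the maximum over the probability simplex is attained at a vertex, so $\norm{2}{A} \le \max_\pi \norm{2}{A^\pi}$. Relabeling features acts on $A^\pi$ by an orthogonal row permutation and an orthogonal column (subset) permutation, which leaves the spectral norm invariant; thus all $\norm{2}{A^\pi}$ coincide and it suffices to treat the identity ordering. There the only relevant columns form the nested chain $\emptyset \subset \{1\} \subset \cdots \subset [d]$, and the $i$-th row is $e_{\{1,\ldots,i\}} - e_{\{1,\ldots,i-1\}}$, so a direct computation gives $A^\pi (A^\pi)^\top = \mathrm{tridiag}(-1, 2, -1) \in \R^{d \times d}$. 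The eigenvalues of this tridiagonal Toeplitz matrix are $2 - 2\cos\!\big(k\pi/(d+1)\big)$ for $k = 1, \ldots, d$, whose maximum (at $k = d$) is $2 + 2\cos\!\big(\pi/(d+1)\big)$, yielding $\norm{2}{A^\pi} = \sqrt{2 + 2\cos(\pi/(d+1))}$.

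The main obstacle is the upper bound. The lower bound is a one-line consequence of efficiency, whereas the upper bound requires the random-order characterization, the convexity-plus-symmetry reduction to a single permutation, and the recognition of the resulting Gram matrix as the standard second-difference operator with a known closed-form spectrum. The convexity step closely mirrors \Cref{lemma:semivalue}, so the genuinely new work is identifying the tridiagonal structure of $A^\pi (A^\pi)^\top$ and extracting its largest eigenvalue.
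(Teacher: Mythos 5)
Your proof is correct, and your upper bound follows the paper's argument essentially verbatim: both use the random-order (quasivalue) characterization from \citet{monderer2002variations}, reduce to a single ordering via convexity of the spectral norm together with permutation-invariance of $\norm{2}{A^{(\pi)}}$, and identify $A^{(\pi)}{A^{(\pi)}}^\top$ as the tridiagonal Toeplitz matrix $\mathrm{tridiag}(-1,2,-1)$ whose largest eigenvalue is $2 + 2\cos\left(\pi/(d+1)\right)$. Where you genuinely diverge is the lower bound. The paper proves it by a symmetrization argument: for an arbitrary distribution $p$ over orderings it forms all $d!$ permuted copies $p_{\pi'}$, observes that each yields an operator with the same spectral norm (the Gram matrices are similar), and uses convexity to conclude that the uniform distribution---i.e., the Shapley value---minimizes $\norm{2}{A(p)}$ over all random-order values, then invokes the value $\sqrt{2/d}$ computed in \Cref{lemma:operator_bounds}. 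You instead read the lower bound directly off efficiency: since $\vone^\top A v = v_{[d]} - v_{\{\}}$ for all $v$, you get $A^\top \vone = e_{[d]} - e_{\{\}} \in \R^{2^d}$, a vector of norm $\sqrt{2}$, whence $\sqrt{2} = \norm{2}{A^\top \vone} \leq \norm{2}{A} \cdot \norm{2}{\vone} = \norm{2}{A}\sqrt{d}$. This one-line argument needs neither boundedness nor the random-order representation, so it proves something slightly stronger than the stated lemma: \emph{every} linear operator satisfying efficiency alone has $\norm{2}{A} \geq \sqrt{2/d}$, with the Shapley value attaining it. What the paper's longer route buys in exchange is structural information: it exhibits the Shapley value as the minimizer of the spectral norm \emph{within} the random-order class (the most robust quasivalue), rather than merely verifying that it attains a bound valid for a larger class. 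Both proofs are sound; yours is arguably the cleaner path to the lemma as stated.
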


\begin{proof}
    % Sketch: linearity, boundedness and efficiency imply random-order value. Random order values are parameterized by weights on each permutation. We can write each A as a linear function of basis As associated with each permutation. We can see that the spectral norm is a convex problem in those weights. For lower bound, we argue that for any distribution, we can come up with $d! - 1$ distributions with equal spectral norm whose average gives uniform weight to all permutations; convexity says that spectral norm for uniform distribution cannot be higher, so it's the lower bound. This is the Shapley value, which we considered in previous result. For upper bound, we use convexity to show that upper bound is achieved by worst basis element. All basis elements have the same spectral norm (their AAT's are ``similar,'' identical up to permutations), which comes from symmetric tridiagonal Toeplitz matrix, which have a closed-form solution for max eigenvalue.

    Following \citet{monderer2002variations}, solution concepts satisfying linearity, boundedness and efficiency are \textit{random-order values} (or \textit{quasivalues}). This means that the attributions are the average marginal contributions across a distribution of feature orderings. Reasoning about such orderings requires new notation, so we use $\pi$ to denote an ordering over $[d]$, and $\pi(i)$ to denote the position of index $i$ within the ordering. We also let $\text{Pre}(\pi, i)$ denote the set of all elements preceding $i$ in the ordering, or $\text{Pre}(\pi, i) \equiv \{j : \pi(j) < \pi(i)\}$. 
    Similarly, we let $\text{Pre}(\pi, i, j)$ be an indicator of whether $j$ directly precedes $i$ in the ordering, or $\text{Pre}(\pi, i, j) \equiv \sI\{ \pi(j) = \pi(i) - 1\}$.
    Finally, we use $\Pi$ to denote the set of all orderings, where $|\Pi| = d!$.
    
    Now, given a distribution over orderings $p(\pi)$, we can write the attributions for any random-order value as follows:

    \begin{align*}
        \phi_i(f, x) &= \sum_{\pi \in \Pi} p(\pi) \Big( f(x_{\text{Pre}(\pi, i) \cup \{i\}}) - f(x_{\text{Pre}(\pi, i)}) \Big) \\
        &= \sum_{S \subseteq [d] \setminus \{i\}} \Big( \sum_{\pi: \text{Pre}(\pi, i) = S} p(\pi) \Big) \Big( f(x_{S \cup \{i\}}) - f(x_S) \Big).
    \end{align*}

    Based on this formulation, we can associate a matrix $A \in \R^{d \times 2^d}$ with each random-order value based on the coefficients applied to each prediction. The entries are defined as follows:

    \begin{align*}
        A_{iS} = \begin{cases}
            \sum_{\pi: \text{Pre}(\pi, i) = S \setminus \{i\}} p(\pi) & \text{if} \; i \in S \\
            - \sum_{\pi: \text{Pre}(\pi, i) = S} p(\pi) & \text{otherwise}.
        \end{cases}
    \end{align*}
    
    Furthermore, we can view $A$ as a linear combination of basis elements $A^{(\pi)} \in \R^{d \times 2^d}$ corresponding to each individual ordering. We define the basis matrix $A^{(\pi)}$ associated with each $\pi \in \Pi$ as follows:

    \begin{align*}
        A^{(\pi)}_{iS} = \begin{cases}
            1 & \text{if} \; \text{Pre}(\pi, i) = S \setminus \{i\} \\
            -1 & \text{if} \; \text{Pre}(\pi, i) = S \\
            0 & \text{otherwise}.
        \end{cases}
    \end{align*}

    With this, we can write the linear operator $A$ corresponding to a distribution $p(\pi)$ as the following linear combination:

    \begin{align*}
        A(p) = \sum_{\pi \in \Pi} p(\pi) \cdot A^{(\pi)}. 
        % \label{eq:random_order_linear}
    \end{align*}

    By composing a linear operation with the convex spectral norm function~\cite{boyd2004convex}, we can see that $\norm{2}{A(p)}$ is a convex function of the probabilities $p(\pi)$ assigned to each ordering. This will allow us to derive a form for both the upper and lower bound on $\norm{2}{A(p)}$.

    Beginning with the upper bound, we can use convexity to write the following:

    \begin{align*}
        \sigma_{\max}\big( A(p) \big) = \sigma_{\max}\Big( \sum_{\pi \in \Pi} p(\pi) \cdot A^{(\pi)} \Big) \leq \sum_{\pi \in \Pi} p(\pi) \cdot \sigma_{\max}\big( A^{(\pi)} \big) \leq \max_{\pi \in \Pi} \Big\{ \sigma_{\max}\big( A^{(\pi)} \big) \Big\}.
    \end{align*}

    Finding the upper bound therefore reduces to finding the basis element with the largest spectral norm. For an arbitrary ordering $\pi$, we can derive the spectral norm via the matrix $A^{(\pi)}{A^{(\pi)}}^\top$, whose entries are given by:

    \begin{align*}
        \Big( A^{(\pi)}{A^{(\pi)}}^\top \Big)_{ij} = \begin{cases}
            2 & \text{if} \; i = j \\
            -1 & \text{if} \; i \neq j \; \text{and} \; \text{Pre}(\pi, i, j) \\
            -1 & \text{if} \; i \neq j \; \text{and} \; \text{Pre}(\pi, j, i) \\
            0 & \text{otherwise}.
        \end{cases}
    \end{align*}

    Within these matrices, the diagonal entries are always equal to $2$, most rows contain two $-1$'s, and two rows (corresponding to the first and last element in the ordering $\pi$) contain a single $-1$. We can see that the matrices $A^{(\pi)}{A^{(\pi)}}^\top$ are identical up to a permutation of the rows and columns, or that they are similar matrices. They therefore share the same eigenvalues.
    
    Without loss of generality, we choose to focus on the ordering $\pi'$ that places the indices in their original order, or $\pi'(i) = i$ for $i \in [d]$. The corresponding matrix $A^{(\pi')}{A^{(\pi')}}^\top$ has the following form:

    \begin{align*}
        A^{(\pi')}{A^{(\pi')}}^\top = \begin{pmatrix}
            2 & -1 & 0 & 0 & \cdots & 0 \\
            -1 & 2 & -1 & 0 & \cdots & 0 \\
            0 & -1 & 2 & -1 & \cdots & 0 \\
            0 & 0 & \ddots & \ddots & \ddots & \vdots \\
            \vdots & \vdots & \ddots & -1 & 2 & -1 \\
            0 & \cdots & \cdots & 0 & -1 & 2
        \end{pmatrix}
    \end{align*}

    This is a tridiagonal Toeplitz matrix, a type of matrix whose eigenvalues have a known closed-form solution~\cite{noschese2013tridiagonal}. The maximum eigenvalue is the following:

    \begin{align*}
        \lambda_{\max} \Big( A^{(\pi')}{A^{(\pi')}}^\top \Big) &= 2 + 2 \sqrt{(-1) \cdot (-1)} \cos \left( \frac{\pi}{d + 1} \right) \\
        &= 2 + 2 \cdot \cos \left( \frac{\pi}{d + 1} \right).
    \end{align*}

    We therefore have the following upper bound on the spectral norm for a random-order value:

    \begin{align*}
        \norm{2}{A} \leq \sqrt{2 + 2 \cdot \cos \left( \frac{\pi}{d + 1} \right)}.
    \end{align*}

    This value tends asymptotically towards 2, and it is achieved by any random-order value that places all its probability mass on a single ordering. Such a \textit{single-order value} approach has not been discussed in the literature, but it is a special case of a proposed variant of the Shapley value~\cite{frye2020asymmetric}.

    Next, we consider the lower bound. We now exploit the convexity of $\norm{2}{A(p)}$ to argue that a specific distribution achieves a lower spectral norm than any other distribution.
    
    Consider an arbitrary distribution $p(\pi)$ over orderings, with the associated linear operator $A(p)$ and spectral norm $\norm{2}{A(p)}$. Our proof technique is to find other distributions that result in different linear operators $A$ that share the same spectral norm. For any ordering $\pi' \in \Pi$, we can imagine permuting the distribution $p(\pi)$ in the following sense: we generate a new distribution $p_{\pi'}(\pi)$ such that $p_{\pi'}(\pi) = p(\pi(\pi'))$, where $\pi(\pi')$ is a modified ordering with $\pi(\pi')(i) = \pi(\pi'(i))$. That is, the probability mass is reassigned based on a permutation of the indices.

    Crucially, the linear operator $A(p_{\pi'})$ is different from $A(p)$ but shares the same spectral norm. This is because $A(p_{\pi'})A(p_{\pi'})^\top$ is similar to $A(p)A(p)^\top$, or they are equal up to a permutation of the rows and columns. We therefore have $\norm{2}{A(p_{\pi'})} = \norm{2}{A(p)}$, and due to convexity we have

    \begin{align*}
        \bignorm{2}{A\big( p / 2 + p_{\pi'} / 2 \big) }
        \leq \frac{1}{2} \norm{2}{A(p_{\pi'})} + \frac{1}{2} \norm{2}{A(p)} = \norm{2}{A(p)}.
    \end{align*}

    This does not imply that $p / 2 + p_{\pi'} / 2$ is a minimizer, but we can now generalize this technique to identify a lower bound. Rather than generating a single permuted distribution, we propose generating all possible permuted distributions. That is, we consider the set of all $d!$ permutations $\pi' \in \Pi$ and generate the corresponding distributions $\{p_{\pi'}: \pi' \in \Pi\}$. Notice that when we take the mean of these distributions, the result is identical regardless of the original distributions $p(\pi)$, and that it assigns uniform probability mass to all orderings, because we have:

    \begin{align*}
        \frac{1}{d!} \sum_{\pi' \in \Pi} p_{\pi'}(\pi) = \frac{1}{d!} \sum_{\pi' \in \Pi} p(\pi') = \frac{1}{d!}.
    \end{align*}
    
    We can then exploit convexity to write the following inequality:

    \begin{align*}
        \Bignorm{2}{A \big( \sum_{\pi' \in \Pi} p_{\pi'} / d! \big)} \leq \norm{2}{A(p)}.
    \end{align*}

    We can make this argument for any original distribution $p$, so this provides a global lower bound. The random-order value with uniform weighting is the Shapley value, whose spectral norm was derived in \Cref{lemma:operator_bounds}. We therefore conclude with the following lower bound for random-order values, which is achieved by the Shapley value:

    \begin{align*}
        \norm{2}{A} \geq \sqrt{\frac{2}{d}}.
    \end{align*}
\end{proof}

% \textbf{Lemma 11: spectral norm for Shapley value.}
\begin{applemma} % \label{lemma:shapley}
    When the linear operator $A$ satisfies the \textbf{boundedness}, \textbf{symmetry} and \textbf{efficiency} properties, the spectral norm is $\norm{2}{A} = \sqrt{2 / d}$.
\end{applemma}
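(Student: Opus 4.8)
The plan is to pin $A$ down to a single operator using the three properties, and then read off its spectral norm from the computation already performed in the proof of \Cref{lemma:operator_bounds}. The identification rests on the classical fact that the Shapley value is the unique linear solution concept satisfying \textbf{boundedness}, \textbf{symmetry}, and \textbf{efficiency}~\cite{shapley1953value, monderer2002variations}, so the crux is to verify this uniqueness within the framework developed above and then invoke the known norm value.

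First I would recall from the proof of \Cref{lemma:semivalue} that boundedness and symmetry together force $A$ to be a semivalue, hence of the form $A(\alpha) = \sum_{k=1}^d \alpha_k A^{(k)}$ for a probability vector $\alpha \in \R^d$ over cardinalities. It then remains to impose efficiency and show that it selects the uniform weighting $\alpha_k = 1/d$. I would write efficiency as the coefficient identity $\sum_{i=1}^d A_{iS} = \ind\{S = [d]\} - \ind\{S = \{\}\}$, required for every $S \subseteq [d]$ since $\vone^\top A v = v_{[d]} - v_{\{\}}$ must hold for all $v$. Summing the entries of $A(\alpha)$ over rows for a set $S$ with $|S| = m$ gives $m\alpha_m\binom{d-1}{m-1}^{-1} - (d-m)\alpha_{m+1}\binom{d-1}{m}^{-1}$; since $m\binom{d-1}{m-1}^{-1} = (d-m)\binom{d-1}{m}^{-1}$, the interior constraints ($0 < m < d$) collapse to the chain $\alpha_m = \alpha_{m+1}$, while the boundary cases $S = \{\}$ and $S = [d]$ pin down $\alpha_1 = \alpha_d = 1/d$. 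Together these force $\alpha_k = 1/d$ for all $k$, so $A$ is exactly the Shapley operator of \Cref{tab:summary}.

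With $A$ identified as the Shapley value, the conclusion is immediate: the proof of \Cref{lemma:operator_bounds} already computes $AA^\top$ as a rank-one-plus-identity matrix with leading eigenvalue $\lambda_{\max}(AA^\top) = 2/d$, yielding $\norm{2}{A} = \sqrt{2/d}$. The only genuinely delicate point is the uniqueness step, i.e., confirming that efficiency truly collapses the semivalue family to a single point rather than to a positive-dimensional face; this is exactly the classical Shapley characterization, and the self-contained check above shows that the efficiency constraints chain together into the full set of equalities $\alpha_1 = \cdots = \alpha_d = 1/d$, leaving no remaining degrees of freedom.
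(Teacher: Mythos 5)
Your proof is correct and follows the same overall route as the paper's: identify $A$ as the Shapley operator via uniqueness among linear solution concepts satisfying boundedness, symmetry, and efficiency, then read off $\norm{2}{A} = \sqrt{2/d}$ from the spectral-norm computation in \Cref{lemma:operator_bounds}. The difference lies in how the uniqueness step is handled. The paper treats it as a black box, simply citing the classical characterization~\cite{shapley1953value, monderer2002variations}, whereas you verify it self-containedly within the semivalue parametrization $A(\alpha) = \sum_{k=1}^d \alpha_k A^{(k)}$ already set up in the proof of \Cref{lemma:semivalue}. Your verification is sound: efficiency for all $v$ is indeed equivalent to the coefficient identities $\sum_{i=1}^d A_{iS} = \ind\{S = [d]\} - \ind\{S = \{\}\}$; for $|S| = m$ the row sum is $m\alpha_m\binom{d-1}{m-1}^{-1} - (d-m)\alpha_{m+1}\binom{d-1}{m}^{-1}$, and since $m\binom{d-1}{m-1}^{-1} = (d-m)\binom{d-1}{m}^{-1} = m!(d-m)!/(d-1)!$, the interior constraints collapse to $\alpha_m = \alpha_{m+1}$, while $S = \{\}$ and $S = [d]$ give $\alpha_1 = \alpha_d = 1/d$, forcing the uniform weighting that is exactly the Shapley entry of \Cref{tab:summary}. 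What your version buys is that the lemma then rests only on machinery the paper has already invoked (boundedness and symmetry imply the semivalue form, via~\cite{monderer2002variations}) plus elementary algebra, rather than on the full classical axiomatization; the cost is the extra coefficient bookkeeping, which the paper's two-line citation avoids. One small caveat worth making explicit if you wrote this up: your uniqueness argument establishes uniqueness \emph{within semivalues}, which is precisely the right scope here because boundedness and symmetry are among the hypotheses, but it is not by itself a proof of the unrestricted classical theorem.
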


\begin{proof}
    The result follows from the spectral norm result in \Cref{lemma:operator_bounds}, and the fact that the Shapley value is the only solution concept to satisfy linearity, boundedness, symmetry and efficiency~\cite{monderer2002variations}.
\end{proof}

\clearpage
\section{Proofs: main results} \label{app:explanations}

We now re-state and prove our results from \Cref{sec:explanations}.

% \textbf{Theorem 1: explanation robustness to input perturbations.}
\begin{apptheorem} % \label{theorem:input}
    The robustness of removal-based explanations to input perturbations is given by the following meta-formula,
    \begin{equation*}
        \norm{2}{\phi(f, x) - \phi(f, x')} \leq g(\textnormal{removal}) \cdot h(\textnormal{summary}) \cdot \norm{2}{x - x'},
    \end{equation*}
    where the factors for each method are defined as follows:
    \begin{align*}
        g(\textnormal{removal}) &= \begin{cases}
            L & \text{if \textnormal{removal} $=$ \textbf{baseline} or \textbf{marginal}}  \\
            L + 2BM & \text{if \textnormal{removal} $=$ \textbf{conditional}},
        \end{cases} \\
        h(\textnormal{summary}) &= \begin{cases}
            2 \sqrt{d} & \text{if \textnormal{summary} $=$ \textbf{Shapley}, \textbf{Banzhaf}, or \textbf{leave-one-out}} \\
            \sqrt{d} & \text{if \textnormal{summary} $=$ \textbf{mean when included}}.
        \end{cases}
    \end{align*}
\end{apptheorem}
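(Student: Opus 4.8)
The plan is to obtain \Cref{theorem:input} as a composition of three results already established: \Cref{prop:linear}, which writes each attribution as a fixed linear operator applied to the prediction vector; the operator-norm inequality of \Cref{lemma:operator_bounds}; and the Lipschitz continuity of the partial prediction function from \Cref{lemma:marginal,lemma:conditional_updated}. Concretely, I would set $\phi(f, x) = Av$ and $\phi(f, x') = Av'$, where $v, v' \in \R^{2^d}$ have entries $v_S = f(\xS)$ and $v'_S = f(\xS')$ for each $S \subseteq [d]$, with $\xS'$ denoting the restriction of $x'$ to the coordinates in $S$. The same operator $A$ governs both inputs, since for the four summary techniques covered by the theorem (Shapley, Banzhaf, leave-one-out, mean when included) the coefficients are input-independent. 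Applying the second inequality of \Cref{lemma:operator_bounds} then reduces the problem to bounding a Chebyshev distance:
\begin{equation*}
    \norm{2}{\phi(f, x) - \phi(f, x')} = \norm{2}{Av - Av'} \leq \norm{1,\infty}{A} \cdot \norm{\infty}{v - v'}.
\end{equation*}

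It remains to control $\norm{\infty}{v - v'} = \max_{S \subseteq [d]} |f(\xS) - f(\xS')|$ uniformly in $S$. The one point requiring care is that a perturbation of the full input $x \to x'$ perturbs each restricted subvector by \emph{at most} the full amount: since $\xS, \xS'$ keep only the coordinates indexed by $S$, we have $\norm{2}{\xS - \xS'}^2 = \sum_{i \in S}(x_i - x_i')^2 \le \norm{2}{x - x'}^2$, hence $\norm{2}{\xS - \xS'} \leq \norm{2}{x - x'}$ for every $S$. Combining this with the per-subset Lipschitz bound---constant $L$ in the baseline or marginal case (\Cref{lemma:marginal}) and $L + 2BM$ in the conditional case (\Cref{lemma:conditional_updated})---gives, for all $S$,
\begin{equation*}
    |f(\xS) - f(\xS')| \leq g(\textnormal{removal}) \cdot \norm{2}{\xS - \xS'} \leq g(\textnormal{removal}) \cdot \norm{2}{x - x'},
\end{equation*}
so that $\norm{\infty}{v - v'} \leq g(\textnormal{removal}) \cdot \norm{2}{x - x'}$.

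Chaining the two displays yields the claimed meta-formula with $h(\textnormal{summary}) = \norm{1,\infty}{A}$, and the final step is bookkeeping: reading the operator-norm column of \Cref{tab:summary}, leave-one-out, Shapley, and Banzhaf all satisfy the boundedness property and hence give $\norm{1,\infty}{A} = 2\sqrt{d}$, while the mean-when-included technique (RISE) gives $\norm{1,\infty}{A} = \sqrt{d}$, matching the two branches of $h(\textnormal{summary})$. I do not anticipate a serious obstacle, as the argument is fundamentally a composition; the only subtlety worth flagging is the subvector inequality $\norm{2}{\xS - \xS'} \leq \norm{2}{x - x'}$, which is precisely what lets a single global Lipschitz constant dominate every entry of $v - v'$ simultaneously and thereby makes the $\norm{\infty}{v - v'}$ form of \Cref{lemma:operator_bounds}---rather than the spectral-norm form---the natural tool here.
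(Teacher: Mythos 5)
Your proposal is correct and follows essentially the same route as the paper's own proof: both reduce the problem via \Cref{prop:linear} and the $\norm{1,\infty}{A} \cdot \norm{\infty}{v - v'}$ inequality of \Cref{lemma:operator_bounds}, control $\norm{\infty}{v - v'}$ uniformly over subsets using \Cref{lemma:marginal,lemma:conditional_updated} together with the subvector inequality $\norm{2}{\xS - \xS'} \leq \norm{2}{x - x'}$, and then substitute the operator-norm values from \Cref{tab:summary}. The only difference is cosmetic ordering of the two bounding steps, so nothing further is needed.
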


\begin{proof}
    Given two inputs $x$ and $x'$, the results in \Cref{lemma:marginal} and \Cref{lemma:conditional_updated} imply the following bound for any feature set $\vxS$,

    \begin{align*}
        |f(\xS') - f(\xS')| \leq L' \cdot \norm{2}{\xS - \xS'},
    \end{align*}

    where we have $L' = L$ when using the baseline or marginal approach and $L' = L + 2BM$ when using the conditional approach. This bound depends on the specific subset, we also have a bound across all subsets because $\norm{2}{\xS - \xS'} \leq \norm{2}{x - x'}$ for all $S \subseteq [d]$. In terms of the corresponding vectors $v, v' \in \R^{2^d}$, this implies the following bound:

    \begin{align*}
        \norm{\infty}{v - v'} = \max_{S \subseteq [d]} |v_S - v'_S| \leq L' \cdot \norm{2}{x - x'}.
    \end{align*}

    Next, \Cref{lemma:operator_bounds} shows that we have the following bound on the difference in attributions:

    \begin{align*}
        \norm{2}{\phi(f, x) - \phi(f, x')} = \norm{2}{Av - Av'} \leq \norm{1, \infty}{A} \cdot \norm{\infty}{v - v'},
    \end{align*}

    where the norm $\norm{1, \infty}{A}$ depends on the chosen summary technique. Combining these bounds, we arrive at the following result:

    \begin{align*}
        \norm{2}{\phi(f, x) - \phi(f, x')} \leq \norm{1, \infty}{A} \cdot L' \cdot \norm{2}{x - x'}.
    \end{align*}

    Substituting in the specific values for $L'$ and $\norm{1, \infty}{A}$ completes the proof.
    
\end{proof}

% \textbf{Theorem 2: explanation robustness to model perturbations.}
\begin{apptheorem} % \label{theorem:model}
    The robustness of removal-based explanations to model perturbations is given by the following meta-formula,
    \begin{equation*}
        \norm{2}{\phi(f, x) - \phi(f', x)} \leq h(\textnormal{summary}) \cdot \lVert f - f' \rVert,
    \end{equation*}
    where the functional distance and factor associated with the summary technique are specified as follows:
    \begin{align*}
        \lVert f - f' \rVert &= \begin{cases}
            \norm{\infty}{f - f'} & \text{if \textnormal{removal} $=$ \textbf{baseline} or \textbf{marginal}} \\
            \norm{\infty, \cX}{f - f'} & \text{if \textnormal{removal} $=$ \textbf{conditional}},
        \end{cases} \\
        h(\textnormal{summary}) &= \begin{cases}
            2 \sqrt{d} & \text{if \textnormal{summary} $=$ \textbf{Shapley}, \textbf{Banzhaf}, or \textbf{leave-one-out}} \\
            \sqrt{d} & \text{if \textnormal{summary} $=$ \textbf{mean when included}}.
        \end{cases}
    \end{align*}
\end{apptheorem}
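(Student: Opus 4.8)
The plan is to mirror the proof of the input-perturbation theorem (\Cref{theorem:input}) almost verbatim, replacing the prediction-robustness lemmas for input perturbations with their model-perturbation analogues. The two models $f, f'$ are evaluated at a common input $x$, inducing prediction vectors $v, v' \in \R^{2^d}$ with entries $v_S = f(\xS)$ and $v'_S = f'(\xS)$ for the chosen removal technique. By \Cref{prop:linear}, the attributions are $\phi(f, x) = Av$ and $\phi(f', x) = Av'$ for the same linear operator $A$ (the operator depends only on the summary technique, not on the model). Hence $\norm{2}{\phi(f, x) - \phi(f', x)} = \norm{2}{Av - Av'}$, and the whole task reduces to controlling $\norm{\infty}{v - v'}$ and invoking the operator bound.

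First I would bound $\norm{\infty}{v - v'} = \max_{S \subseteq [d]} |f(\xS) - f'(\xS)|$ uniformly over all feature sets. For the \textbf{baseline} or \textbf{marginal} removal techniques, \Cref{lemma:model_everywhere} gives $|f(\xS) - f'(\xS)| \leq \norm{\infty}{f - f'}$ for every $S$, so $\norm{\infty}{v - v'} \leq \norm{\infty}{f - f'}$. For the \textbf{conditional} technique, \Cref{lemma:model_manifold} gives the tighter bound $|f(\xS) - f'(\xS)| \leq \norm{\infty, \cX}{f - f'}$ for every on-manifold $\xS$, so $\norm{\infty}{v - v'} \leq \norm{\infty, \cX}{f - f'}$. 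In both cases this is exactly the functional distance $\lVert f - f' \rVert$ named in the statement. Next I would apply the second inequality of \Cref{lemma:operator_bounds}, namely
\begin{equation*}
    \norm{2}{Av - Av'} \leq \norm{1, \infty}{A} \cdot \norm{\infty}{v - v'},
\end{equation*}
and chain the two bounds to obtain $\norm{2}{\phi(f, x) - \phi(f', x)} \leq \norm{1, \infty}{A} \cdot \lVert f - f' \rVert$. Substituting the operator-norm values from \Cref{tab:summary} — which give $\norm{1, \infty}{A} = 2\sqrt{d}$ for Shapley, Banzhaf, and leave-one-out (the boundedness case) and $\norm{1, \infty}{A} = \sqrt{d}$ for RISE / mean-when-included — yields exactly $h(\textnormal{summary})$ and completes the proof.

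There is no genuinely hard step here; the result is a bookkeeping composition of earlier lemmas. The only point requiring care is to match the functional distance to the removal technique consistently: the conditional case is allowed to use the weaker on-manifold distance $\norm{\infty, \cX}{f - f'}$ precisely because imputed samples from $p(\vxbarS \mid \xS)$ remain on $\cX$ (the $Q_{\xS}(\cX) = 1$ observation underlying \Cref{lemma:model_manifold}), so the off-manifold disagreement of $f$ and $f'$ never enters the bound. A secondary point worth flagging explicitly is that here we invoke the \emph{operator} norm $\norm{1,\infty}{A}$ rather than the spectral norm, since the prediction bound is an $\ell_\infty$ bound on $v - v'$; this is why $h(\textnormal{summary})$ does not distinguish among Shapley, Banzhaf, and leave-one-out, in contrast to the finer spectral-norm comparisons in \Cref{lemma:semivalue,lemma:random_order}.
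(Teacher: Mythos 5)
Your proposal is correct and follows essentially the same route as the paper's proof: bound $\norm{\infty}{v - v'}$ uniformly over feature sets via \Cref{lemma:model_manifold,lemma:model_everywhere}, then apply the $\norm{1,\infty}{A}$ operator-norm inequality from \Cref{lemma:operator_bounds} and substitute the values from \Cref{tab:summary}. Your added remarks (the $Q_{\xS}(\cX)=1$ justification for the conditional case, and the choice of the operator norm over the spectral norm) are exactly the reasoning implicit in the paper's argument.
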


\begin{proof}
    Given two models $f$ and $f'$, the results in \Cref{lemma:model_manifold,lemma:model_everywhere} imply the following bound for any feature set $\vxS$,

    \begin{align*}
        |f(\xS) - f'(\xS)| \leq \lVert f - f' \rVert,
    \end{align*}

    where the specific norm is $\norm{\infty}{f - f'}$ when using the baseline or marginal approach and $\norm{\infty,\cX}{f - f'}$ when using the conditional approach. In terms of the corresponding vectors $v, v' \in \R^{2^d}$, this implies the following bound:

    \begin{align*}
        \norm{\infty}{v - v'} = \max_{S \subseteq [d]} |v_S - v'_S| \leq \lVert f - f' \rVert.
    \end{align*}

    Next, \Cref{lemma:operator_bounds} showed that we have the following bound on the difference in attributions:

    \begin{align*}
        \norm{2}{\phi(f, x) - \phi(f', x)} = \norm{2}{Av - Av'} \leq \norm{1, \infty}{A} \cdot \norm{\infty}{v - v'},
    \end{align*}

    where the norm $\norm{1, \infty}{A}$ depends on the chosen summary technique. Combining these bounds, we arrive at the following result:

    \begin{align*}
        \norm{2}{\phi(f, x) - \phi(f', x)} \leq \norm{1, \infty}{A} \cdot \lVert f - f' \rVert.
    \end{align*}

    Substituting in the specific values for $\lVert f - f \rVert$ and $\norm{1, \infty}{A}$ completes the proof.
    
\end{proof}

Next, we present a corollary that considers the case of simultaneous input and model perturbations. While this case has received less attention in the literature, we include this result for completeness because it is easy to analyze given our previous analysis.

% \textbf{Corollary 1: robustness to simultaneous perturbations.}
\begin{corollary} \label{corr:combined}
    The robustness of removal-based explanations to simultaneous input and model perturbations is given by the following meta-formula,
    \begin{equation*}
        \norm{2}{\phi(f, x) - \phi(f', x')} \leq g(\textnormal{removal}) \cdot h(\textnormal{summary}) \cdot \norm{2}{x - x'} + h(\textnormal{summary}) \cdot \lVert f - f' \rVert,
    \end{equation*}
    where the functional distance and factors associated with the removal and summary technique are given in \Cref{theorem:input,theorem:model}.
\end{corollary}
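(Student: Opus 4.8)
The plan is to reduce this combined-perturbation bound to the two single-perturbation results already established, namely \Cref{theorem:input,theorem:model}, via a single application of the triangle inequality. First I would introduce an intermediate term that changes only the input while holding the model fixed. Writing
\begin{equation*}
    \phi(f, x) - \phi(f', x') = \big( \phi(f, x) - \phi(f, x') \big) + \big( \phi(f, x') - \phi(f', x') \big),
\end{equation*}
the triangle inequality for the Euclidean norm gives
\begin{equation*}
    \norm{2}{\phi(f, x) - \phi(f', x')} \leq \norm{2}{\phi(f, x) - \phi(f, x')} + \norm{2}{\phi(f, x') - \phi(f', x')}.
\end{equation*}

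The key observation is that each of the two resulting terms is a \emph{pure} perturbation of the kind already handled. The first term compares attributions for the same model $f$ at two different inputs $x, x'$, so it is exactly the setting of \Cref{theorem:input}, which applies because $f$ satisfies \Cref{asmp:lipschitz} and yields the bound $g(\textnormal{removal}) \cdot h(\textnormal{summary}) \cdot \norm{2}{x - x'}$. The second term compares attributions for two different models $f, f'$ at the same input $x'$, which is exactly the setting of \Cref{theorem:model} and yields $h(\textnormal{summary}) \cdot \lVert f - f' \rVert$, with the functional distance determined by the chosen removal technique. Summing these two bounds produces the claimed meta-formula, and substituting the explicit values of $g$, $h$, and $\lVert f - f' \rVert$ from \Cref{theorem:input,theorem:model} completes the argument.

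The main thing to be careful about is the ordering of the intermediate term. By placing the input perturbation first (on model $f$) and the model perturbation second (at input $x'$), I ensure that the input-perturbation bound is invoked only for the model $f$, which is the one assumed to be $L$-Lipschitz; this avoids requiring any Lipschitz hypothesis on the perturbed model $f'$. The model-perturbation step, by contrast, relies only on the prediction bound in \Cref{asmp:bounded} and the functional distance, so it imposes no smoothness requirement. Since both single-perturbation theorems are stated for a single fixed removal and summary technique, and the corollary holds that same technique fixed throughout, the factors $g$, $h$, and $\lVert f - f' \rVert$ are consistent across the two terms, and no genuine obstacle arises beyond the bookkeeping just described.
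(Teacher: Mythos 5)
Your proposal is correct and follows essentially the same route as the paper: the intermediate term $\phi(f, x')$, a single triangle inequality, then \Cref{theorem:input} for the input-perturbation term and \Cref{theorem:model} for the model-perturbation term. Your remark about ordering the decomposition so that the Lipschitz assumption is invoked only for $f$ is a nice point of care that the paper's proof leaves implicit, but it does not change the argument.
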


\begin{proof}
    This result follows from triangle inequality:

    \begin{align*}
        \norm{2}{\phi(f, x) - \phi(f', x')} \leq \norm{2}{\phi(f, x) - \phi(f, x')} + \norm{2}{\phi(f, x') - \phi(f', x')}.
    \end{align*}

    The bounds for the first and second terms are given by \Cref{theorem:input} and \Cref{theorem:model}, respectively.
\end{proof}

\clearpage
\section{The role of sampling when removing features} \label{app:sampling}

In this section, we present results analyzing the impact of sampling when removing features. Sampling is often required in practice to integrate across the distribution $q(\vxbarS)$ for removed features (see \cref{eq:removal}), and this can create further discrepancy between attributions for similar inputs.

\subsection{Theory}

Here, we provide a probabilistic bound for the difference between the model prediction $f(\xS)$ with a subset of features $\xS$ and its estimator $\hat{f}(\xS)$ with $m$ independent samples from $q(\vxbarS)$.

\begin{lemma}\label{lemma:sampling_bound}
    Let $\hat{f}(\xS) \equiv \frac{1}{m} \sum_{i=1}^m f(\xS, \xbarS^{(i)})$
    be the empirical estimator of $f(\xS)$ in \cref{eq:removal}, where $\xbarS^{(i)} \iid q(\vxbarS)$. For any $\delta \in (0, 1)$ and $\varepsilon > 0$, if the sample size $m$ satisfies $m \ge \frac{2 B^2 \log(2 / \delta)}{\varepsilon^2}$,
    then $\mathbb{P}\left(|\hat{f}(\xS) - f(\xS)| \le \varepsilon \right) \ge 1 - \delta$.
\end{lemma}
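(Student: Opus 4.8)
The plan is to recognize this as a direct application of Hoeffding's inequality, using the model's boundedness (\Cref{asmp:bounded}) to control the range of each summand. First I would fix the observed features $\xS$ and write $\hat f(\xS) = \frac{1}{m} \sum_{i=1}^m X_i$ with $X_i \equiv f(\xS, \xbarS^{(i)})$, where the only randomness is over $\xbarS^{(i)} \iid q(\vxbarS)$. These summands are therefore i.i.d., and the estimator is unbiased: since $f(\xS) = \E_{q(\vxbarS)}[f(\xS, \vxbarS)]$ by \cref{eq:removal}, we have $\E[\hat f(\xS)] = f(\xS)$. By \Cref{asmp:bounded} each $X_i$ takes values in the interval $[-B, B]$, so the range of each summand is $2B$.

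Next I would invoke Hoeffding's inequality for bounded i.i.d.\ variables, which gives
\[
    \mathbb{P}\big( |\hat f(\xS) - f(\xS)| \ge \varepsilon \big)
    \le 2 \exp\!\left( - \frac{2 m^2 \varepsilon^2}{\sum_{i=1}^m (2B)^2} \right)
    = 2 \exp\!\left( - \frac{m \varepsilon^2}{2 B^2} \right),
\]
where the simplification uses $\sum_{i=1}^m (2B)^2 = 4 m B^2$. The final step is to choose $m$ so that the right-hand side is at most $\delta$: requiring $2 \exp(-m\varepsilon^2 / (2B^2)) \le \delta$ and solving for $m$ yields exactly the stated threshold $m \ge 2 B^2 \log(2/\delta) / \varepsilon^2$. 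Taking complements then gives $\mathbb{P}(|\hat f(\xS) - f(\xS)| \le \varepsilon) \ge 1 - \delta$, since the event $\{|\hat f(\xS) - f(\xS)| < \varepsilon\}$ is contained in $\{|\hat f(\xS) - f(\xS)| \le \varepsilon\}$.

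I do not anticipate a genuine obstacle here, as the argument is routine; the only points requiring care are confirming that the summands are i.i.d.\ and uniformly bounded (immediate from the sampling scheme and \Cref{asmp:bounded}) and correctly propagating the range $2B$ through the exponent, so that inverting the tail bound reproduces the stated sample size precisely rather than up to a constant. If one wished to streamline the exposition, the Hoeffding bound could be cited as a black box and the remaining work reduced to the one-line algebraic inversion for $m$.
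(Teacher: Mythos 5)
Your proof is correct and matches the paper's argument essentially verbatim: both fix $\xS$, note unbiasedness and the bound $|f(\xS,\xbarS^{(i)})| \le B$ from \Cref{asmp:bounded}, apply Hoeffding's inequality to obtain the tail bound $2\exp\left(-m\varepsilon^2/(2B^2)\right)$, and invert it to recover the stated sample size. The only difference is cosmetic bookkeeping (you apply Hoeffding to the unscaled summands of range $2B$, while the paper scales each summand by $1/m$ first), which yields the identical exponent.
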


\begin{proof}
    With \Cref{asmp:bounded}, we have $-\frac{B}{m} \le  \frac{f(\xS, \xbarS^{(i)})}{m} \le \frac{B}{m}$.
    Also, $\E[\hat{f}(\xbarS)] = f(\xbarS)$ because $\xbarS^{(i)}$ are sampled from $q(\vxbarS)$. Applying Hoeffding's inequality \cite{hoeffding1994probability}, we obtain the probabilistic bound
    \begin{align*}
        \mathbb{P}(|\hat{f}(\xS) - f(\xS)| \ge \varepsilon)
            &\le 2 \exp\left(-\frac{2\varepsilon^2}{\sum_{i=1}^m \left(\frac{2B}{m}\right)^2}\right) \\
            &= 2 \exp\left(-\frac{m \varepsilon^2}{2B^2}\right) \le 2 \exp\left(-\frac{\left(\frac{2 B^2 \log(2 / \delta)}{\varepsilon^2}\right)\varepsilon^2}{2B^2}\right) = \delta,
    \end{align*}
    and hence for the complement event we have
    \begin{equation*}
        \mathbb{P}(|\hat{f}(\xS) - f(\xS)| \le \varepsilon) \ge 1 - \delta.
    \end{equation*}
\end{proof}

\Cref{lemma:sampling_bound} can be applied to bound the difference between the original and perturbed attributions when the integral in \cref{eq:removal} is estimated via sampling. For example, under input perturbations, the difference $|\hat{f}(\xS) - \hat{f}(\xS')|$ decomposes into $|\hat{f}(\xS) - f(\xS)| + |f(\xS) - f(\xS')| + |f(\xS') - \hat{f}(\xS')|$ by the triangle inequality. The first and last terms can be bounded with high probability using \Cref{lemma:sampling_bound} and the union bound, and the middle term is bounded using results from \Cref{sec:robustness_input} (e.g., \Cref{lemma:marginal} or \Cref{lemma:conditional_updated}). Intuitively, by combining probabilistic bounds over all the feature subsets, it is possible to derive a final probabilistic upper bound with the following meta-formula:
\begin{equation}\label{eq:input_sampling}
    \norm{2}{\phi(f, x) - \phi(f, x')} \le g(\text{removal}) \cdot h(\text{summary}) \cdot \norm{2}{x - x'} + g_{\text{sampling}}(\text{removal}),
\end{equation}
where $g_{\text{sampling}}(\text{removal})$ is a function of $\delta, \varepsilon, m$ when removal $=$ \textbf{marginal} or \textbf{conditional}, and $g_{\text{sampling}}(\textbf{baseline}) = 0$ because no sampling is needed. Similarly, for our bound under model perturbations, the term $g_{\text{sampling}}(\text{removal})$ can be included to account for sampling in feature removal.

\subsection{Empirical observations}

In \cref{eq:input_sampling}, the sampling-dependent term $g_{\text{sampling}}(\text{removal})$ does not depend on the input perturbation strength $\norm{2}{x - x'}$, so we empirically estimate $g_{\text{sampling}}(\text{removal})$ with the intercept of the empirical upper bound (as illustrated in \Cref{fig:synthetic_input_pert_density_shapley}) when $\norm{2}{x - x'} = 0$. As shown in \Cref{fig:synthetic_input_pert_sampling_bounds} (left), $g_{\text{sampling}}(\textbf{baseline})$ is indeed zero, whereas $g_{\text{sampling}}(\textbf{marginal})$ and $g_{\text{sampling}}(\textbf{conditional})$ empirically decrease with increasing sample size for feature removal as expected. We observe that the sampling error tends to be lower for the conditional approach compared to the marginal approach, potentially due to lower variances in the conditional versus marginal distributions.
% lower conditional variances.
Finally, \Cref{fig:synthetic_input_pert_sampling_bounds} (right) confirms that the perturbation-dependent term $g(\text{removal}) \cdot h(\text{summary}) \cdot \norm{2}{x - x'}$ does not depend on the removal sample size.

\begin{figure}[h]
    \centering
    \includegraphics[width=0.9\textwidth]{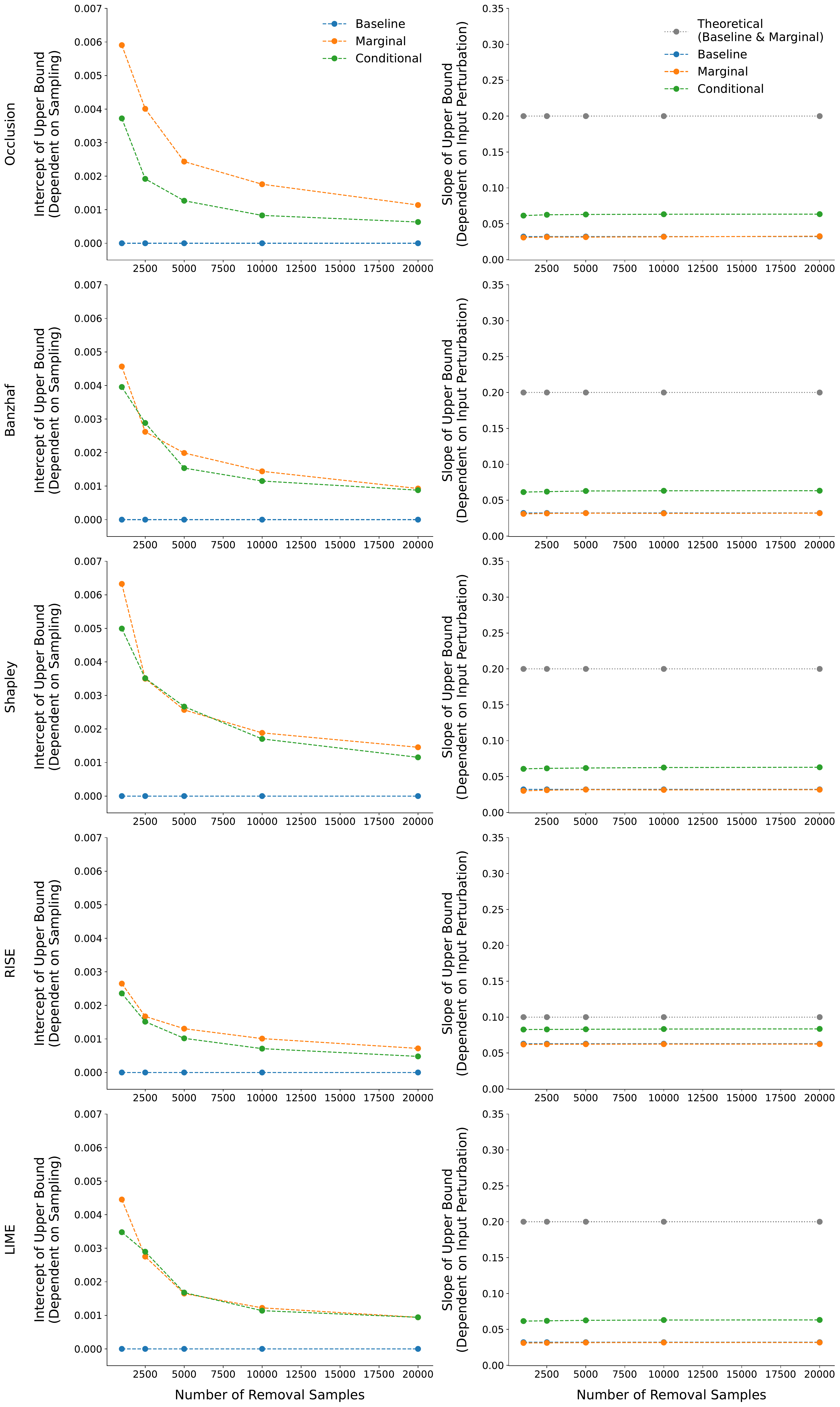}
    \caption{Intercepts and slopes of empirical and theoretical upper bounds with respect to the input perturbation norm, for Occlusion, Banzhaf, Shapley, RISE, and LIME attribution differences in the synthetic experiment in \Cref{sec:synthetic_input_perturbation}, and with varying sample size for estimating feature removal.}
    \label{fig:synthetic_input_pert_sampling_bounds}
\end{figure}

\clearpage
\section{Feature grouping} \label{app:grouping}

Rather than generate attribution scores for each feature $x_i$, another option is to partition the feature set and generate group-level scores. This is common in practice for
% high-dimensional inputs like
images~\cite{ribeiro2016should, chen2022algorithms, covert2022learning}, where removing individual pixels is unlikely to produce meaningful prediction differences, and reducing the number of features makes attributions less computationally intensive. For example, LIME pre-computes superpixels using a basic segmentation algorithm~\cite{ribeiro2016should}, and ViT Shapley uses grid-shaped patches defined by a vision transformer~\cite{covert2022learning}.

When computing group-level attributions, we require a set of
% non-overlapping
feature index groups that we denote as $\{G_1, \ldots, G_g\}$, where each group satisfies $G_i \subseteq [d]$. We assume that the $g$ groups are non-overlapping and cover all the indices, so that we have $G_i \cap G_j = \{\}$ for $i \neq j$ and $G_1 \cup \ldots \cup G_g = [d]$. Under this approach, the attributions are a vector of the form $\phi(f, x) \in \R^g$, and we compute them by querying the model only with subsets that are unions of the groups. For this, we let $T \subseteq [g]$ denote a subset of group indices and denote the corresponding union of groups as $G_T = \bigcup_{i \in T} G_i \subseteq [d]$. Using this notation and taking the Shapley value as an example, the attribution for the $i$th group $G_i$ is defined as
\begin{equation*}
    \phi_i(f, x) = \frac{1}{g} \sum_{T \subseteq [g] \setminus \{i\}} \binom{g - 1}{|T|}^{-1} \left( f(x_{G_{T \cup \{i\}}}) - f(x_{G_T}) \right),
\end{equation*}
where $f(x_{G_T})$ is defined based on the chosen feature removal approach.

To generalize our earlier results to this setting, we first remark that querying the model with each subset induces a vector $v \in \R^{2^g}$ containing the predictions for each union $G_T$ with $T \subseteq [g]$. The attributions are then calculated according to $\phi(f, x) = Av$, where $A \in \R^{g \times 2^g}$ is the linear operator associated with each summary technique (see \Cref{tab:summary}). Our approach for bounding the attribution difference under either input or model perturbation is similar here, because we can use the same inequalities from \Cref{lemma:operator_bounds}, or $||A(v - v')||_2 \leq ||A||_2 \cdot ||v - v'||_2$ and $||A(v - v')||_2 \leq ||A||_{1, \infty} \cdot ||v - v'||_\infty$, depending on which norm is available for the vectors of predictions $||v - v'||$.

Following the same proof techniques used previously, we can guarantee the following bounds in the feature grouping setting.

\textbf{Input perturbation.} \Cref{lemma:marginal} and \Cref{lemma:conditional_updated} guarantee that $f(\xS)$ is Lipschitz continuous for any subset $S \subseteq [d]$. These results automatically apply to the specific subsets induced by feature grouping. If $f(x_S)$ is $L'$-Lipschitz continuous for all $S \subseteq [d]$, we therefore have the following inequality,
\begin{equation*}
    |f(x_{G_T}) - f(x'_{G_T})| \leq L' \cdot ||x_{G_T} - x'_{G_T}||_2,
\end{equation*}
for all group subsets $T \subseteq [g]$. As a result, we can guarantee that the induced vectors $v, v' \in \R^{2^g}$ for two inputs $x, x'$ have the following bound:
\begin{equation*}
    ||v - v'||_\infty \leq L' \cdot ||x - x'||_2.
\end{equation*}
Finally, we see that \Cref{theorem:input} only needs to be changed by replacing $d$ with $g$ in the $h(\text{summary})$ term. This bound may even be tighter
% in practice
% than when we do not
when we use feature groups, because the Lipschitz constant may be
% could be
smaller when we restrict our attention to a smaller number of subsets.

\textbf{Model perturbation.} \Cref{lemma:model_manifold} and \Cref{lemma:model_everywhere} guarantee that the predictions $f(\xS)$ and $f'(\xS)$ for two models $f, f'$ can differ by no more than the functional difference $||f - f'||$, where the specific norm depends on the feature removal approach. These results apply to all subsets $S \subseteq [d]$, so they automatically apply to those induced by feature grouping. We therefore have the following inequality,
\begin{equation*}
    |f(x_{G_T}) - f'(x_{G_T})| \leq ||f - f'||,
\end{equation*}
for all $x$ and all group subsets $T \subseteq [g]$. For the induced vectors $v, v' \in \R^{2^g}$, we also have
\begin{equation*}
    ||v - v'||_\infty \leq ||f - f'||.
\end{equation*}
Finally, we see that \Cref{theorem:model} applies as well, where the only change required is replacing $d$ with $g$ in the $h(\text{summary})$ term.

\clearpage
\section{LIME weighted least squares linear operator} \label{app:lime}

In this section, we discuss the linear operator $A \in \R^{d \times 2^d}$ associated with LIME~\cite{ribeiro2016should}. We begin with a theoretical characterization that focuses on LIME's default implementation choices, and we then corroborate these points with empirical results.

\subsection{Theory}

The observation that LIME is equivalent to a linear operator $A \in \R^{d \times 2^d}$ is discussed in \Cref{app:summary}. We pointed out there that LIME's specific operator depends on several implementation choices, including the choice of weighting kernel, regularization and intercept term. Here, we consider that no regularization is used, that an intercept term is fit, and we focus on the default weighting kernel in LIME's popular implementation.\footnote{\url{https://github.com/marcotcr/lime}} The default is the following exponential kernel,
\begin{equation}
    w(S) = \exp\left(-\frac{\left(1 - \sqrt{|S| / d}\right)^2}{\sigma^2} \right), \label{eq:exponential_kernel}
\end{equation}
where $\sigma^2 > 0$ is a hyperparameter. The user can set this parameter arbitrarily, but LIME also considers default values for each data type: $\sigma^2 = 0.25$ for images, $\sigma^2 = 25$ for language, and $\sigma^2 = 0.75 \sqrt{d}$ for tabular data. We find that these choices approach two limiting cases, $\sigma^2 \to \infty$ and $\sigma^2 \to 0$, where LIME reduces to other summary techniques; namely, we find that these limiting cases respectively recover the Banzhaf value~\cite{banzhaf1964weighted} and leave-one-out~\cite{zeiler2014visualizing} summary techniques. We argue this first from a theoretical perspective, and \Cref{app:lime_empirical} then provides further empirical evidence.

\textbf{Limiting case $\boldsymbol{\sigma^2 \to \infty}$.}
% \textbf{Limiting case $\sigma^2 \to \infty$.}
In this case, it is easy to see from \cref{eq:exponential_kernel} that we have $w(S) \to 1$ for all $S \subseteq [d]$. The weighted least squares objective for LIME then reduces to an unweighted least squares problem, which according to results from \citet{hammer1992approximations} recovers the Banzhaf value. We might expect that the default settings of $\sigma^2$ approach this outcome in practice (e.g., $\sigma^2 = 25$), and we verify this in our empirical results.

\textbf{Limiting case $\boldsymbol{\sigma^2 \to 0}$.}
% \textbf{Limiting case $\sigma^2 \to 0$.}
% Our argument here is more informal due to the difficulty of showing it precisely.
As we let $\sigma^2$ approach $0$, increasing weight is applied to $w(S)$ when the cardinality $|S|$ is large. Similar to a softmax activation, the differences between cardinalities become exaggerated with low temperatures; for very low temperatures, the weight applied to $|S| = d$ is much larger than the weight applied to $|S| = d - 1$, which is much larger than the weight for $|S| = d - 2$, etc. In this regime, applying weight only to the subset $S = [d]$ leaves the problem undetermined, so the small residual weight on $|S| = d - 1$ plays an important role. The problem begins to resemble one where only subsets with $|S| \geq d - 1$ are taken into account, which yields a unique solution: following \citet{covert2021explaining}, this reduces LIME to leave-one-out (Occlusion).
% Ultimately, we can summarize this perspective as follows: if we denote $A_\sigma$ as the linear operator resulting from a specific $\sigma^2$ value, we expect that $\lim_{\sigma \to 0} A_\sigma$ is the linear operator associated with leave-one-out.
More formally, if we denote $A_{\text{LIME}}(\sigma^2)$ as the linear operator resulting from a specific $\sigma^2$ value, we argue that $\lim_{\sigma^2 \to 0} A_{\text{LIME}}(\sigma^2)$ is the operator associated with leave-one-out. We do not prove this result
% show this result precisely
due to the difficulty of deriving $A_{\text{LIME}}(\sigma^2)$ for a specific $\sigma^2$ value, but \Cref{app:lime_empirical} provides empirical evidence for this claim.

\subsection{Empirical analysis} \label{app:lime_empirical}
Here, we denote the LIME linear operator with the exponential kernel as $A_{\text{LIME}}$ instead of $A_{\text{LIME}}(\sigma^2)$ for brevity. Recall from \Cref{app:summary} that the LIME linear operator has the form $A_{\text{LIME}} = (Z'^\top W Z')^{-1}_{[1:]} Z'^\top W$, where $Z' = \{0, 1\}^{2^d \times (d + 1)}$ is a binary matrix with all ones in the first column and the other columns indicating whether each feature is in each
% feature
subset, and $W \in \sR^{2^d \times 2^d}$ is a diagonal matrix with $W_{S, S} = w(S)$ for all $S \in [d]$ and zeros elsewhere.

To empirically compare the linear operator of LIME to those of the Banzhaf value and leave-one-out (Occlusion), we first derive closed-form expressions of $Z'^\top W Z'$ and $Z'^\top W$ as in the following proposition.

\begin{proposition} \label{prop:lime_linear_operator}
    Define
    \begin{align*}
        a_0 &\equiv \sum_{k = 0}^d \binom{d}{k} \exp\left(-\left(1 - \sqrt{k / d}\right)^2 / \sigma^2 \right), \\
        a_1 &\equiv \sum_{k = 1}^d \binom{d - 1}{k - 1} \exp\left(-\left(1 - \sqrt{k / d}\right)^2 / \sigma^2 \right), \text{ and} \\
        a_2 &\equiv \sum_{k = 2}^d \binom{d - 2}{k - 2} \exp\left(-\left(1 - \sqrt{k / d}\right)^2 / \sigma^2 \right).
    \end{align*}
    Then $(Z'^\top W Z')$ is a $(d + 1) \times (d + 1)$ matrix with the following form:
    \begin{align*}
        (Z'^\top W Z') = 
            \begin{pmatrix}
                a_0 & B \\
                B^\top & C 
            \end{pmatrix},
    \end{align*}
    where $B = (a_1, ..., a_1) \in \sR^{1 \times d}$ , and $C$ is a $d \times d$ matrix with $a_1$ on the diagonal entries and $a_2$ on the off-diagonal entries. Furthermore, for all $S \in [d]$
    \begin{equation*}
        (Z'^\top W)_{iS} =
            \begin{cases}
    		w(S), & \text{if $i = 1$}\\
                \ind\{i - 1 \in S\} w(S) & \text{otherwise}.
            \end{cases}
    \end{equation*}
\end{proposition}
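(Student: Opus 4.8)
The plan is to prove this by direct computation, exploiting two structural facts: that $W$ is diagonal, so that left-multiplication by $W$ simply rescales each subset's row by $w(S)$, and that the weight $w(S)$ depends on $S$ only through its cardinality $|S|$, which is precisely what allows the sums to collapse into the binomial expressions $a_0, a_1, a_2$. Throughout I adopt the indexing convention where rows of $Z'$ are indexed by subsets $S \subseteq [d]$, the first column of $Z'$ (index $1$) is the all-ones intercept column so that $Z'_{S,1} = 1$, and column $i+1$ records membership via $Z'_{S, i+1} = \ind\{i \in S\}$.

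First I would establish the formula for $Z'^\top W$. Since $W$ is diagonal with $W_{S,S} = w(S)$, we have $(Z'^\top W)_{iS} = \sum_{S'} Z'_{S', i} W_{S', S} = Z'_{S, i}\, w(S)$. Substituting the two cases for the columns of $Z'$ immediately yields $(Z'^\top W)_{1S} = w(S)$ for the intercept row and $(Z'^\top W)_{iS} = \ind\{i - 1 \in S\}\, w(S)$ for $i \geq 2$, which is the second claim.

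Next I would compute the entries of $Z'^\top W Z'$ through the identity $(Z'^\top W Z')_{ij} = \sum_{S \subseteq [d]} Z'_{S,i}\, w(S)\, Z'_{S,j}$, handling four cases. The top-left entry ($i = j = 1$) equals $\sum_S w(S)$; grouping subsets by cardinality $k$ and using that there are $\binom{d}{k}$ subsets of size $k$ gives $a_0$. The intercept-versus-feature entries equal $\sum_{S \ni i} w(S)$, and counting the $\binom{d-1}{k-1}$ size-$k$ subsets containing a fixed index yields $a_1$, so that $B = (a_1, \ldots, a_1)$. The diagonal of $C$ involves $\ind\{i \in S\}^2 = \ind\{i \in S\}$, hence also equals $\sum_{S \ni i} w(S) = a_1$. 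Finally, each off-diagonal entry of $C$ equals $\sum_{S \supseteq \{i,j\}} w(S)$, and counting the $\binom{d-2}{k-2}$ size-$k$ subsets containing two fixed distinct indices yields $a_2$. Assembling the four cases gives the stated block form.

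The computation has no genuine mathematical obstacle; the only thing requiring care is the bookkeeping of the offset index (the intercept shifts feature $i$ to column $i+1$) together with the combinatorial counts $\binom{d}{k}$, $\binom{d-1}{k-1}$, $\binom{d-2}{k-2}$ for subsets of a prescribed cardinality containing zero, one, or two fixed elements. Writing a common weight $w_k$ for all cardinality-$k$ subsets makes each sum collapse cleanly, so the main effort is simply verifying that every case of the matrix product is matched to the correct binomial sum.
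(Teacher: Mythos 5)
Your proposal is correct and follows essentially the same route as the paper's proof: a direct entrywise computation of $Z'^\top W Z'$ and $Z'^\top W$ that exploits the diagonal structure of $W$ and the fact that $w(S)$ depends only on $|S|$, with the same case analysis (intercept--intercept, intercept--feature, feature diagonal, feature off-diagonal) and the same subset counts $\binom{d}{k}$, $\binom{d-1}{k-1}$, $\binom{d-2}{k-2}$. The only difference is that you compute $Z'^\top W$ before $Z'^\top W Z'$ rather than after, which is immaterial.
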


\begin{proof}
    We first find an expression for each element in $Z'^\top W Z'$. Consider the first diagonal entry:
    \begin{align*}
        (Z'^\top W Z')_{11} = Z'^\top_1 W Z'_1
            &= \vone_{2^d}^\top W \vone_{2^d} \\
            &= \sum_{S \subseteq [d]} w(S) \\
            &= \sum_{k = 0}^d \binom{d}{k} \exp\left(-\left(1 - \sqrt{k / d}\right)^2 / \sigma^2 \right) = a_0.
    \end{align*}
    For the rest of the diagonal entries, we have:
    \begin{align*}
        (Z'^\top W Z')_{ii} = Z'^\top_i W Z'_i
            &= \sum_{S: i - 1 \in S} w(S) \\
            &= \sum_{k = 1}^d \binom{d - 1}{k - 1} \exp\left(-\left(1 - \sqrt{k / d}\right)^2 / \sigma^2 \right) = a_1,
    \end{align*}
    for $i = 2, ..., d + 1$.
    For the off-diagonal entries, we have:
    \begin{equation*}
        (Z'^\top W Z')_{1i} = (Z'^\top W Z')_{i1} = \vone_{2^d}^\top W Z'_i = \sum_{S: i - 1 \in S} w(S) = a_1,
    \end{equation*}
    for $i = 2, ..., d + 1$.
    Finally, consider the rest of the off-diagonal entries:
    \begin{align*}
        (Z'^\top W Z')_{ij} = Z'^\top_i W Z'_j
            &= \sum_{S: i - 1 \in S, j - 1 \in S} w(S) \\
            &= \sum_{k = 2}^d \binom{d - 2}{k - 2} \exp\left(-\left(1 - \sqrt{k / d}\right)^2 / \sigma^2 \right) = a_2,
    \end{align*}
    for $i, j = 2, ..., d + 1$ such that $i \neq j$. Taken all these together, $Z'^\top W Z'$ has the closed form expression as shown in the proposition.

    We now proceed to find an expression for each element in $Z'^\top W$, which has the first row entries:
    \begin{equation*}
        (Z'^\top W)_{1S} = \vone_{2^d}^\top W_S = w(S),
    \end{equation*}
    for all $S \subseteq [d]$,
    whereas the rest of the rows have entries:
    \begin{equation*}
        (Z'^\top W)_{iS} = Z'^\top_{i} W_S = \ind\{i - 1 \in S\} w(S),
    \end{equation*}
    for $i = 2, ..., d + 1$. Hence the closed form expression of $Z'^\top W$ in the proposition follows.
\end{proof}

With closed form expressions for $Z'^\top W Z'$ and $Z'^\top W$, we numerically compute the LIME linear operator $A_{\text{LIME}} = (Z'^\top W Z')^{-1}_{[1:]} Z'^\top W$ and compare it to the linear operators of the Banzhaf value ($A_{\text{Banzhaf}}$) and Occlusion ($A_{\text{Occlusion}}$) with both large and small $\sigma^2$ values.
% and decreasingly small $\sigma^2$, respectively.

\begin{figure}[h]
    \centering
    \includegraphics[width=\textwidth]{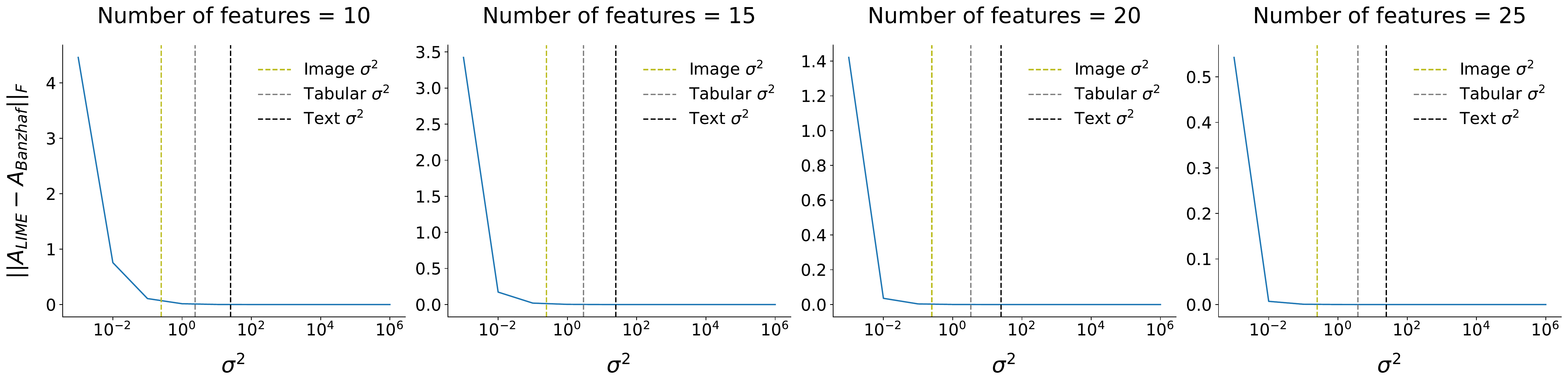}
    \caption{Frobenius norm of the difference between the LIME linear operator $A_{\text{LIME}}$ and the Banzhaf linear operator $A_{\text{Banzhaf}}$, with varying LIME exponential kernel hyperparameter $\sigma^2$ and number of features. We include reference lines for the default LIME hyperparameters: $\sigma^2 = 0.25$ for images, $\sigma^2 = 0.75 \sqrt{d}$ for tabular data, and $\sigma^2 = 25$ for text data.} \label{fig:lime_vs_banzhaf}
\end{figure}

As shown in \Cref{fig:lime_vs_banzhaf}, as $\sigma^2$ increases, the Frobenius norm difference between the LIME and Banzhaf linear operators approaches zero, verifying our theoretical argument for the limiting case of $\sigma^2 \to \infty$. Furthermore, we observe that $A_{\text{LIME}}$ and $A_{\text{Banzhaf}}$ have near zero difference for all default $\sigma^2$ values when there are $\{10, 15, 20, 25\}$ features. As $\sigma^2$ approaches zero, the Frobenius norm difference between the LIME and Occlusion linear operators empirically approaches zero (\Cref{fig:lime_vs_occlusion}), confirming our intuition for the limiting case where $\sigma^2 \to 0$. Based on our computational results, LIME with the default $\sigma^2$ values indeed has similar spectral norm $\norm{2}{\cdot}$ and the operator norm $\norm{1, \infty}{\cdot}$ compared to the Banzhaf value (\Cref{fig:numerical_summary_norms}).

\begin{figure}[h]
    \centering
    \includegraphics[width=\textwidth]{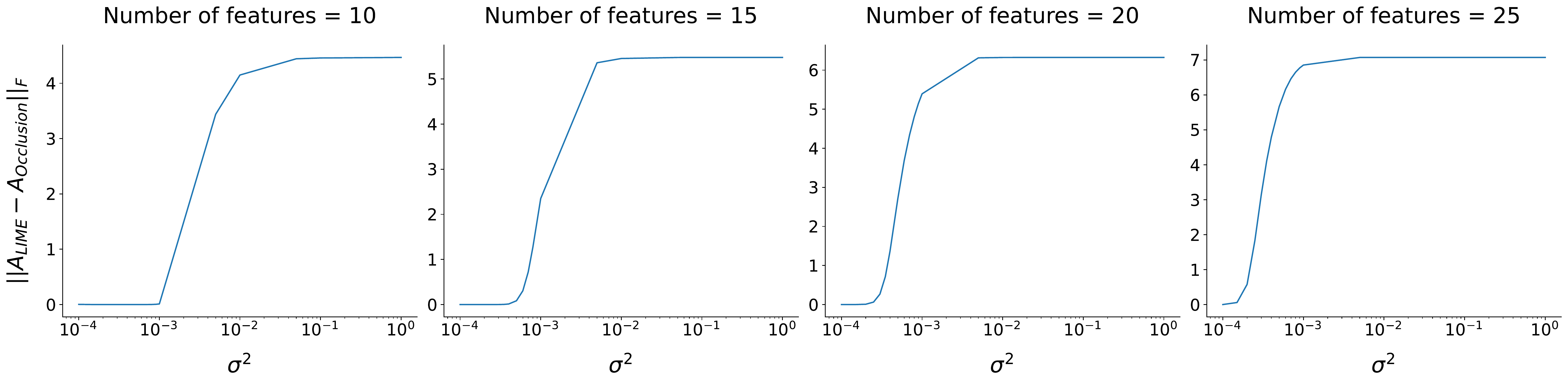}
    \caption{Frobenius norm of the difference between the LIME linear operator $A_{\text{LIME}}$ and the Occlusion linear operator $A_{\text{Occlusion}}$, with varying LIME parameter
    % exponential kernel hyperparameter
    $\sigma^2$ and number of features.} \label{fig:lime_vs_occlusion}
\end{figure}

\begin{figure}[h]
    \centering
    \includegraphics[width=0.85\textwidth]{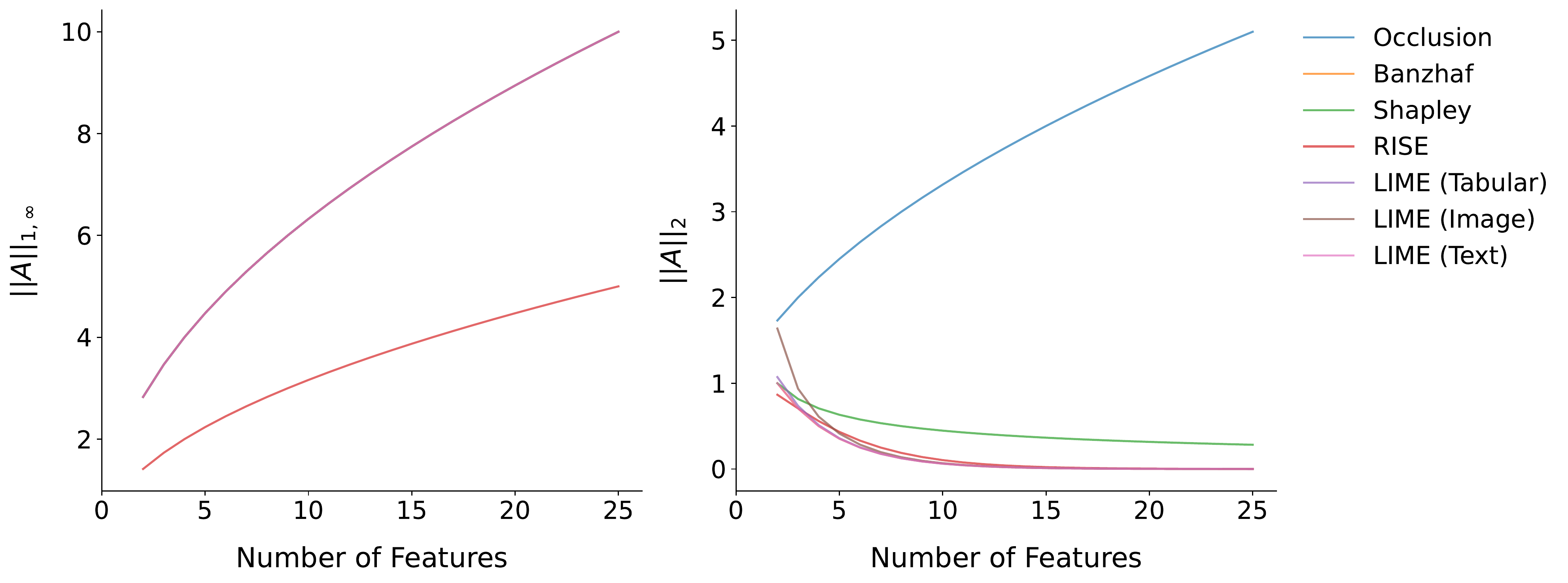}
    \caption{Computed norms of summary technique linear operators. Note that all lines except for RISE overlap on the left-hand-side showing the operator norm.} \label{fig:numerical_summary_norms}
\end{figure}

\section{Additional results for synthetic experiments}\label{sec:additional_synthetic_results}

\begin{figure}[H]
    \centering
    \includegraphics[width=\textwidth]{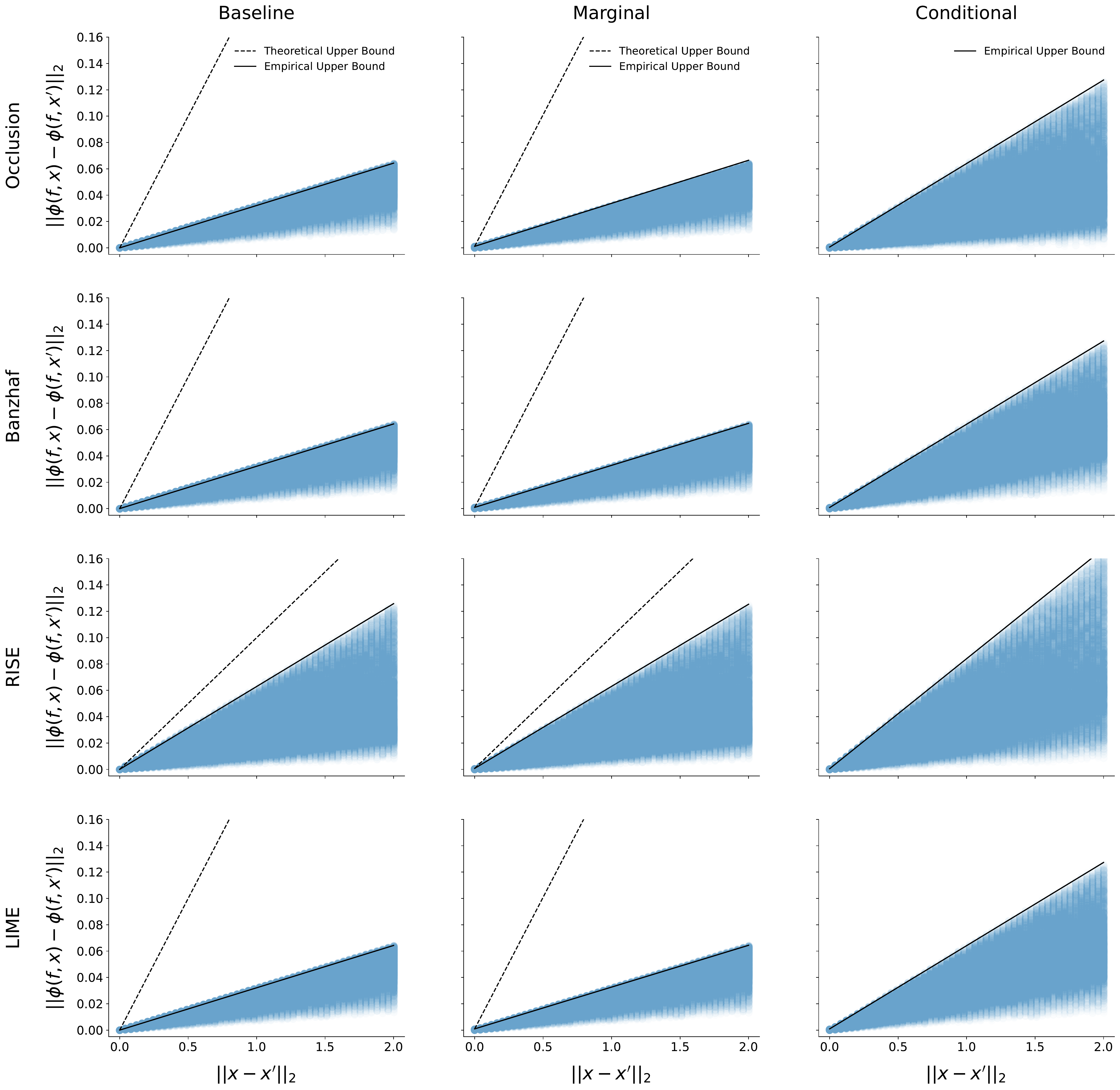}
    \caption{Occlusion, Banzhaf, RISE, and LIME attribution differences under input perturbation with various perturbation norms in our synthetic data experiment. Lines bounding the maximum attribution difference at each perturbation norm are shown as empirical upper bounds. Theoretical upper bounds are included for baseline and marginal feature removal, and omitted for conditional feature removal because it is infinite for $\rho = 1$.
    % too large with the default setting $\rho = 1 - 10^{-7}$.
    } \label{fig:synthetic_input_pert_density_others}
\end{figure}

\begin{figure}[H]
    \centering
    \includegraphics[width=\textwidth]{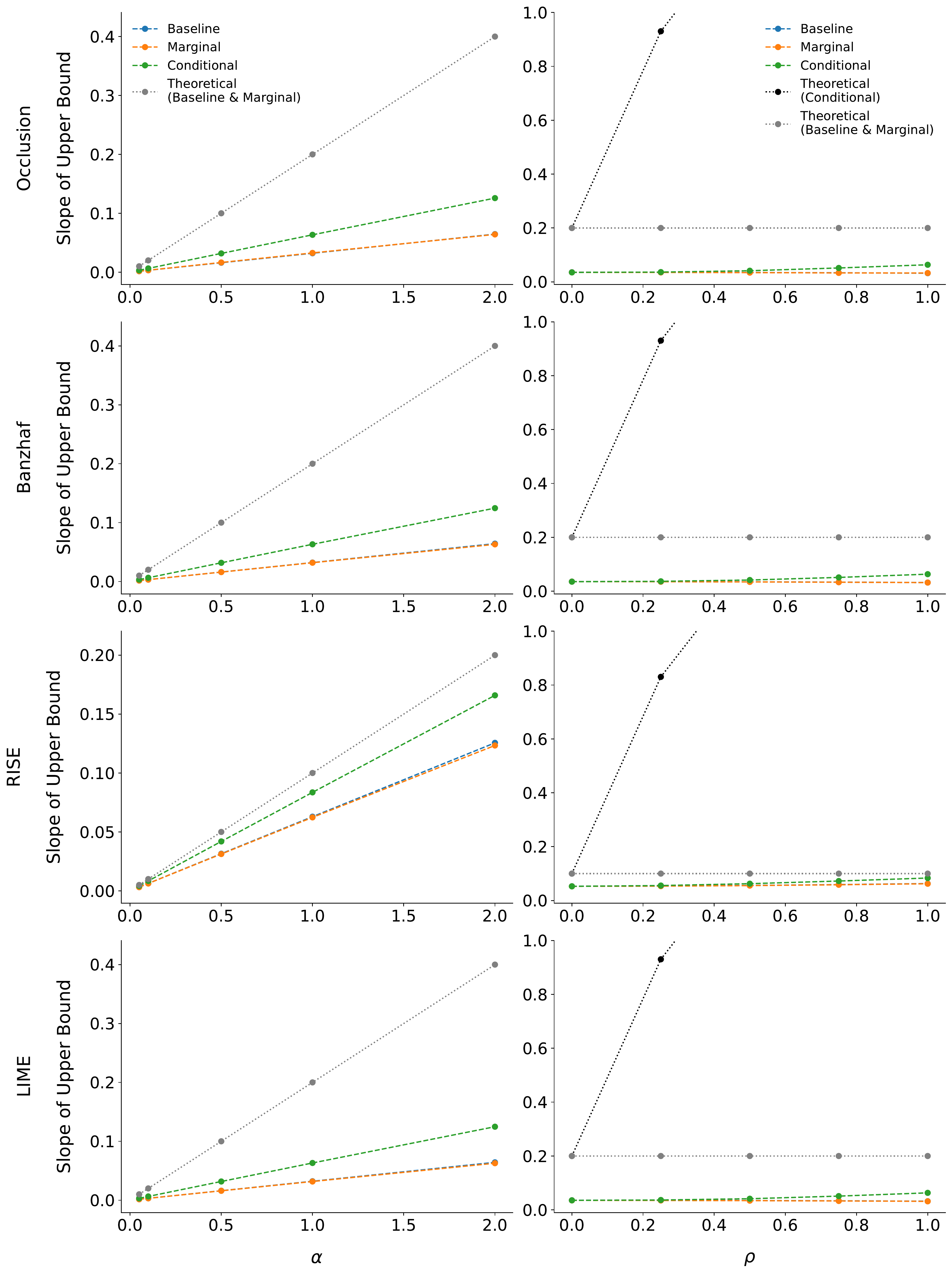}
    \caption{Slopes of empirical and theoretical upper bounds with respect to the input perturbation norm, for Occlusion, Banzhaf, RISE, and LIME attribution differences in the synthetic experiment. The parameter $\alpha$ varies the logistic regression Lipschitz constant, and $\rho$ varies the correlation between the two synthetic features $\vx_1, \vx_2$.}
    \label{fig:synthetic_input_pert_alpha_rho_others}
\end{figure}

\begin{figure}[H]
    \centering
    \includegraphics[width=\textwidth]{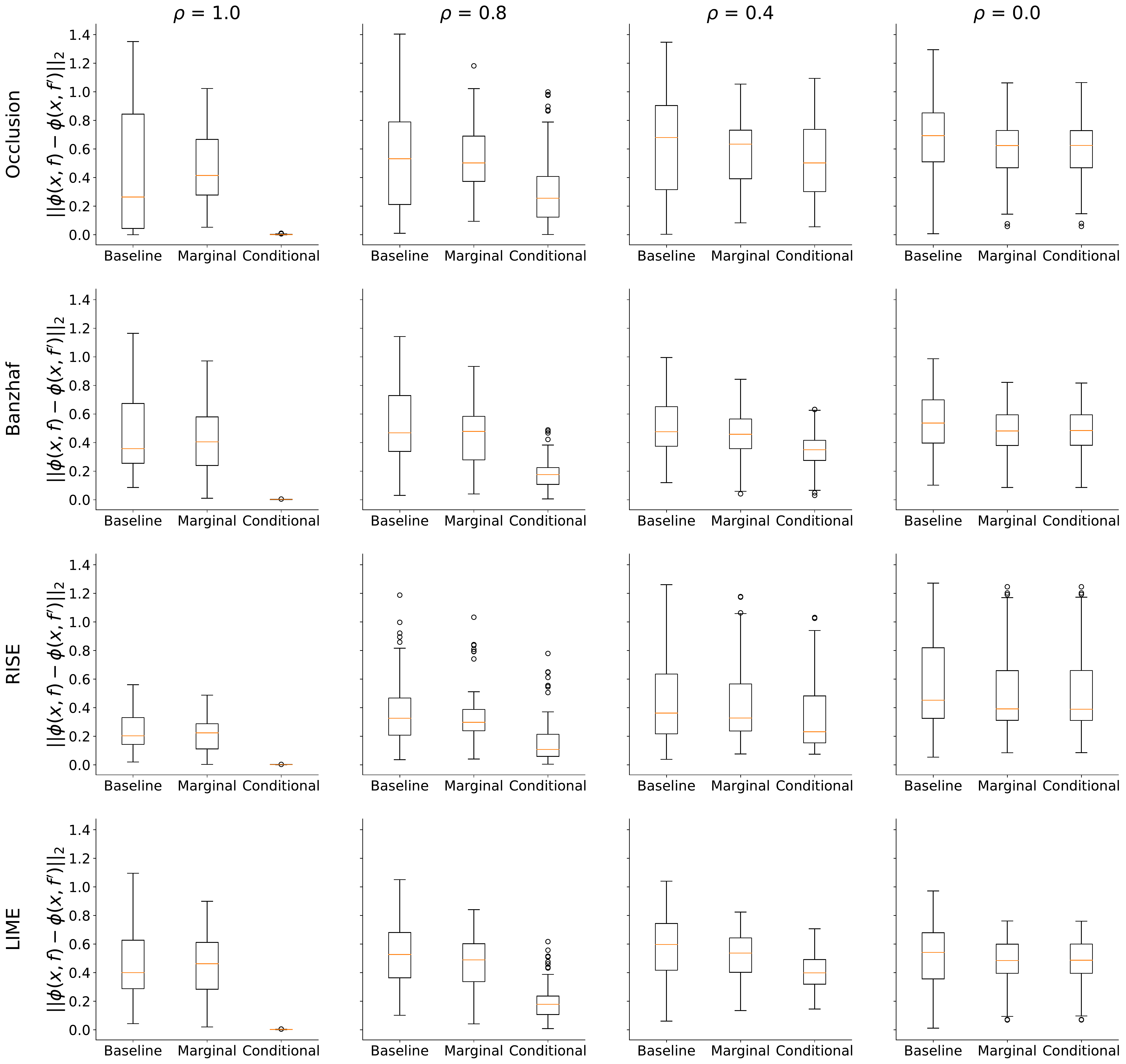}
    \caption{Occlusion, Banzhaf, RISE, and LIME attribution differences between the logistic regression classifiers $f, f'$ as constructed in \Cref{sec:synthetic_model_perturbation} with baseline, marginal, and conditional feature removal, and varying correlation $\rho$.
    % between synthetic features $\vx_1, \vx_2$.
    }
    \label{fig:synthetic_model_pert_others}
\end{figure}

\section{Implementation details}\label{sec:implementation_details}

\textbf{Fully connected networks.}
For the wine quality dataset, we trained FCNs with varying levels of weight decay. The architecture consists of $3$ hidden layers of width $128$ with ReLU activations and was trained with Adam for $50$ epochs with learning rate $0.001$. We also trained a conditional VAE (CVAE) following the procedure in \citet{frye2021shapley} to approximate conditional feature removal. The CVAE encoder and decoder both consist of $2$ hidden layers of width $64$ with ReLU activations and latent dimension $8$. We trained the CVAE using a prior regularization strength of $0.5$ for at most $500$ epochs with Adam and learning rate $0.0001$, with early stopping if the validation mean squared error did not improve for $100$ consecutive epochs. All models were trained with NVIDIA GeForce RTX 2080 Ti GPUs with 11GB memory.

% Fully connected neural networks with the ReLU activation and softmax output are trained without weight decay for the wine quality dataset, with a hyperparameter search space of learning rate $= \{0.001, 0.01, 0.1\}$, hidden layer size $= \{64, 128, 256\}$, number of hidden layers $= \{1, 2, 3\}$, and number of training epochs $= \{10, 20, 30, 40, 50\}$. Based on validation accuracy, the best fully connected network has $3$ hidden layers of size $128$ and is trained with learning rate $= 0.001$ and $50$ epochs.

\textbf{Convolutional networks.}
We trained CNNs for MNIST with varying levels of weight decay. Following the LeNet-like architecture in \citet{adebayo2018sanity}, our network has the following structure: Input $\rightarrow$ Conv($5 \times 5$, 32) $\rightarrow$ MaxPool($2 \times 2$) $\rightarrow$ Conv($5 \times 5$, 64) $\rightarrow$ MaxPool($2 \times 2$) $\rightarrow$ Flatten $\rightarrow$ Linear($1024, 1024$) $\rightarrow$ Linear($1024, 10$).
We trained the model with Adam for $200$ epochs with learning rate $0.001$. The same GPUs for FCN training were used to train the CNNs.

\textbf{ResNets.}
We trained ResNet-18 networks for CIFAR-10 from scratch with Adam for $500$ epochs with learning rate $0.00005$. The same GPUs for FCN training were used to train the ResNet-18 networks. We used a ResNet-50 network pre-trained on ImageNet and fine-tuned it for Imagenette with Adam for $20$ epochs with learning rate $0.0005$. The ResNet-50 networks were trained with NVIDIA Quadro RTX6000 GPUs with 24GB memory.

\section{Additional results for practical implications} \label{sec:additional_practical_implications}

\begin{figure}[H]
    \centering
    \includegraphics[width=\textwidth]{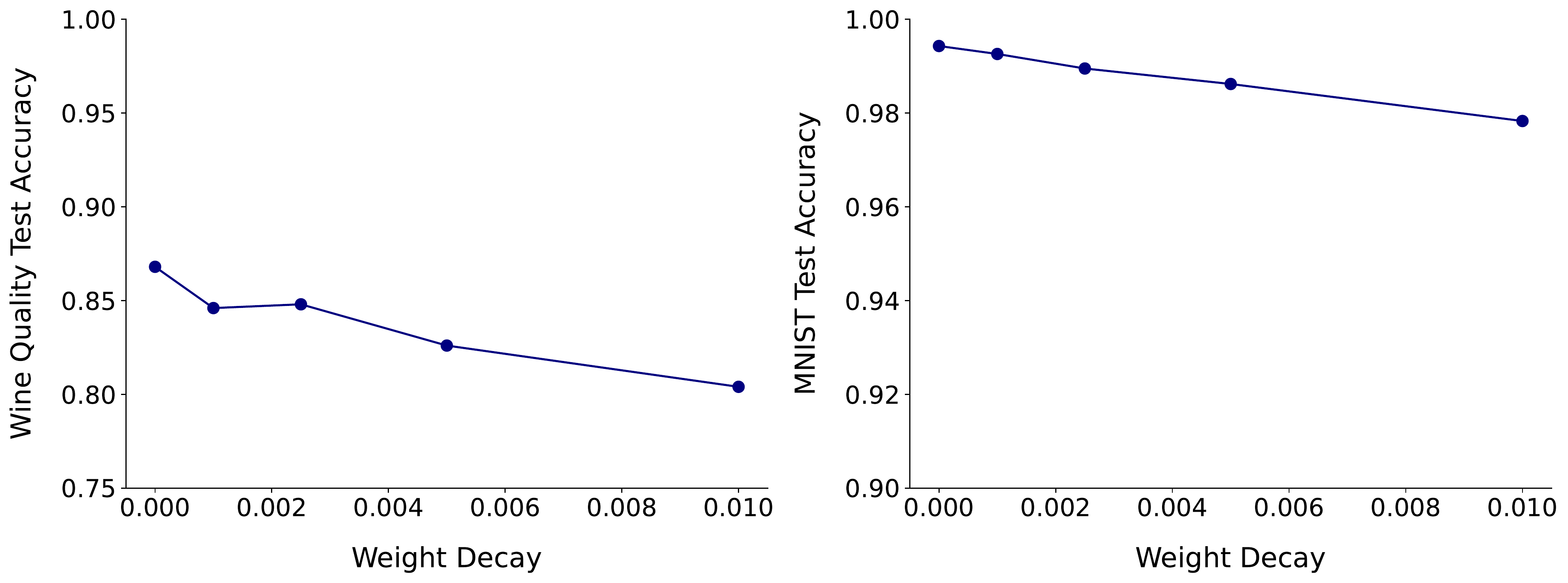}
    \caption{Test accuracy of FCN trained on the wine quality dataset and CNN trained on MNIST with varying degree of weight decay.}
    \label{fig:weight_decay_test_acc}
\end{figure}

\begin{figure}[H]
    \centering
    \includegraphics[width=\textwidth]{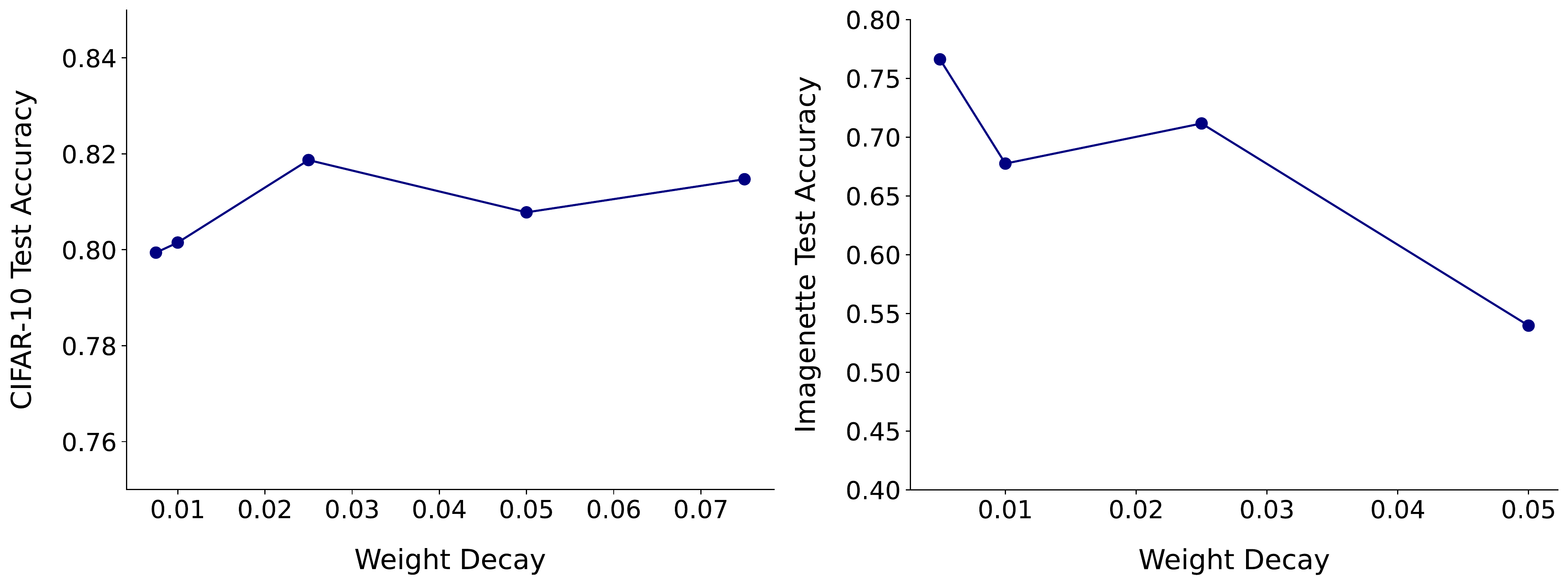}
    \caption{Test accuracy of ResNet-18 trained on CIFAR-10 and ResNet-50 fine-tuned on Imagenette with varying degree of weight decay.}
    \label{fig:cifar_imagenette_weight_decay_test_acc}
\end{figure}

\begin{figure}[H]
    \centering
    \includegraphics[width=\textwidth]{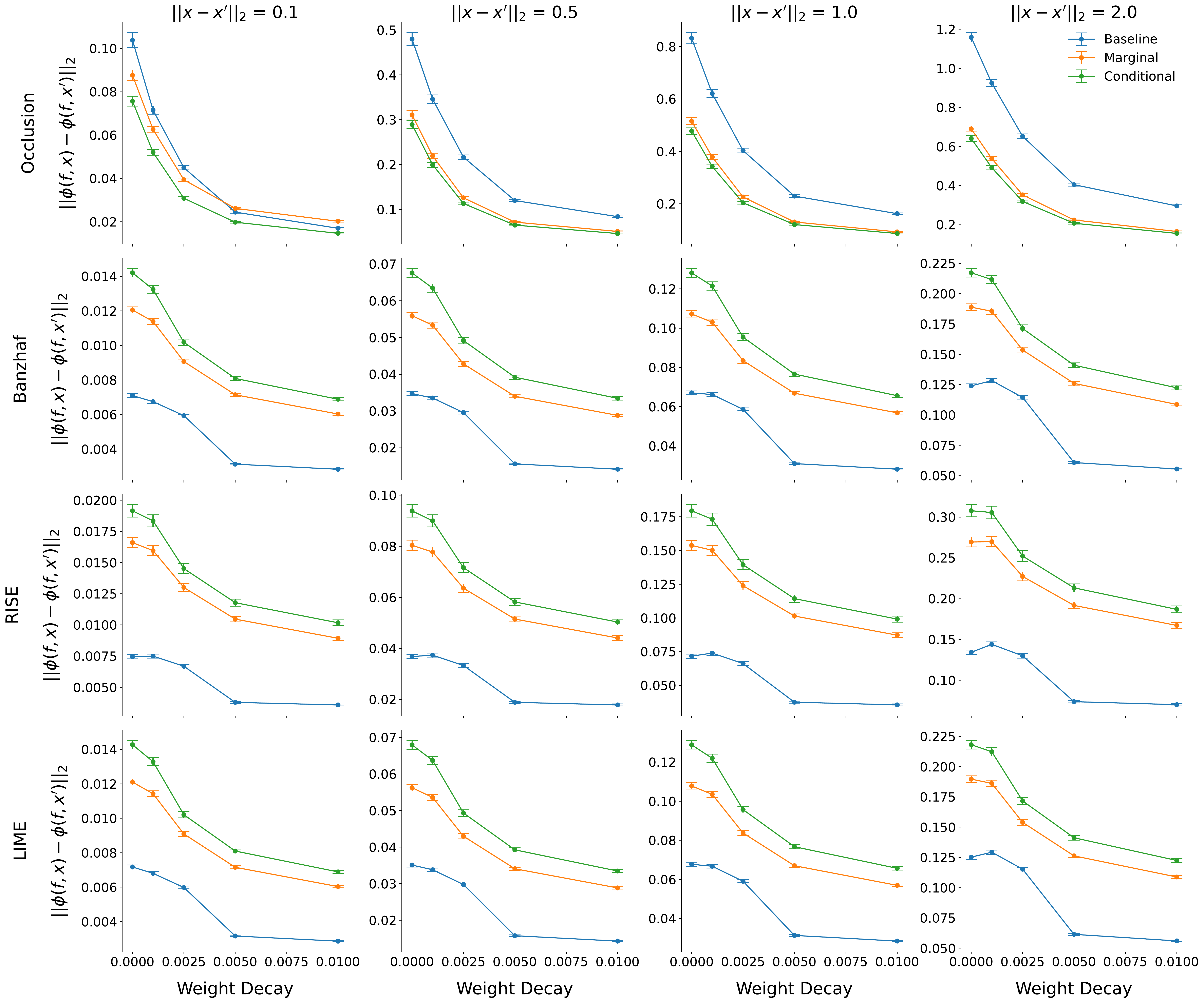}
    \caption{Attribution difference for networks trained with increasing weight decay, under input perturbations with perturbation norms $\{0.1, 0.5, 1, 2\}$. The results include the wine quality dataset with FCNs and baseline (replacing with training set minimums), marginal, and conditional feature removal. Error bars show the mean and $95\%$ confidence intervals across explicand-perturbation pairs.}
    \label{fig:wine_quality_input_pert_l2_others}
\end{figure}

\begin{figure}[H]
    \centering
    \includegraphics[width=\textwidth]{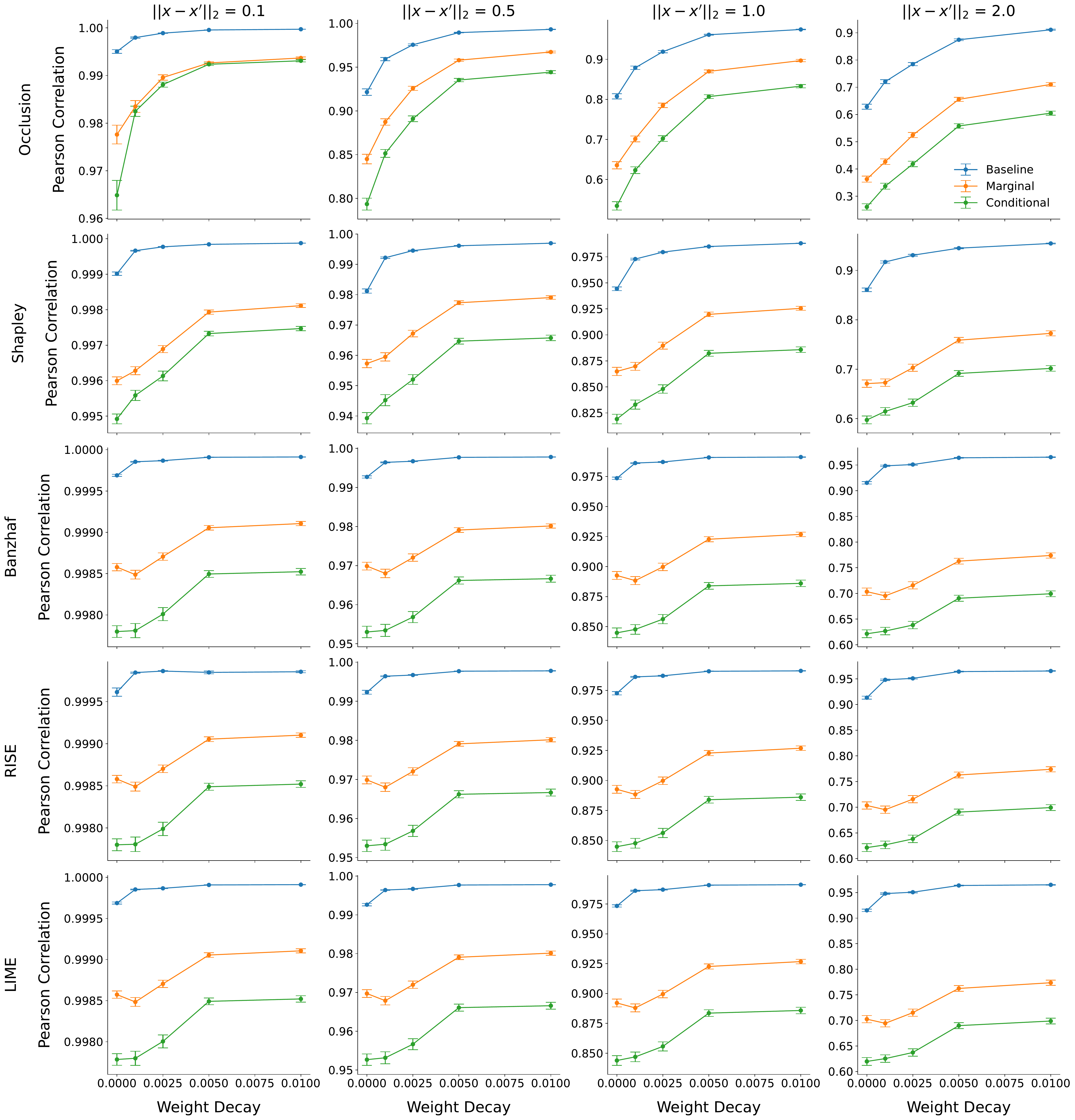}
    \caption{Pearson correlation of attributions for networks trained with increasing weight decay, under input perturbations with perturbation norms $\{0.1, 0.5, 1, 2\}$. The results include the wine quality dataset with FCNs and baseline (replacing with training set minimums), marginal, and conditional feature removal. Error bars show the means and $95\%$ confidence intervals across explicand-perturbation pairs.}
    \label{fig:wine_quality_input_pert_pearson_all}
\end{figure}

\begin{figure}[H]
    \centering
    \includegraphics[width=\textwidth]{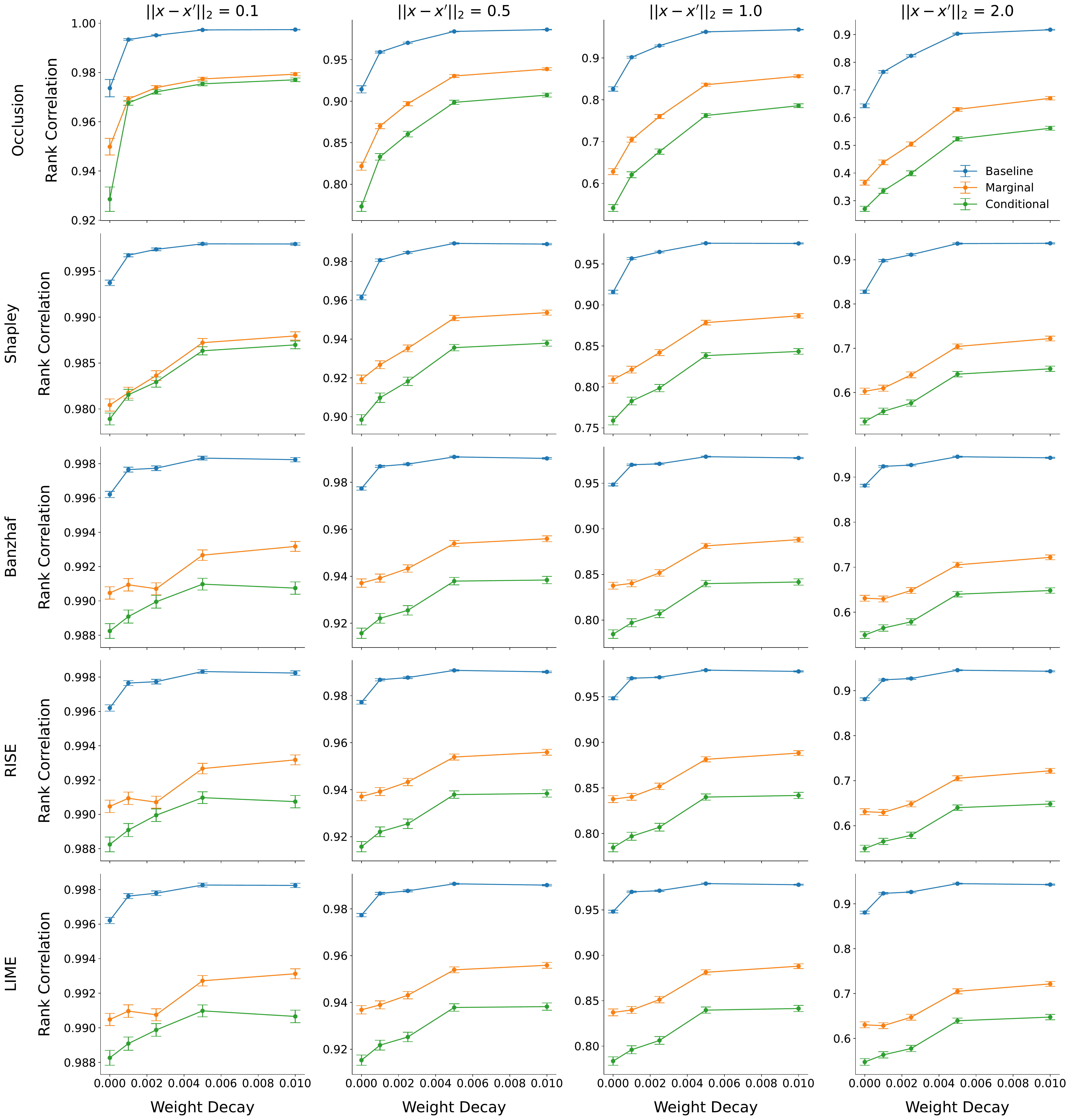}
    \caption{Spearman rank correlation of attributions for networks trained with increasing weight decay, under input perturbations with perturbation norms $\{0.1, 0.5, 1, 2\}$. The results include the wine quality dataset with FCNs and baseline (replacing with training set minimums), marginal, and conditional feature removal. Error bars show the means and $95\%$ confidence intervals across explicand-perturbation pairs.}
    \label{fig:wine_quality_input_pert_rank_all}
\end{figure}

\begin{figure}[H]
    \centering
    \includegraphics[width=\textwidth]{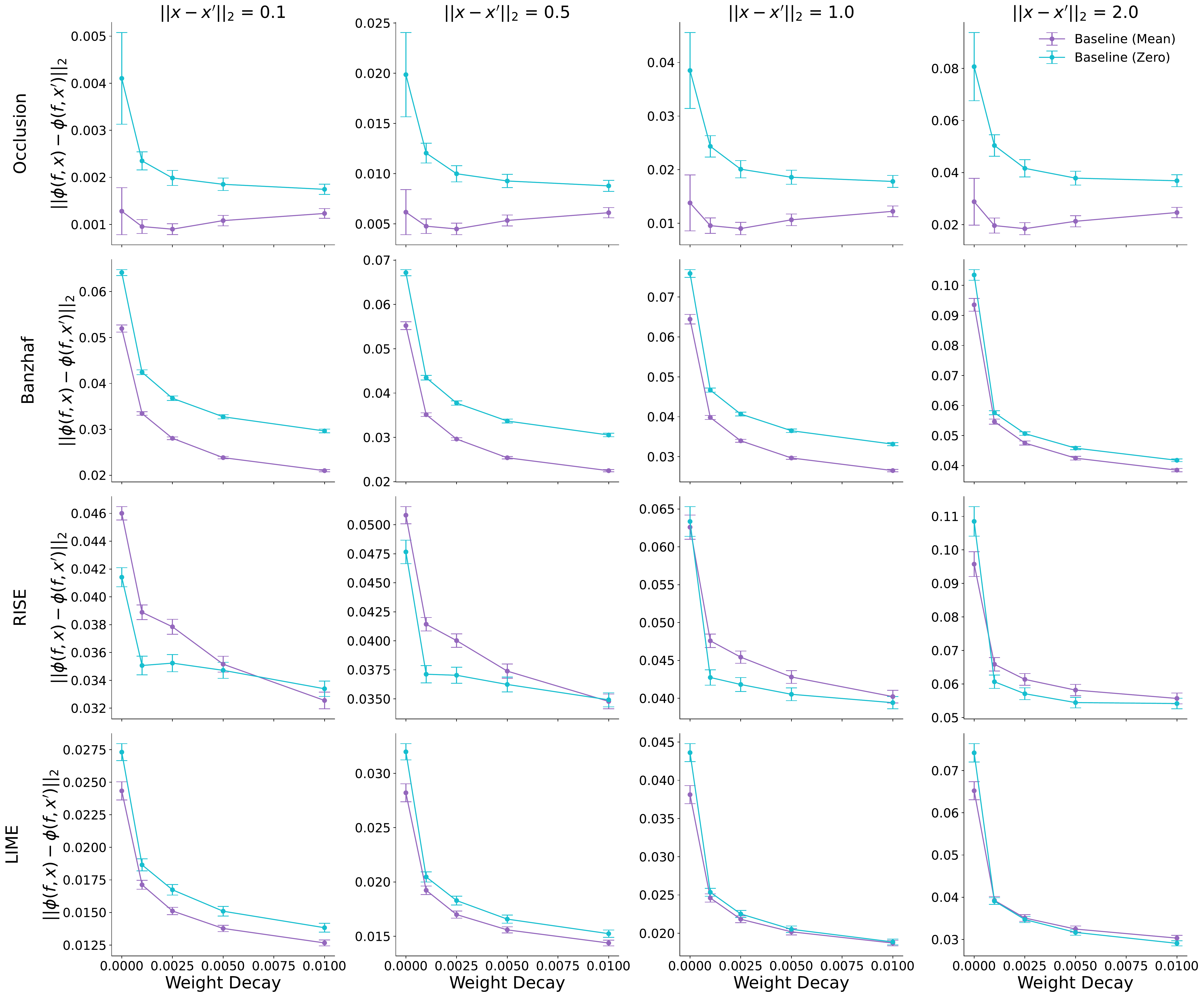}
    \caption{Attribution difference for networks trained with increasing weight decay, under input perturbations with perturbation norms $\{0.1, 0.5, 1, 2\}$. The results include MNIST with CNNs and baseline feature removal with either training set means or zeros. Error bars show the means and $95\%$ confidence intervals across explicand-perturbation pairs.}
    \label{fig:mnist_input_pert_l2_others}
\end{figure}

\begin{figure}[H]
    \centering
    \includegraphics[width=\textwidth]{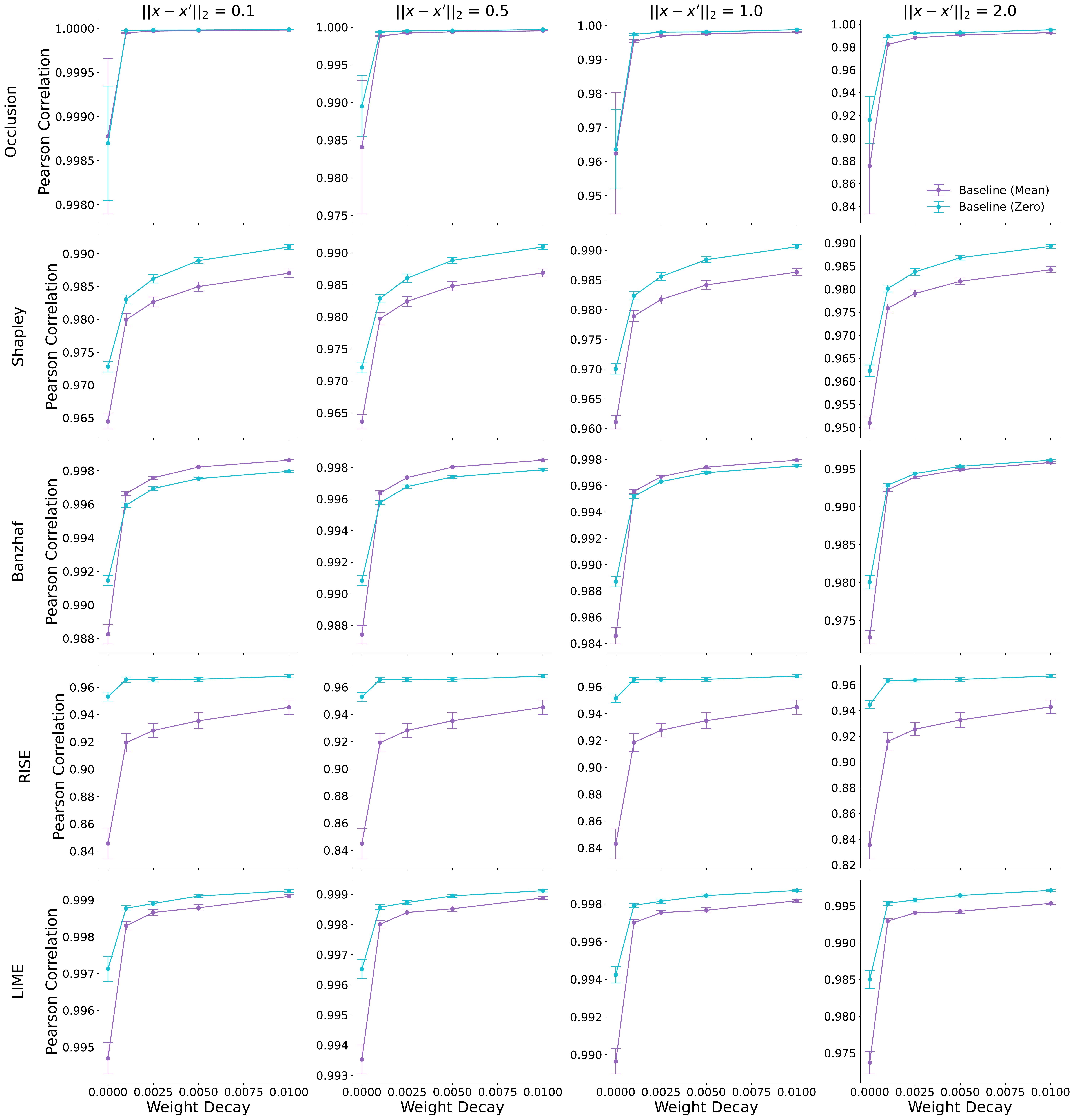}
    \caption{Pearson correlation of attributions for networks trained with increasing weight decay, under input perturbations with perturbation norms $\{0.1, 0.5, 1, 2\}$. The results include MNIST with CNNs and baseline feature removal with either training set means or zeros. Error bars show the means and $95\%$ confidence intervals across explicand-perturbation pairs.}
    \label{fig:mnist_input_pert_pearson_all}
\end{figure}

\begin{figure}[H]
    \centering
    \includegraphics[width=\textwidth]{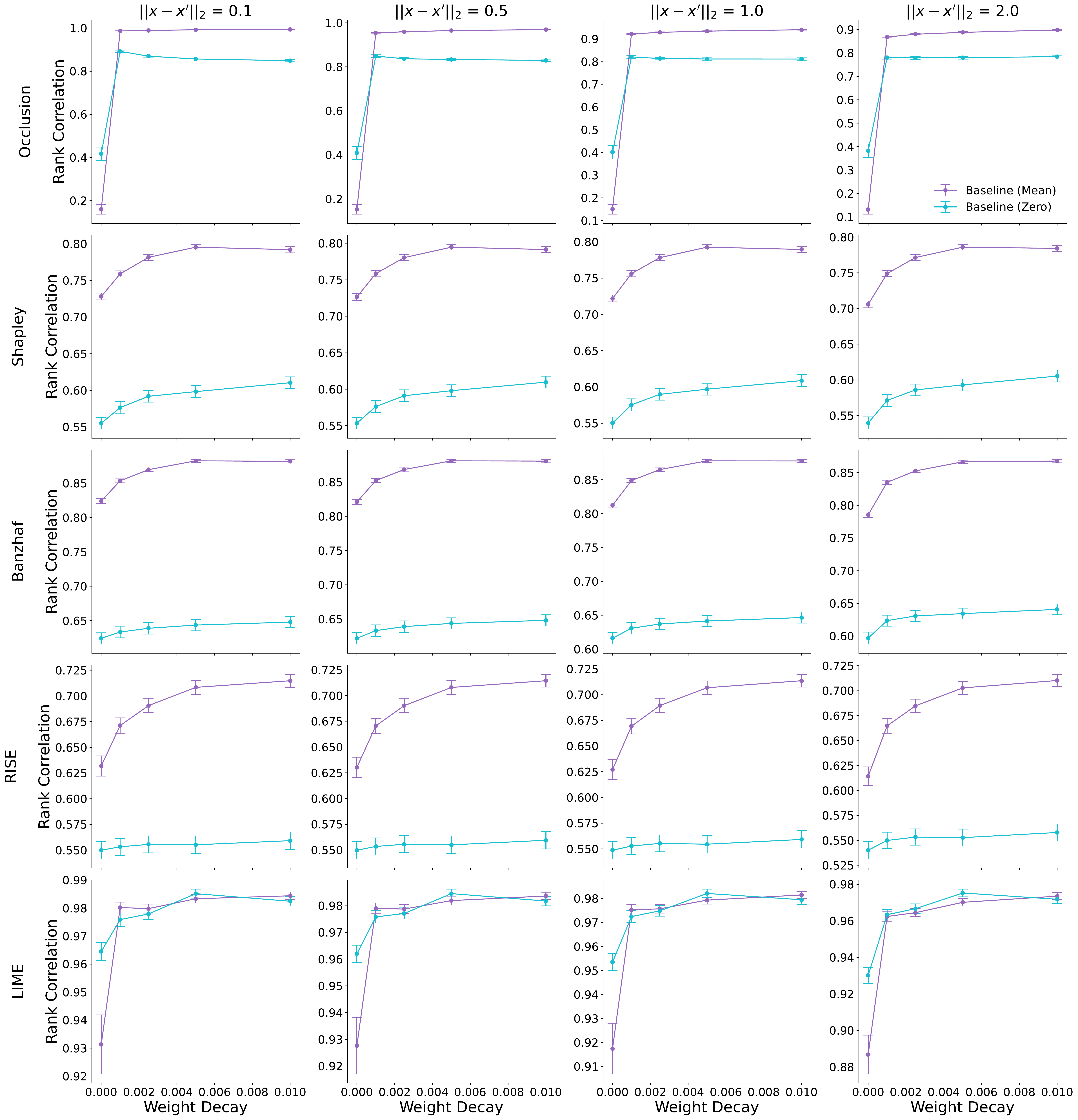}
    \caption{Spearman rank correlation of attributions for networks trained with increasing weight decay, under input perturbations with perturbation norms $\{0.1, 0.5, 1, 2\}$. The results include MNIST with CNNs and baseline feature removal with either training set means or zeros. Error bars show the means and $95\%$ confidence intervals across explicand-perturbation pairs.}
    \label{fig:mnist_input_pert_rank_all}
\end{figure}

\begin{figure}[H]
    \centering
    \includegraphics[width=\textwidth]{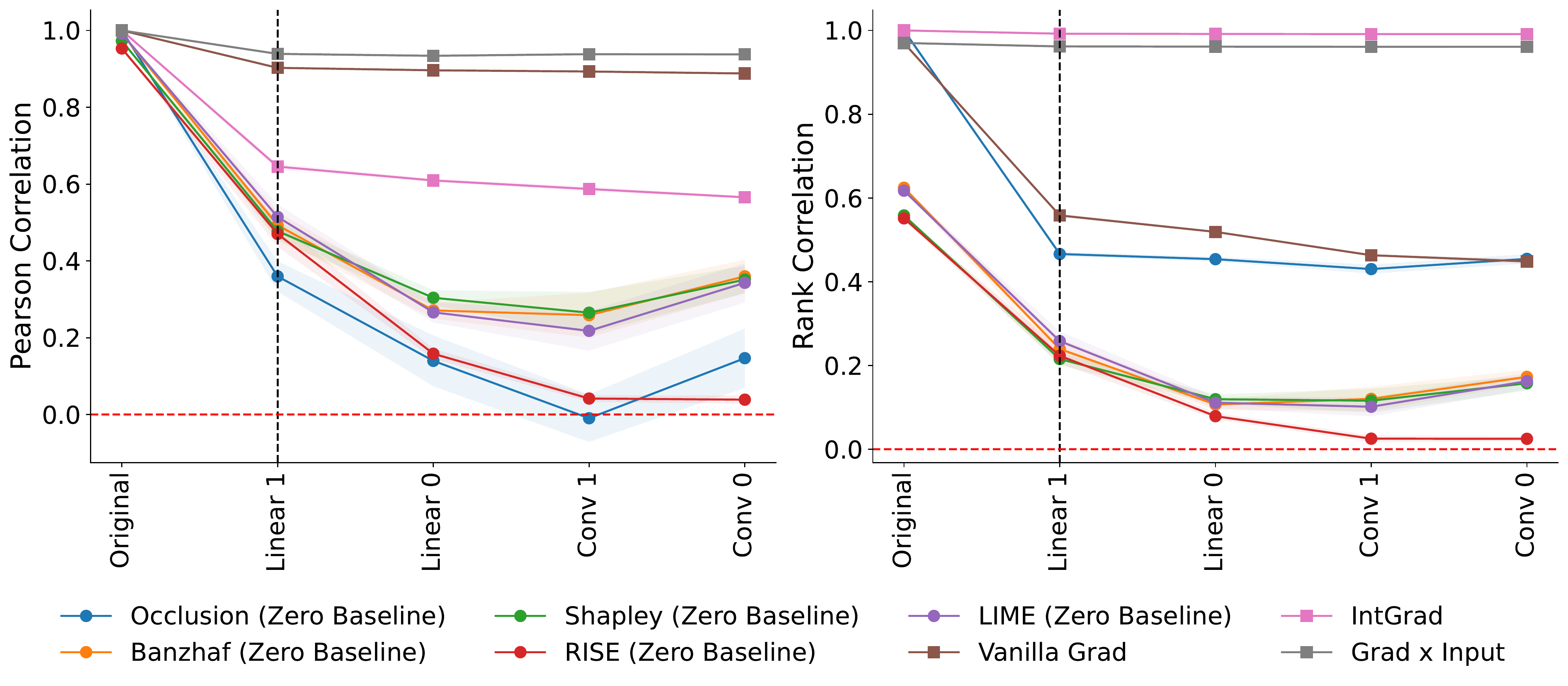}
    \caption{Sanity checks for attributions using cascading randomization for the CNN trained on MNIST. Features are removed by replacing them with zeros. Attribution similarity is measured by Pearson correlation and Spearman rank correlation. We show the means and $95\%$ confidence intervals 
    % of the similarity between original and perturbed attributions
    across $10$ random seeds.
    % runs of model randomization are shown.
    }
    \label{fig:mnist_sanity_checks_zero_baseline}
\end{figure}

\begin{figure}[H]
    \centering
    \includegraphics[width=\textwidth]{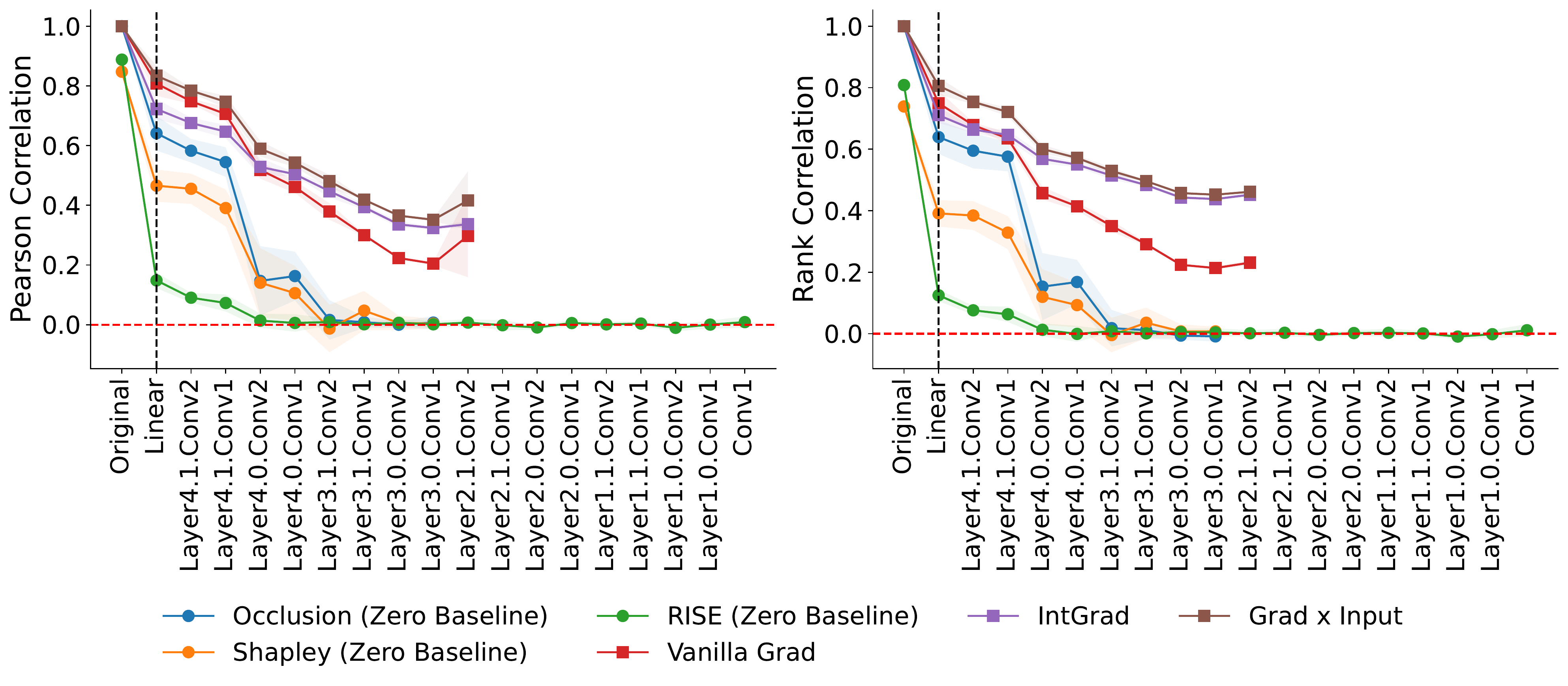}
    \caption{Sanity checks for attributions using cascading randomization for the ResNet-18 trained on CIFAR-10. Features are removed by replacing them with zeros. Attribution similarity is measured by Pearson correlation and Spearman rank correlation. We show the means and $95\%$ confidence intervals across $10$ random seeds. Missing points correspond to undefined correlations due to constant attributions under model randomization.
    }
    \label{fig:cifar10_sanity_checks_zero_baseline}
\end{figure}

\end{document}